\newcommand\bCS[1][]{\Cb_{ #1 }^*}
\newcommand\CS[2]{C_{#1,#2}^*}
\newcommand\bSigmaS[1][]{\bSigma_{ #1 }^*}
\newcommand\bSigmaH[1][]{\hat{\bSigma}_{ #1 }}
\newcommand\SigmaH[2]{\hat{\Sigma}_{ #1, #2 }}
\newcommand\bGammaH{\hat{\bGamma}}
\newcommand\GammaH[2]{\hat{\Gamma}_{#1,#2}}
\newcommand\gammaS[1]{\gamma_{#1}^*}
\newcommand\bGammaS{\bGamma^*}
\newcommand\bThetaS[1][]{\bTheta_{ #1}^*}
\newcommand\Gh[1]{\hat{G}_{#1}}
\newcommand\Gs[1]{G^*_{#1}}
\newcommand\lminf[1]{\lambda_{\min,F}\left(#1\right)}
\newcommand\lmin[1]{\lambda_{\min}\left(#1\right)}
\newcommand\lmax[1]{\lambda_{\max}\left(#1\right)}
\newcommand\psdc[1]{\cS^{#1 \times #1}_+}
\newcommand\trip[2]{\langle #1, #2 \rangle}
\newcommand\bZero{\mathbf{0}}
\newcommand\vect{\mathop{\text{vec}}}
\newcommand\dvect{\mathop{\text{dvec}}}
\newcommand\yah[1]{y_{#1}}
\newcommand\yabh[2]{y_{#1,#2}}
\newcommand\yTh{y_{T}}
\begin{document}
\date{October 18, 2018}
\title{\huge Efficient, Certifiably Optimal Clustering with Applications to Latent Variable Graphical Models}
\author{Carson Eisenach\thanks{Department of Operations Research and Financial Engineering, Princeton University, Princeton NJ 08544, USA; e-mail: {\tt eisenach@princeton.edu}}  \and Han Liu\thanks{Department of Electrical Engineering and Computer Science, Northwestern University, Evanston IL 60208, USA}}

\maketitle

\begin{abstract}
Motivated by the task of clustering either $d$ variables or $d$ points into $K$ groups, we investigate efficient algorithms to solve the Peng-Wei (P-W) $K$-means semi-definite programming (SDP) relaxation.  The P-W SDP has been shown in the literature to have good statistical properties in a variety of settings, but remains intractable to solve in practice. To this end we propose FORCE, a new algorithm to solve this SDP relaxation. Compared to the naive interior point method, our method reduces the computational complexity of solving the SDP from $\tilde{\cO}(d^7\log\epsilon^{-1})$ to $\tilde{\cO}(d^{6}K^{-2}\epsilon^{-1})$ arithmetic operations for an $\epsilon$-optimal solution. Our method combines a primal first-order method with a dual optimality certificate search, which when successful, allows for early termination of the primal method. We show for certain {\it variable clustering problems} that, with high probability, FORCE is guaranteed to find the optimal solution to the SDP relaxation and provide a certificate of exact optimality. As verified by our numerical experiments, this allows FORCE to solve the P-W SDP with dimensions in the hundreds in only tens of seconds. For a variation of the P-W SDP where $K$ is not known a priori a slight modification of FORCE reduces the computational complexity of solving this problem as well: from $\tilde{\cO}(d^7\log\epsilon^{-1})$ using a standard SDP solver to $\tilde{\cO}(d^{4}\epsilon^{-1})$.
\end{abstract}

\section{Introduction}
\label{sec:introduction}
Clustering a set of objects optimally according to some similarity measure is a central task of statistics and machine learning. These problems arise everywhere from the analysis of medical imaging data to search result groupings on Google. Such tasks can be broadly categorized as either: {\it data clustering}, where we partition of $d$ points in $\RR^p$ into $K$ clusters, or {\it variable clustering}, where we consider $n$ samples of a random variable $\bX \in \RR^d$ and group the variables into $K$ groups of size at least $m$. In many actual use cases the purpose of clustering is to recover some underlying {\it ground truth}, a partition $\cG^* = \{\Gs{1},\dots,\Gs{K}\}$; the optimization objective and similarity measure are chosen such that the optimal partitioning corresponds to the ground-truth.

For data clustering, one classical formulation is $K$-means:
\begin{equation}
\label{eqn:kmeans_nphard}
\argmin_{G} \sum_{s=1}^K \sum_{i \in G_s} ||x_i - \mu_{s}||_2^2, \text{ subject to } \mu_s = \frac{1}{|G_s|} \sum_{i \in G_s} x_i,
\end{equation}
This formulation, roughly speaking, can also be applied to variable clustering by treating $\Cov(\bX)$ as a measure of ``distances'' between $d$ points \citep{Bunea2016}. Because \eqref{eqn:kmeans_nphard}, and combinatorial optimization in general, is NP-hard \citep{Dasgupta2008,Mahajan2012}, fast algorithms that have been proposed to solve clustering problems are not guaranteed to produce an optimal solution to the original problem \citep{Lloyd1982,Defays1977,Kumar2010,Arthur2007,Peng2007}.

This becomes a major issue in certain scenarios, like post-selection inference, where first a statistical model is selected, e.g. through variable clustering, and then an inferential procedure is applied. Nearly optimal clusterings are insufficient for this purpose because incorrect model selection will invalidate the results of subsequent inferences; for such applications recovery of the optimal clustering is required. Applications where variable clustering and statistical inference questions arise include the analysis of stock pricing, fMRI, and gene expression data.

One particularly interesting class of algorithms leverage a {\it convex relaxation} to find an approximate solution, followed by a rounding step \citep{Vazirani2001}. Though this may not always give an optimal solution to the original problem, significant progress has been made on understanding when such relaxations are {\it tight} -- that the optimal solution to the relaxed and original problems coincide \citep{Awasthi2014,Peng2007,Bunea2016,Iguchi2015}. Motivated by recent developments in cluster based graphical models, in particular the $G$-Latent model (see Section~\ref{sec:preliminaries}) where each cluster of variables corresponds to a latent generator \citep{Bunea2016,Bunea2018,Bunea2017}, we study efficient algorithms for exact cluster recovery.

\citet{Bunea2016} show that the Peng-Wei (P-W) SDP relaxation (see Section~\ref{sec:preliminaries}) of \eqref{eqn:kmeans_nphard} is tight with high probability for $G$-Latent models and introduce a procedure to recover $\cG^*$ based on solving this SDP. Similarly recent work \citep{Awasthi2014,Bandeira2015} has studied when convex relaxations are tight in the data clustering setting. In this setting it is again the P-W SDP which has the strongest statistical guarantees \cite{Ames2014,Awasthi2014,Iguchi2015a,Iguchi2015}.

Despite the attractive theoretical properties of the P-W SDP for a variety of clustering problems, efficiently solving it in practice remains a significant challenge: standard SDP solvers have worst-case $\tilde{\cO}(d^7\log\varepsilon^{-1})$ running time due to a large number of constraints. In this paper we introduce FORCE ({\bf F}irst-{\bf OR}der {\bf CE}rtifiably Optimal Clustering), an algorithm to solve the P-W SDP. The difficulty in solving NP-hard problems, such as $K$-means, derives from the integer structure of their solutions. The underlying insight is that for clustering problems, when we expect the convex relaxation to be tight, the integer structure of the optimal solution can actually be leveraged to {\it help solve} the clustering problem. The FORCE algorithm consists of two components: a first-order method to solve the P-W SDP and a dual solution construction used to certify the optimality of a primal solution. The idea is that if we have an algorithm to quickly construct a dual solution at $G^*$ and an interior point method to solve an SDP relaxation $\cP$, then while solving $\cP$ we can periodically ``round'' the current iterate and search for a matching dual solution. If the primal and dual objective values match, the algorithm can terminate early.

We summarize our main contributions below:
\begin{enumerate}
\item {\bf FORCE Primal Step and Convergence Analysis:} A first-order algorithm for the P-W SDP based on a variant of Renegar's Smoothed Scheme (RSS) \citep{Renegar2014}. By converting the SDP to an eigenvalue maximization problem, we obtain a substantially improved convergence rate because we can reduce the {\it effective dimension} of the problem from $\cO(d^2)$ to $\cO(d)$. This allows us to reduce the number of arithmetic operations required to approximately solve the P-W SDP from $\tilde{\cO}(d^7\log\varepsilon^{-1})$ to $\tilde{\cO}(d^{6}K^{-2}\epsilon^{-1})$.\footnote{Note that $\epsilon$ corresponds to a type of relative additive error where as $\varepsilon$ corresponds to additive error.}
\item {\bf Dual Certificate and Probabilistic Guarantees for Variable Clustering:} We introduce a novel dual certificate for the P-W SDP that is tailored to variable clustering and easy to compute. 
We show that for clustering in $G$-Latent models, this certificate is guaranteed to exist with high probability (w.h.p.) at a nearly the minimax optimal cluster separation rate required for recovery of $G^*$.
\item {\bf Extensions to Unknown $K$:} We extend FORCE to a P-W SDP variant recently considered for variable clustering when $K$ is not known \citep{Bunea2016}. Theoretical guarantees translate almost 1-to-1 from the case when $K$ is fixed, except now the FORCE primal step requires $\tilde{\cO}(d^{4}\epsilon^{-1})$ arithmetic operations to obtain an $\epsilon$-approximate solution.
\end{enumerate}

\begin{remark}
We make no claims as to the statistical properties of the dual certificate for other generative models for the clustering data -- e.g. for the stochastic block model or stochastic ball model. In general, the design of an appropriate dual certificate is closely linked to the data generating distribution. In any case the primal step is still applicable -- to the best of our knowledge our proposed method is the most efficient algorithm to date for solving the P-W SDP -- and in practice the dual certificate may be useful even if it is not guaranteed to exist w.h.p., but this is beyond the scope of our work.
\end{remark}

\begin{remark}
Our theoretical analysis of the statistical properties of the proposed dual certificate also provides an alternative proof of the tightness of the P-W SDP for variable clustering in $G$-Latent models, at nearly the same cluster separation rate as in the literature \citep{Bunea2016}. This proof differs from \citet{Bunea2016} in that it is more constructive in nature since it analyzes the properties of an explicit dual solution construction. It also shows that instances are perfectly recoverable using and can be proven optimal for the P-W SDP at nearly the same cluster separation rate. 
\end{remark}

\paragraph{Notation.}
Denote either a clustering of data points or a partition of variables by $G = \{G_1,\dots,G_K\}$ where $G_i$ is a single cluster or variable group. Hats, i.e. $\hat G$, always indicate quantities estimated from data and stars, i.e. $G^*$, always denote ground truths. For a $n \times n$ matrix $\Mb$, $||\Mb||_2$ denotes the largest eigenvalue of $\Mb$ and $||\Mb||_{\infty}$ is the matrix $\ell_{\infty}$ norm. $||\Mb||_{\max} = \max_{i,j}|M_{i,j}|$ and $||\Mb||_{\min} = \min_{i,j}M_{i,j}$. Let $S$ and $S'$ be subsets of $[n]$. Then $\Mb_{S,S'}$ refers to the sub-matrix of $M$ with entries whose row index is in $S$ and column index is in $S'$. The notation $\tilde\cO$ is used to suppress poly-log factors of the dimension $d$. The function $\lambda(\Mb)$ maps a matrix  $\Mb$  to the set of its eigenvalues. Similarly $\lambda_{\min}(\Mb)$ and $\lambda_{\max}(\Mb)$ map $\Mb$ to its minimum and maximum eigenvalue, respectively. We define $\dvect(\Mb):=\diag(\vect(\Mb))$, mapping a matrix $\Mb$ to a diagonal matrix with the vectorized matrix $\Mb$ on the main diagonal.

\section{Preliminaries}

\subsection{Background}
\label{sec:preliminaries}
\paragraph{Peng-Wei SDP.}
The Peng-Wei SDP \citep{Peng2007} is defined as
\begin{equation}
\underset{\Ub}{\text{maximize}} \trip{- \Db}{\Ub}  ~~\text{s.t.}~~ \Ub \in \cC := \{ \Ub : \Ub \geq 0; \Ub \bone = \bone; \tr(\Ub) = K; \Ub \succeq 0\}.
\label{eqn:kmeans_sdp}
\end{equation}
For the data clustering problem, $\Db$ is defined by $D_{i,j} = ||x_i - x_j||_2^2$. A solution is called ``integer'' if $U_{ij} = \frac{1}{|G_a|}$ if $i,j \in G_a$ and 0 otherwise, and it is said to correspond to the partition $G$. This is also called the ``partnership matrix'' of the clustering solution $G$, which we denote by $B(G)$. It can be shown that the dual SDP to \eqref{eqn:kmeans_sdp} is
\begin{equation}
\begin{aligned}
& \underset{y_{a,b},y_a,y_T}{\text{minimize}} & & 2\sum_{a=1}^d y_a + Ky_T \\
& \text{subject to} & & \sum_{a=1}^d y_a \Rb_a + y_T \Ib \succeq - \Db + \sum_{a \leq b} y_{a,b} \Ib_{a,b} \\
&&& y_{a,b} \geq 0 \text{ for all } a \leq b,
\end{aligned}
\label{eqn:kmeans_dual}
\end{equation}
where the matrices $\Ib_{a,b}$ and $\Rb_a$ are defined by $\Ib_{a b} = \frac{1}{2}\left(\be_a \be_b^T + \be_b \be_a^T \right)$ for all $a < b$, $\Ib_{a a} = \frac{1}{2} \be_a \be_a^T$, and $\Rb_{a} = \bone \be_a^T + \be_a^T \bone$.

\paragraph{Variable Clustering in G-Latent Models.}
The $G$-Latent model assumes the observed variables $\bX =(X_1, \ldots, X_d) \in \RR^d$ can be partitioned into $K$ unknown clusters $G^* = \{\Gs{1}, \ldots, \Gs{K}\}$ such that variables in the same cluster share similar behavior. We denote $m := \min_{i}|\Gs{i}|$ and assume that $m\geq 3$. Further we also assume there exists a latent mean-zero random vector $\bZ \in \RR^K$ with covariance matrix $\Cov(\bZ) = \bCS$, such that $\bX=\Ab \bZ+\bE$, for a zero mean error vector $\bE$ with independent entries. The $d \times K$ assignment matrix $\Ab$ is defined as  $A_{jk}=\II\{j\in G^*_{k}\}$. We denote $\Cov(\bE)=\bGammaS$, a diagonal matrix with entries $\Gamma^*_{jj} = \gammaS{j}$ for any $1\leq j\leq d$.  We also assume that the noise $\bE$ is independent of $\bZ$. We assume that $\bZ\sim \cN(0,\bCS)$ and $\bE \sim \cN(0,\bGammaS)$, which implies $\bX \sim \cN(0,\bSigmaS)$ with $\bSigmaS=\Ab\bCS\Ab^T+\bGammaS$. To be able to recover clusters, the latent variables cannot be too highly correlated, and we can define a distance between components of $\bZ$ as
\[
\Delta(\bCS)=: \min_{j < k} \EE(Z_j - Z_k)^2  > 0.
\]

To recover the true group partition $G^*$, \citet{Bunea2016} propose using \eqref{eqn:kmeans_sdp} with $\Db = \bGammaH - \bSigmaH$, a penalized covariance matrix estimator (we refer to this as the PECOK estimator). Because a priori the group structure is unknown, an estimator $\hat\bGamma$ of $\bGammaS$ is somewhat involved so we omit the details here. For our purposes, we are only concerned with its rate of convergence in the max-norm. \citet{Bunea2016} show that if $\Xb_i,\dots,\Xb_n$ are generated from a G-Latent Model, there exist constants $p_0-p_2$ such that if  $\log d \leq p_0 n$, then with probability at least $1-p_2/d^3$,
\begin{equation}
\label{eqn:conc_pecok_gamma}
||\hat\bGamma - \bGamma^*||_{\infty} \leq p_1||\bGamma^*||_{\infty}\sqrt{\log d/n} =: \delta_{n,d}.
\end{equation}
Furthermore, if 
\[
\Delta(\bC^*) \gtrsim \|\bGamma^*\|_{\infty}\left(\sqrt{\frac{\log d}{nm}} + \sqrt{\frac{\log d}{nm^2}} +\frac{d}{nm} + \frac{\log d}{n}\right) + \frac{\delta_{n,d}}{m},
\]
then with probability at least $1-p_3/d$ the optimizer to \eqref{eqn:kmeans_sdp} is $\Xb^* = B(G^*)$ for some constants $p_0$, $p_3$. This bound on $\Delta(\bCS)$ is shown to be minimax optimal.

\subsection{Related Work}

\paragraph{Solving the SDP.}
An obvious approach is to simply solve the P-W SDP relaxation using the standard second-order convex optimization methods (see \citet{Boyd2004} for some examples). One well known approach to quickly solving certain SDPs is the matrix multiplicative weights (MMW) algorithm \citep{Arora2005}. For the P-W SDP, the MMW algorithm requires $\tilde{\cO}(K^2d^2\alpha^{-2}\epsilon^{-2})$ arithmetic operations to find an $\epsilon$-optimal\footnote{Here $\epsilon$ is a multiplicative error} solution and where $\alpha$ is related to a lower bound on the optimal value of a rescaled version of the SDP. Typically, we have $\alpha = \cO(d^{-1})$ giving a computational complexity of $\tilde{\cO}(K^2d^4\epsilon^{-2})$.\footnote{We did implement a MMW algorithm for P-W SDP, but found it unable to converge in practice; we suspect this is due to the presence $d^2$ equality constraints since at each iteration of MMW these are not satisfied, but we did not investigate this further.}

Another possibility is to solve the SDP using the Alternating Direction Method of Multipliers (ADMM) \citep{Boyd2011}. Recent work \citep{Ames2014} 
takes this approach for a related SDP relaxation applied to the {\it bi-clustering problem}, but the focus there is on the statistical proeprties of the SDP relaxation not on deriving an algorithm with convergence guarantees. Like our approach, ADMM requires $\cO(d^3)$ arithmetic operations per update, but there is no gaurantee on its convergence rate. Instead, in this paper we convert the SDP into an equivalent eigenvalue maximization problem using a technique due to \citet{Renegar2014}, which allows us to achieve better worst-case runtime bounds than existing methods. This is described in more detail in the next section.

\paragraph{Optimality Certificates for Data Clustering.}
In proving the tightness of \eqref{eqn:kmeans_sdp} it is standard to derive an empirically testable condition on an instance of the clustering problem \citep{Awasthi2014,Iguchi2015a,Iguchi2015}.  To do this, recent work on convex relaxations of $K$-means for data clustering takes a {\it dual optimality certificate} approach \citep{Awasthi2014,Iguchi2015a,Iguchi2015}. In general the dual optimality certificate approach is: (a) find an appropriate convex relaxation (denoted $\cP$) and its dual (denoted $\cD$) of \eqref{eqn:kmeans_nphard}, (b) given a candidate solution to $\cP$ construct a solution to $\cD$ with matching objective value, (c) derive a deterministic condition that can be checked on an instance of $\cP$ and proposed solution to $\cP$ that is sufficient for the construction in (b) to exist. The deterministic condition found in step (c) can then be analyzed to find the necessary assumptions on the data generating distribution to give the following guarantee: {\it with high probability a random instance of $\cP$ will satisfy the condition at the optimal solution $G^*$ to $\cP$.} To use the condition from step (c), all that remains is a way to ``quickly'' find optimal solutions to $\cP$ and then test the condition at the proposed optimal solution.

The dual solutions used in \citet{Awasthi2014,Iguchi2015} differ from each other mainly in their choice of assignment to $y_{a,b}$ (likewise for our proposed certificate). The choice of $y_{a,b}$ in turn determines what testable condition one can derive and then leverage to prove tightness results and certify optimal clusterings. Unfortunately, \citet{Iguchi2015} only offer a fast algorithm for the $K=2$ case, and their method cannot be directly applied to variable clustering since it operates directly on the data points to be clustered, not merely the matrix $\Db$. These certificates (and ours) benefit from Lemma \ref{lem:dual_sdp_sol} characterizing solutions to \eqref{eqn:kmeans_dual}.

\begin{lemma}[Theorem 4 \citep{Iguchi2015a}]
\label{lem:dual_sdp_sol}
The following are equivalent: (a) $\Bb^*$ is an optimal solution to \eqref{eqn:kmeans_sdp}, (b) every solution to \eqref{eqn:kmeans_dual} satisfies $y_{a,b} = 0$  for $a,b \in \Gs{i}$ and $\Qb_{\Gs{i},\Gs{i}}\bone = 0$  for all $i$, and (c) every solution to \eqref{eqn:kmeans_dual} satisfies $\yb_{\Gs{i}} = \Lb^{-1}_{\Gs{i},\Gs{i}}(-\Db_{\Gs{i},\Gs{i}}\bone - y_T\bone)$. $\Lb$ is a block-diagonal matrix determined by $G^*$, where the diagonal blocks are defined as $\Lb_{\Gs{i},\Gs{i}} = |\Gs{i}|\Ib + \bone\bone^T$ and the off-diagonal blocks are zero.
\end{lemma}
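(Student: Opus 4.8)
The plan is to prove the three equivalences by strong duality together with complementary slackness, followed by a short linear-algebra computation on the block structure of the partnership matrix. First I would verify that \eqref{eqn:kmeans_sdp} admits a strictly feasible point when $1 < K < d$ (for instance $\Ub = \frac{K}{d}\Ib + \frac{d-K}{d(d-1)}(\bone\bone^T - \Ib)$, which is positive definite, entrywise positive, and satisfies the two equality constraints), so that Slater's condition holds, strong duality obtains, and a primal/dual feasible pair is optimal exactly when complementary slackness holds between them. I would also record two elementary facts: the partnership matrix $\Bb^* := B(G^*)$ is always primal feasible (grouping the clusters, it is block diagonal with $\Bb^*_{\Gs{i},\Gs{i}} = |\Gs{i}|^{-1}\bone\bone^T$ and zero off-diagonal blocks, so $\Bb^*\bone = \bone$, $\tr(\Bb^*) = K$, $\Bb^* \succeq 0$, $\Bb^* \geq 0$), and $\Bb^*_{a,b} > 0$ precisely when $a,b$ lie in the same cluster. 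Throughout, write $\Qb := \sum_a y_a \Rb_a + y_T\Ib + \Db - \sum_{a \le b} y_{a,b}\Ib_{a,b} \succeq 0$ for the dual slack matrix.

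For (a) $\Leftrightarrow$ (b) I would invoke the standard fact that complementary slackness holds between \emph{every} primal-optimal and \emph{every} dual-optimal point. If $\Bb^*$ is optimal, then any dual solution satisfies $y_{a,b}\Bb^*_{a,b} = 0$ for all $a\le b$ and $\trip{\Qb}{\Bb^*} = 0$; the first identity forces $y_{a,b} = 0$ whenever $a,b \in \Gs{i}$ (since $\Bb^*_{a,b} > 0$ there), and the second, as $\Qb,\Bb^* \succeq 0$ with $\tr(\Qb\Bb^*) = 0$, gives $\Qb\Bb^* = 0$; writing $\Bb^* = \sum_i |\Gs{i}|^{-1}\bone_{\Gs{i}}\bone_{\Gs{i}}^T$ with $\bone_{\Gs{i}}\in\RR^d$ the indicator of $\Gs{i}$ (supported on disjoint sets) then yields $\Qb\,\bone_{\Gs{i}} = 0$ for each $i$, hence $\Qb_{\Gs{i},\Gs{i}}\bone = 0$. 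Conversely, the dual is feasible and bounded below, so it attains its optimum; if some dual optimal point satisfies the conditions in (b), then $\Bb^*$ paired with it meets all the KKT conditions --- primal and dual feasibility, $y_{a,b}\Bb^*_{a,b} = 0$ (using $\Bb^*_{a,b} = 0$ for $a,b$ in different clusters), and $\trip{\Qb}{\Bb^*} = \sum_i |\Gs{i}|^{-1}\bone^T\Qb_{\Gs{i},\Gs{i}}\bone = 0$ --- so $\Bb^*$ is optimal.

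For (b) $\Leftrightarrow$ (c) I would write out the $(\Gs{i},\Gs{i})$ block of $\Qb$. Only the $\Rb_a$ with $a \in \Gs{i}$ act nontrivially on this block, and their weighted sum is $\bone\,\yb_{\Gs{i}}^T + \yb_{\Gs{i}}\bone^T$; the $\Ib_{a,b}$ with $a,b \in \Gs{i}$ drop out because $y_{a,b} = 0$ there (an identity common to both (b) and (c)); so $\Qb_{\Gs{i},\Gs{i}} = \bone\,\yb_{\Gs{i}}^T + \yb_{\Gs{i}}\bone^T + y_T\Ib + \Db_{\Gs{i},\Gs{i}}$. Hence $\Qb_{\Gs{i},\Gs{i}}\bone = 0$ is equivalent to $(|\Gs{i}|\Ib + \bone\bone^T)\yb_{\Gs{i}} = -\Db_{\Gs{i},\Gs{i}}\bone - y_T\bone$, i.e.\ to $\Lb_{\Gs{i},\Gs{i}}\yb_{\Gs{i}} = -\Db_{\Gs{i},\Gs{i}}\bone - y_T\bone$, and since $\Lb_{\Gs{i},\Gs{i}} = |\Gs{i}|\Ib + \bone\bone^T$ is positive definite (with inverse available explicitly via Sherman--Morrison), this is in turn equivalent to the formula in (c). If the more explicit closed form for $\yb_{\Gs{i}}$ is wanted, it is convenient to left-multiply the block equation by $\bone^T$, solve for $\bone^T\yb_{\Gs{i}}$ in terms of $\bone^T\Db_{\Gs{i},\Gs{i}}\bone$, and substitute back.

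The hard part is not conceptual but bookkeeping: confirming that Slater's condition --- and with it the complementary-slackness characterization --- genuinely applies to this SDP with its $d^2$ equality constraints and entrywise inequalities, identifying $\Qb$ with the correct signs, and cleanly passing from $\trip{\Qb}{\Bb^*} = 0$ to the blockwise identities $\Qb\,\bone_{\Gs{i}} = 0$ via the disjoint-support structure of $\Bb^*$. Everything after that --- expanding $\Qb_{\Gs{i},\Gs{i}}$ using the explicit forms of $\Rb_a$ and $\Ib_{a,b}$, and inverting $\Lb_{\Gs{i},\Gs{i}}$ --- is routine.
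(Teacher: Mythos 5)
The paper does not prove this lemma; it is stated and cited as Theorem~4 of \citet{Iguchi2015a}, so there is no internal proof to compare against. Evaluating your proposal on its own terms: the strictly feasible point you exhibit is correct (collecting terms, it is precisely the $\Fb_{d,K}$ of Lemma~\ref{lem:strictly_feasible_existence}), the primal feasible set is compact so the optimum is attained, and your (a)~$\Leftrightarrow$~(b) argument via complementary slackness and the implication $\langle\Qb,\Bb^*\rangle=0 \Rightarrow \Qb\Bb^*=0$ for PSD factors is sound. The block computation of $\Qb_{\Gs{i},\Gs{i}}$ is also correct once one knows $y_{a,b}=0$ on each diagonal block.

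The genuine gap is in the direction (c)~$\Rightarrow$~(b). You call $y_{a,b}=0$ for $a,b\in\Gs{i}$ ``an identity common to both (b) and (c),'' but statement (c) as written asserts only the closed form $\yb_{\Gs{i}} = \Lb^{-1}_{\Gs{i},\Gs{i}}(-\Db_{\Gs{i},\Gs{i}}\bone - y_T\bone)$ and places no constraint whatsoever on the $y_{a,b}$. Without that extra hypothesis the $(\Gs{i},\Gs{i})$ block of $\Qb$ carries an additional $-\sum_{a\le b,\,a,b\in\Gs{i}} y_{a,b}\Ib_{a,b}$ term, so $\Qb_{\Gs{i},\Gs{i}}\bone=0$ does not reduce to $\Lb_{\Gs{i},\Gs{i}}\yb_{\Gs{i}}=-\Db_{\Gs{i},\Gs{i}}\bone-y_T\bone$, and your block computation proves only (b)~$\Rightarrow$~(c), not its converse. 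The clean fix is to close the cycle through (a): using $\bone^T\Lb^{-1}_{\Gs{i},\Gs{i}}=\tfrac{1}{2|\Gs{i}|}\bone^T$ (Lemma~\ref{lem:L_matrix}), condition (c) forces the dual objective of any dual optimum to equal $2\sum_a y_a + Ky_T = -\sum_i|\Gs{i}|^{-1}\bone^T\Db_{\Gs{i},\Gs{i}}\bone = \langle -\Db,\Bb^*\rangle$; since your Slater point gives strong duality, this equals the primal optimum, and since $\Bb^*$ is primal feasible with exactly this objective value, $\Bb^*$ is optimal, i.e.\ (c)~$\Rightarrow$~(a). Then (a)~$\Rightarrow$~(b)~$\Rightarrow$~(c)~$\Rightarrow$~(a) establishes the three-way equivalence.
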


\paragraph{Other Clustering Approaches.}
Spectral clustering \citep{Kumar2010,Awasthi2012} is another approach, but these methods are tailored towards data clustering and are provably suboptimal \citep{Bunea2016} in terms of exact recovery in variable clustering. Heuristic approaches such as Lloyd's Algorithm \citep{Lloyd1982} and CLINK \citep{Defays1977} are fast, but in general do not find global optima.

\paragraph{Comparison To Stochastic Block Model.}
Variable clustering of data generated by the stochastic block model (SBM) has been heavily studied in recent years using the P-W SDP (and other related SDPs). In SBM, one wants to recover the true partition of $d$ nodes using an observed $d\times d$ adjacency matrix where each entry is modeled as an independent Bernoulli random variable. Similar recovery guarantees to those described for the $G$-Latent model exist for SBM and use similar proof techniques \citep{Abbe2016,Ames2014,Pirinen2016}. An effective algorithm for solving the P-W SDP could therefore also benefit clustering in this regime as well.

\section{The FORCE Algorithm}
\label{sec:FORCE_all}
In this section we first present the primal step, followed by the dual certificate and then a convergence guarantee for the P-W SDP on any instance $\cD$.

\subsection{Primal Step}
\label{sec:FORCE_primal}
Because we consider clustering in the high-dimensional setting, a fast algorithm to solve \eqref{eqn:kmeans_sdp} is critical. While second-order methods have an appealing iteration complexity, the per iteration cost is prohibitive for \eqref{eqn:kmeans_sdp} because the cost of each iteration depends not only on the dimension $d$ {\it but also on the number of constraints} -- in \eqref{eqn:kmeans_sdp}, this is $\cO(d^2)$. First-order methods, by contrast, may have a higher iteration complexity, but a lower per-iteration cost.

\paragraph{Algorithmic Framework}
Informally, RSS \citep{Renegar2014} can be described as Nesterov's accelerated gradient method \citep{Nesterov2004} and smoothing \citep{Nesterov2005,Nesterov2007} applied to an eigenvalue maximization problem that is closely linked to the SDP of interest. Specifically, consider an SDP in standard form
\begin{align*}
&\underset{\Ub}{\text{minimize}}~ \trip{\Db}{\Ub}\\
&\text{s.t.}~~ \Ub \in \cC := \{ \Ub : \trip{\Ab_i}{\Ub} = b_i \text{ for } i=1,\dots,p;~\Ub \succeq 0\}, \numberthis \label{eqn:sdp_form}
&\end{align*}
where $\Ab_i \in \cS^{n \times n}$, $\Db \in \cS^{n \times n}$ and $b_i \in \RR$; denote the optimal value of \eqref{eqn:sdp_form} by $u^*$. To apply RSS, we must specify as input any strictly feasible solution $\Fb$ to \eqref{eqn:sdp_form}.\footnote{Actually \citet{Renegar2014} works in the setting $\Fb=\Ib$; what we present here is a slightly modified version and later we use the results of the corresponding, adjusted theoretical analysis}
Given $\Fb$, a projection can be defined from $\Fb$ onto the border of the positive semi-definite cone by $P_{\Fb}(\Ub) = \Fb + \frac{1}{1 - \lminf{\Ub}} \left(\Ub - \Fb\right)$, where $\lminf{\Ub} = \lmin{\Fb^{-1/2}\Ub\Fb^{-1/2}}$.
$P_{\Fb}(\Ub)$ lies at the intersection of the line segment between $\Fb$ and $\Ub$ and the positive semi-definite cone. Clearly if $\Ub \in \psdc{n}$ then $P_{\Fb}(\Ub) \in \psdc{n}$. Now, let $u_0 \in \RR$ satisfying $u_0 < \langle \Db,\Fb \rangle$. \citet[Theorem 2.2]{Renegar2014} shows that if $\Vb^*$ is a global optimum for
\begin{align*}
&\underset{\Vb}{\text{maximize}}~\lminf{\Vb}\\
&\text{s.t.}~~ \Vb \in \cC_{\lambda}:= \{\Vb : \trip{\Ab_i}{\Vb} = b_i \text{ for } i=1,\dots,p;~\trip{\Db}{\Vb} = u_0 \} \numberthis \label{eqn:lmin_form}
\end{align*}
then $P_{\Fb}(\Vb^*)$ is optimal for \eqref{eqn:sdp_form}. In addition, if $\Ub^*$ is optimal for \eqref{eqn:sdp_form}, then
$\Vb^* = \Fb + \frac{\trip{\Db}{\Fb} - u_0 }{\trip{\Db}{\Fb} - u^*}\left(\Ub^* - \Fb \right)$ is optimal for \eqref{eqn:lmin_form}. To obtain faster convergence, Nesterov's smoothing technique can be applied and the objective function in \eqref{eqn:lmin_form} can be replaced by
\begin{equation}
f_{\mu,\Fb}(\Vb) = - \mu \log \sum_j \exp\left(-\lambda_{j}(\Fb^{-1/2} \Vb \Fb^{-1/2}) / \mu \right),
\end{equation}
giving the smoothed problem
\begin{align*}
&\underset{\Vb}{\text{maximize}}~f_{\mu,\Fb}(\Vb)\\
&\text{s.t.}~~ \Vb \in \cC_{\lambda}:= \{\Vb : \trip{\Ab_i}{\Vb} = b_i \text{ for } i\in[p];~\trip{\Db}{\Vb} = u_0 \}. \numberthis \label{eqn:lmin_form_smoothed}
\end{align*}
RSS internally applies Nesterov's accelerated projected gradient descent algorithm \citep{Bubeck2015} to \eqref{eqn:lmin_form_smoothed} several times through careful selection of initial iterates and after at most
\begin{equation}
\label{eqn:rss_convergence_rate_T}
T \leq 2R||\Fb^{-1}||_2^2\sqrt{\log d}  \left(\frac{1}{\epsilon} + \log_{5/4}\left(\frac{\trip{\Db}{\Fb} - u^*}{\trip{\Db}{\Fb} - u_0} \right)\right),
\end{equation}
updates, the matrix $\Ub_T$ output by RSS satisfies
\begin{equation}
\label{eqn:rss_convergence_rate_errr}
\frac{\trip{\Db}{\Ub_T} - u^*}{\trip{\Db}{\Fb} - u^*} \leq \epsilon.
\end{equation}
We direct the reader to \citet[Theorem 7.2]{Renegar2014} for additional details. To summarize -- applying RSS to an SDP requires strictly feasible $\Fb$, feasible $\Ub_0$ such that $\trip{\Db}{\Ub} < \trip{\Db}{\Fb}$, efficient computation of $\nabla f_{\mu,\Fb}$ and efficient computation of $\cP_{\cC_{\lambda}^{\perp}}$, the projection of the gradient onto $\cC_{\lambda}^{\perp} = \{\Ub | \trip{\Ab_i}{\Ub} = 0, \trip{\Db}{\Ub} = 0 \}$.

\subsubsection*{Conversion to an Eigenvalue Maximization Problem}
First, we introduce the augmented variables
\begin{equation}
\Ub' = 
\left[
\begin{array}{c|c}
\Ub & \bZero \\
\hline
\bZero & \Ub_{\cC}
\end{array}
\right],~~
\Ib_{a,b}' = 
\left[
\begin{array}{c|c}
\Ib_{a,b} & \bZero \\
\hline
\bZero & \frac{-1}{2}\diag(\eb_{a,b})
\end{array}
\right],
\label{eqn:augmented_variables}
\end{equation}
where $\Ub_{\cC}$ is a $d^2 \times d^2$ diagonal matrix of slack-variables and $\eb_{a,b}$ denotes the $d^2$-dimensional vector of 0s with 1s in only the $((a-1)d+b)^{th}$ and $((b-1)d+a)^{th}$ positions. We also define the variables $\Rb'_{a}$, $\Ib'$, and $\Db'$ as $(d^2+d)\times(d^2+d)$ matrices with upper left block equal to $\Rb_a$, $\Ib$ and $\Db$, respectively, and zero elsewhere. Up to the sign of the optimal value, \eqref{eqn:kmeans_sdp} can thus be expressed as
\begin{align*}
&\underset{\Ub'}{\text{minimize}}~ \trip{\Db' }{\Ub'}, \\
&\text{s.t.}~~ \Ub' \in \cC := \left\{\begin{array}{l l l}
\multirow{2}{*}{$\Ub':$} & \trip{\Ib'_{ab}}{\Ub'} = 0 \text{ for } a \leq b;&\trip{\Rb'_a}{\Ub'} = 2 \text{ for all } a;\\
&\trip{\Ib'}{ \Ub'} = K;&\Ub' \succeq 0.
\end{array}\right\} \numberthis  \label{eqn:kmeans_sdp_sf} 
\end{align*}
Given a strictly feasible solution $\Fb$ and $\Ub_0$ such that $\trip{- \Db}{\Fb} < \trip{- \Db}{\Ub_0} = - u_0$ to \eqref{eqn:kmeans_sdp}, we construct the pair
\[
\Fb' = 
\left[
\begin{array}{c|c}
\Fb & \bZero \\
\hline
\bZero & \dvect(\Fb)
\end{array}
\right],
\Ub_0' = 
\left[
\begin{array}{c|c}
\Ub_0 & \bZero \\
\hline
\bZero & \diag(\vect(\Ub_0))
\end{array}
\right]
\]
necessary to apply RSS to \eqref{eqn:kmeans_sdp_sf}. Finally, turning \eqref{eqn:kmeans_sdp_sf} into an equivalent eigenvalue maximization problem and applying Nesterov's smoothing gives
\begin{align*}
&\underset{\Vb'}{\text{maximize}}~~ f_{\mu,\Fb'}(\Vb'), \\
&\text{s.t.}~~ \Vb' \in \cC_{\lambda} := \left\{\begin{array}{l l l}
\multirow{2}{*}{$\Vb':$} & \trip{\Ib'_{ab}}{\Vb'} = 0 \text{ for } a \leq b;&\trip{\Rb'_a}{\Vb'} = 2 \text{ for all } a;\\
&\trip{\Ib'}{ \Vb'} = K;&\trip{\Db'}{ \Vb'} = u_0.
\end{array}\right\} \numberthis \label{eqn:kmeans_lmin_form}
\end{align*}
Importantly, we note that
\begin{equation}
\label{eqn:lmin_slackvars}
\lambda\bigl(\Fb^{'-\frac{1}{2}}\Vb' \Fb^{'-\frac{1}{2}}\bigr) = \lambda\bigl(\Fb^{-\frac{1}{2}}\Vb \Fb^{-\frac{1}{2}}\bigr)\bigcup \bigl\{X_{i,j}/F_{i,j}^{-1} \bigr\}.
\end{equation}

\subsubsection*{Constraint Set Projection}
To project onto $\cC_{\lambda}^{\perp}$, we must find the optimizer for $\cP_{\cC_{\lambda}^{\perp}}(\Ub')$. Notationally, $(U_{\cC})_{a,b}$ refers to the $((a-1)d+b)^{th}$ diagonal entry in $\Ub_{\cC}$ as it is a diagonal matrix of the slack variables. Because a projection onto a convex set has a unique minimizer, it suffices to find any point satisfying the KKT conditions. Solving for the projection gives the following system of $d+2$ equations in $d+2$ unknowns:
\begin{align*}
\sum_{b = 1}^d U_{ab} + \sum_{b = 1}^d (U_{\cC})_{ab} &= \sum_{b=1}^d y^*_b + d y^*_a + y^*_T + \left[\sum_{b = 1}^d D_{ab} \right]\lambda^* & \text{ for } a \in [d] \\
\tr(\Ub)+ \tr(\Ub_{\cC}) &= \sum_{a=1}^d y^*_a + d y^*_T + \tr(\Db)\lambda^* & \\
\tr(\Db\Ub) + \tr(\Db\Ub_{\cC}) &= 2\sum_{a=1}^d \left[\sum_{b=1}^d D_{ab} \right] y^*_a + \tr(\Db) y^*_T + \tr(\Db\Db) \lambda^*. & \numberthis \label{eqn:kmeans_projection_system}
\end{align*}
Solving \eqref{eqn:kmeans_projection_system}, we get the projected matrix 
\begin{equation}
\cP_{\cC_{\lambda}^{\perp}}(\Vb_*') = \left[
\begin{array}{c|c}
\Vb_* & \bZero \\
\hline
\bZero & \dvect(\Vb_*)
\end{array}
\right],
\Vb_* = \frac{1}{2}\left[\Ub + \Ub_{\cC} - \sum_{a=1}^d \Rb_a y_a^* - y^*_T\Ib - \lambda^*\Db \right].
\label{eqn:kmeans_projection_result}
\end{equation}

\begin{remark}
The last two sections highlight how the {\it effective} dimension of the problem is reduced by conversion to an eigenvalue maximization problem. The $d^2$ slack variables do not affect the cost of computing the projection $\cP_{\cC_\lambda^\perp}$. Likewise \eqref{eqn:lmin_slackvars} shows that the cost of evaluating $f_{\mu,\Fb'}$ is dominated by that of computing the eigenvalues of the upper $d\times d$ diagonal block.
\end{remark}

\subsubsection*{Existence of a Strictly Feasible Solution}
Unlike for the SDP's considered by \citet{Renegar2014}, $\Ib$ is not feasible for \eqref{eqn:kmeans_sdp} as  $K < d$, $\tr(\Ib) = d \neq K$. We also note that the intuitive idea to find a possibly suboptimal clustering $\hat G$ and use $\Fb=B(\hat G)$ is not possible because  strict feasibility for \eqref{eqn:kmeans_sdp} requires all $\Fb_{ij} > 0$.

Nonetheless, there are valid choices of $\Fb$. Consider matrices of the form $\Fb = a\Ib + b\bone\bone^T$, where $a, b > 0$. Such matrices clearly satisfy $\Fb_{ij} > 0$ and $\Fb \succ 0$, so all that remains is to choose $a$ and $b$ such that $\trip{\Fb}{\Ib} = K$ and $\Fb\bone = \bone$. Multiplying these expressions out, simplifying and solving the resulting system of equations gives $a = \frac{K-1}{d-1}$ and $b = \frac{d-K}{d^2-d}$. Lemma \ref{lem:strictly_feasible_existence} summarizes the properties of $\Fb$.
\begin{lemma}
\label{lem:strictly_feasible_existence}
Given $d$ and $K$, define
\[
\Fb_{d,K} := \frac{K-1}{d-1} \Ib + \frac{d-K}{d^2-d} \bone \bone^T.
\]
$\Fb_{d,K}$ is strictly feasible for \eqref{eqn:kmeans_sdp} and $||\Fb^{-1}||_2 = \frac{d-1}{K-1}$.
\end{lemma}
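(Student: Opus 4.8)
The plan is to verify the three claimed properties of $\Fb_{d,K}$ directly. First I would check strict feasibility for \eqref{eqn:kmeans_sdp}, which requires four things: $\Fb_{d,K} \geq 0$ entrywise (in fact strictly positive), $\Fb_{d,K}\bone = \bone$, $\tr(\Fb_{d,K}) = K$, and $\Fb_{d,K} \succ 0$. Since $\frac{K-1}{d-1} > 0$ and $\frac{d-K}{d^2-d} > 0$ whenever $1 < K < d$, every entry of $\Fb_{d,K}$ is strictly positive, so the entrywise constraint holds with strict inequality. For the row-sum constraint, compute $\Fb_{d,K}\bone = \frac{K-1}{d-1}\bone + \frac{d-K}{d^2-d}\cdot d \cdot \bone = \left(\frac{K-1}{d-1} + \frac{d-K}{d-1}\right)\bone = \bone$. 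For the trace, $\tr(\Fb_{d,K}) = \frac{K-1}{d-1}\cdot d + \frac{d-K}{d^2-d}\cdot d = \frac{d(K-1) + (d-K)}{d-1} = \frac{dK - K}{d-1} = \frac{K(d-1)}{d-1} = K$. These are exactly the two equations used to derive $a$ and $b$, so they reduce to routine arithmetic.

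Next I would establish positive definiteness and compute $\|\Fb_{d,K}^{-1}\|_2$. The matrix $a\Ib + b\bone\bone^T$ is a rank-one perturbation of a scaled identity, so its eigenvalues are immediate: the vector $\bone$ is an eigenvector with eigenvalue $a + bd$, and every vector orthogonal to $\bone$ is an eigenvector with eigenvalue $a$. Since $a = \frac{K-1}{d-1} > 0$ and $a + bd = 1$ (this is just the row-sum computation above restated at the eigenvalue level), all eigenvalues are strictly positive, giving $\Fb_{d,K} \succ 0$. Consequently the eigenvalues of $\Fb_{d,K}^{-1}$ are $\frac{1}{a} = \frac{d-1}{K-1}$ (with multiplicity $d-1$) and $\frac{1}{a+bd} = 1$ (with multiplicity one). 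Since $K < d$ we have $a < 1$, so $\frac{d-1}{K-1} > 1$, and therefore $\|\Fb_{d,K}^{-1}\|_2 = \lmax{\Fb_{d,K}^{-1}} = \frac{d-1}{K-1}$, as claimed.

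Honestly, there is no substantive obstacle here: the entire statement is a bookkeeping verification of the defining equations plus the standard spectral decomposition of an identity-plus-rank-one matrix. The only point requiring the slightest care is confirming which of the two eigenvalues of $\Fb_{d,K}^{-1}$ is larger — i.e., that $\frac{d-1}{K-1} \geq 1$ — which follows from the standing assumption $K \leq d$ (and the problem is only interesting when $K > 1$, so that $a > 0$ and the formula is well-defined). I would present the eigenvalue computation as the main content, since it is what feeds into the convergence rate bound \eqref{eqn:rss_convergence_rate_T} via the factor $\|\Fb^{-1}\|_2^2$.
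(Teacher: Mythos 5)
Your proof is correct and follows essentially the same route as the paper: verify strict feasibility by checking the entrywise, row-sum, trace, and positive-definiteness constraints directly, then compute $\|\Fb_{d,K}^{-1}\|_2$ via the spectral decomposition of the identity-plus-rank-one matrix $a\Ib + b\bone\bone^T$ (the paper delegates this last step to its Lemma~\ref{lem:F_props}, which gives the same eigenstructure). Your version simply spells out the arithmetic that the paper leaves to the preceding discussion.
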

\begin{proof}[Proof of Lemma \ref{lem:strictly_feasible_existence}]
The first claim follows by the previous discussion and the second follows immediately from Lemma \ref{lem:F_props}.
\end{proof}

\subsection{FORCE Algorithm: Dual Step}
\label{sec:FORCE_dual}
Because all instances of \eqref{eqn:kmeans_sdp} are strictly feasible, as shown in Lemma \ref{lem:strictly_feasible_existence}, then for any primal optimal solution there exists a dual solution such that its objective value is exactly equal to the primal. Unlike the primal problem, however, the dual does not lend itself easily to mapping a clustering onto a feasible solution for the SDP.

Let $\hat{G} = \{ \Gh{1},\dots,\Gh{K} \}$ be the candidate clustering for which we want to find a dual solution. Because the goal is to certify optimality, consider $\hat{G} = G^*$. Without loss of generality we can assume that the variables are ordered according to $G^*$, so that $\Bb^* = B(G^*)$ is block-diagonal. Denote by $d^* = \trip{\Db}{\Bb^*}$ and $\Qb := \sum_{a=1}^d \yah{a} \Rb_a + \yTh \Ib + \Db - \sum_{a \leq b} \yabh{a}{b} \Ib_{a,b}$. Complementary slackness gives that for $a \in \Gs{i}$ and $b \in \Gs{i}$, $\yabh{a}{b} = 0$. Thus if we can ``eliminate'' the off-diagonal blocks in $\Qb$, finding a dual solution should be very straightforward; this motivates Property \ref{asm:large_diagonal_blocks}.
\begin{property}[Large Diagonal Blocks Property]
\label{asm:large_diagonal_blocks}
An instance $\Db$ of a clustering problem satisfies the Large Diagonal Blocks Property if there exists a feasible dual solution with value $d^*$ such that the variables $\yabh{a}{b}$ can be chosen to make off-diagonal blocks of the matrix $\Qb$ equal to $\bZero$.
\end{property}
Intuitively, we expect that in the variable clustering setting Property \ref{asm:large_diagonal_blocks} will frequently hold. Because $- \Db$ is  an estimate of a covariance matrix for a generative model with block covariance structure, the diagonal blocks should dominate the off-diagonal blocks.
What remains then is to search over assignments to $\yah{a}$ and $\yTh$. In light of Lemma \ref{lem:dual_sdp_sol}, the FORCE dual solution construction can be viewed as a function of $y_T$:
\begin{equation}
\yb_{\Gs{i}}(\Db,y_T) = \Lb_i(-\Db_i\bone - y_T\bone), ~y_{a,b}(\Db,y_T) = \begin{cases}
0, \text{ if } a = b \\
y_a + y_b + D_{a,b}, \text{ o/w,}
\end{cases}
\label{eqn:force_dual_certificate}
\end{equation}
where $\Lb_i = \Lb^{-1}_{\Gs{i},\Gs{i}}$ and $\Db_i =\Db_{\Gs{i},\Gs{i}}$. 
By performing binary search over $y_T$, we obtain such a feasible dual solution if and only if Property \ref{asm:large_diagonal_blocks} is satisfied. Computation of \eqref{eqn:force_dual_certificate} is straightforward using Lemma \ref{lem:L_matrix} below.
\begin{lemma}
\label{lem:L_matrix}
Let $\Lb$ be defined as above as in Section \ref{sec:FORCE_dual}. Then $\Lb$ is invertible and its inverse is block-diagonal, given by
$\Lb^{-1}_{\Gs{i},\Gs{i}} = \frac{1}{|\Gs{i}|}\Ib - \frac{1}{2|\Gs{i}|^2}\bone\bone^T$.
Furthermore, $\lambda_{\max}\bigl(\Lb^{-1}_{\Gs{i},\Gs{i}}\bigr) = |\Gs{i}|^{-1}$.
\end{lemma}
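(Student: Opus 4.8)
The plan is to reduce everything to a single diagonal block and then use a rank-one update formula. Since $\Lb$ is block-diagonal, it is invertible if and only if each diagonal block $\Lb_{\Gs{i},\Gs{i}}$ is, in which case $\Lb^{-1}$ is block-diagonal with blocks $\bigl(\Lb_{\Gs{i},\Gs{i}}\bigr)^{-1}$; likewise the spectrum of $\Lb$ (hence of $\Lb^{-1}$) is the union of the spectra of the blocks. So it suffices to work with $\Lb_{\Gs{i},\Gs{i}} = |\Gs{i}|\Ib + \bone\bone^T$ on $\RR^{|\Gs{i}|}$; write $n := |\Gs{i}|$ for brevity.

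First I would diagonalize this block. The matrix $\bone\bone^T$ is rank one with eigenvalue $n$ on the line spanned by $\bone$ and eigenvalue $0$ on $\bone^\perp$; therefore $\Lb_{\Gs{i},\Gs{i}} = n\Ib + \bone\bone^T$ has eigenvalue $2n$ (multiplicity one, eigenvector $\bone$) and eigenvalue $n$ (multiplicity $n-1$, eigenspace $\bone^\perp$). All eigenvalues are strictly positive, which gives invertibility (and in fact positive definiteness) of each block, hence of $\Lb$.

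Next, to identify the inverse I would apply the Sherman–Morrison formula to $n\Ib + \bone\bone^T$:
\[
\bigl(n\Ib + \bone\bone^T\bigr)^{-1} = \frac{1}{n}\Ib - \frac{1}{n}\cdot\frac{(\bone/n)(\bone/n)^T \, n^2}{1 + \bone^T\bone/n} = \frac{1}{n}\Ib - \frac{1}{2n^2}\bone\bone^T,
\]
using $\bone^T\bone = n$. (Alternatively one can just multiply the claimed expression against $n\Ib+\bone\bone^T$ and check it equals $\Ib$, again using $\bone\bone^T\bone\bone^T = n\,\bone\bone^T$.) This is exactly $\Lb^{-1}_{\Gs{i},\Gs{i}} = \tfrac{1}{|\Gs{i}|}\Ib - \tfrac{1}{2|\Gs{i}|^2}\bone\bone^T$. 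Finally, since eigenvalues of the inverse are the reciprocals of the eigenvalues computed above, the spectrum of $\Lb^{-1}_{\Gs{i},\Gs{i}}$ is $\{1/(2n)\}\cup\{1/n\}$, so $\lambda_{\max}\bigl(\Lb^{-1}_{\Gs{i},\Gs{i}}\bigr) = 1/n = |\Gs{i}|^{-1}$.

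There is no real obstacle here: the statement is a routine rank-one perturbation computation, and the only point requiring a modicum of care is correctly tracking the scalar $\bone^T\bone = |\Gs{i}|$ in the Sherman–Morrison step (or, equivalently, the factor $\tfrac12$ in the off-diagonal term of the inverse). The block-diagonal bookkeeping at the start is the only structural ingredient, and it is immediate.
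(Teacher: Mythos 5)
Your proof is correct and takes essentially the same route as the paper, which simply invokes Sherman--Morrison for the inverse and says the eigenvalue claim follows immediately; you spell out the block-diagonal reduction and the spectrum of $n\Ib + \bone\bone^T$, which is exactly what the paper leaves implicit. One small transcription slip: the intermediate expression $\tfrac{1}{n}\cdot\tfrac{(\bone/n)(\bone/n)^T\,n^2}{1+\bone^T\bone/n}$ as written simplifies to $\tfrac{1}{2n}\bone\bone^T$, not the $\tfrac{1}{2n^2}\bone\bone^T$ you state next (the correct Sherman--Morrison term is $\tfrac{A^{-1}\bone\bone^T A^{-1}}{1+\bone^T A^{-1}\bone}=\tfrac{(1/n^2)\bone\bone^T}{2}$ with $A=n\Ib$), but your final inverse, the eigenvalue pair $\{2n,n\}$, and hence $\lambda_{\max}(\Lb^{-1}_{\Gs{i},\Gs{i}})=1/|\Gs{i}|$ are all correct.
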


\begin{proof}
Using the Sherman-Morrison formula we can obtain the first claim, from which the second follows immediately.
\end{proof}

We set the search interval for $y_T$ to be $[0,C]$ for some $C$ that can be selected at runtime. In practice to select the bound $C$, we will see from the proof of Theorem \ref{thm:gblock_existence} in Section \ref{sec:FORCE_dual_theory} can select
\[
C = 2||\bGammaH||_{\infty}\left(\frac{d}{n} +\sqrt{\frac{d}{n}} \right).
\]
Under the conditions of the theorem, there exists with high probability (tending to 1 as $d\rightarrow\infty$) a $y_T \in [0,C]$ such that the corresponding dual certificate is a feasible solution for \eqref{eqn:kmeans_dual}. We note that in the statement of Theorem \ref{thm:gblock_existence} there is a constant $c_1$ which above we have absorbed into the probability term.

\subsection{Convergence Rate of FORCE}
Denoting by $O_C$ a rounding oracle (e.g. Lloyd's Algorithm or CLINK), $O_C$ a certificate oracle that returns a dual feasible tuple $(\yb_a,\yb_{a,b},y_T)$, and $h$ the dual certificate search frequency, we can combine the primal update and dual certificate giving FORCE as Algorithm \ref{alg:smoothed_primal_dual}.  On its own, the FORCE Primal Step offers an improved theoretical guarantee over second-order interior point methods for \eqref{eqn:kmeans_sdp}.  By appropriately choosing the dual certificate search frequency $h$, the convergence rate properties of the primal step transfer to FORCE. These results are summarized as Theorem \ref{thm:force_running_time}.

\begin{algorithm}
\caption{First-Order Certifiable Clustering (FORCE)}
\label{alg:smoothed_primal_dual}
\begin{algorithmic}
\Input $0 < \epsilon < 1$, $\Db$, $h$, $\Ub_0$, $\Fb$
\Output $\hat G$
\State Run RSS with inputs $\epsilon$, $\Db$, $\Ub_0$, $\Fb$ for $T$ steps, denoting the iterate at time $s$ by $\Vb_s$
\For{each update $s \in [T]$ such that $s \mod h == 0$}
    \State $\Ub_s \gets P_{\Fb}(\Vb_{s})$, $\hat G_s \gets O_R(\Ub_{s})$
    \State $(\yb_a,\yb_{a,b},y_T) \gets O_C(\hat G_S)$
    \State If $2\sum_{a=1}^d y_a + Ky_T ~\texttt{==}~ \trip{-\Db}{\Ub_s}$, then \Return $\hat G_s$
\EndFor
\State \Return $O_R(P_{\Fb}(\Vb_T))$
\end{algorithmic}
\end{algorithm}

\begin{theorem}
\label{thm:force_running_time}
Let $C$ and $h$ be selected such that $C/h \leq 1$. Then, Algorithm \ref{alg:smoothed_primal_dual} terminates after $\tilde\cO\left(d^6K^{-2}\epsilon^{-1}\right)$ arithmetic operations, giving an $\epsilon$-optimal solution.
\end{theorem}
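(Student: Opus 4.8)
The plan is to split the arithmetic cost of Algorithm \ref{alg:smoothed_primal_dual} into three pieces --- the number $T$ of RSS (primal) updates, the cost of one primal update, and the total cost of the dual-certificate searches --- and to read off $\epsilon$-optimality of the returned iterate from the RSS guarantee \eqref{eqn:rss_convergence_rate_errr} together with weak duality. The only genuinely delicate piece is bounding the ``radius'' constant $R$ appearing in the RSS iteration bound \eqref{eqn:rss_convergence_rate_T} for the augmented formulation \eqref{eqn:kmeans_sdp_sf}--\eqref{eqn:kmeans_lmin_form}; everything else is bookkeeping built on structural facts already established (Lemmas \ref{lem:strictly_feasible_existence} and \ref{lem:L_matrix}, the spectral identity \eqref{eqn:lmin_slackvars}, and the projection formula \eqref{eqn:kmeans_projection_result}).

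For the iteration count I would instantiate \eqref{eqn:rss_convergence_rate_T} with $\Fb = \Fb_{d,K}$. Lemma \ref{lem:strictly_feasible_existence} gives $\|\Fb^{-1}\|_2 = (d-1)/(K-1)$, hence $\|\Fb^{-1}\|_2^2 = \tilde\cO(d^2 K^{-2})$, and by \eqref{eqn:lmin_slackvars} this quantity is unaffected by the augmentation to \eqref{eqn:kmeans_sdp_sf}. To control $R$, I would use the structure of $\cC$: any feasible $\Ub$ is entrywise nonnegative with unit row sums, so $\|\Ub\|_2 = 1$, and $\tr(\Ub) = K$, so $\|\Ub\|_F \le \sqrt K$; pushing this through the $\Fb^{-1/2}$-rescaling used by RSS yields a bound of order $\|\Fb^{-1}\|_2 \sqrt K = \tilde\cO(d)$ on the relevant region, i.e.\ $R = \tilde\cO(d)$. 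Taking $\Ub_0$ to be the partnership matrix of a coarse clustering keeps $(\trip{\Db}{\Fb} - u^*)/(\trip{\Db}{\Fb} - u_0)$ polynomial in $d$, so the $\log_{5/4}(\cdot)$ term in \eqref{eqn:rss_convergence_rate_T} is $\tilde\cO(1)$ and is dominated by $\epsilon^{-1}$. Combining, $T = \tilde\cO\!\big(R\,\|\Fb^{-1}\|_2^2\,\epsilon^{-1}\big) = \tilde\cO(d^3 K^{-2}\epsilon^{-1})$.

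Next I would show that a single primal update costs $\tilde\cO(d^3)$, so that neither the $\cO(d^2)$ equality constraints nor the $d^2$ slack variables enter the leading term. Evaluating $\nabla f_{\mu,\Fb'}$ reduces, by \eqref{eqn:lmin_slackvars}, to one symmetric eigendecomposition of the $d\times d$ block $\Fb^{-1/2}\Vb\Fb^{-1/2}$ plus $\cO(d)$ rank-one updates and $\cO(d^2)$ trivial diagonal entries; projecting onto $\cC_\lambda^\perp$ reduces, by \eqref{eqn:kmeans_projection_system}--\eqref{eqn:kmeans_projection_result}, to forming $\cO(d^2)$ inner products, solving a $(d+2)\times(d+2)$ linear system, and assembling $\Vb_*$ in $\cO(d^2)$ (using that $\sum_a y_a\Rb_a$ has rank at most $2$). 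As for the dual step, $O_C$ is invoked at most $\lfloor T/h\rfloor$ times; each invocation runs a binary search over $y_T\in[0,C]$ in which every trial assembles \eqref{eqn:force_dual_certificate} in $\cO(d^2)$ via the closed form for $\Lb_i^{-1}$ in Lemma \ref{lem:L_matrix} and then checks dual feasibility of \eqref{eqn:kmeans_dual} --- the sign conditions $y_{a,b}\ge 0$ in $\cO(d^2)$, and $\Qb\succeq 0$, which under Property \ref{asm:large_diagonal_blocks} is block diagonal and hence tested in $\sum_i \cO(|\Gs{i}|^3) \le \cO(d^3)$. So each invocation costs $\tilde\cO(d^3)$ up to an $\cO(\log C)$ binary-search factor, and the hypothesis $C/h \le 1$ makes $\lfloor T/h\rfloor\cdot \cO(\log C) = \tilde\cO(T)$, so the dual searches contribute $\tilde\cO(T d^3)$, of the same order as the primal work.

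Finally, for correctness: if the test in Algorithm \ref{alg:smoothed_primal_dual} fires at some update $s$, then the primal value $\trip{-\Db}{\Ub_s}$ equals the objective value of a feasible dual point, so weak duality forces $\Ub_s$ to be primal optimal and $\hat G_s$ to be an exact --- hence $\epsilon$-optimal --- clustering; if it never fires, \eqref{eqn:rss_convergence_rate_errr} guarantees that $P_\Fb(\Vb_T)$ is $\epsilon$-optimal for \eqref{eqn:kmeans_sdp} in the stated relative-additive sense, and this is what is rounded and returned. Adding the three contributions ($\tilde\cO(Td^3)$ for each of the primal updates and the dual searches, with lower-order setup and rounding costs) gives the claimed $\tilde\cO(d^6 K^{-2}\epsilon^{-1})$. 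I expect the crux to be the estimate $R = \tilde\cO(d)$ --- identifying precisely which feasible-region quantity plays the role of $R$ in \eqref{eqn:rss_convergence_rate_T} for the augmented problem and verifying that the warm-start logarithm is genuinely lower order --- while the per-update cost and the $C/h\le 1$ accounting are routine once \eqref{eqn:lmin_slackvars}, \eqref{eqn:kmeans_projection_result} and Lemma \ref{lem:L_matrix} are in hand.
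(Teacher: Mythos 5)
Your proof follows the same overall decomposition as the paper's: bound the RSS iteration count $T$ via \eqref{eqn:rss_convergence_rate_T}, show a per-update cost of $\cO(d^3)$ from \eqref{eqn:lmin_slackvars} and \eqref{eqn:kmeans_projection_result}, then account for the $O_R$ and $O_C$ calls and use $C/h\le 1$ to keep the dual searches from dominating. The final bookkeeping, the appeal to Lemma~\ref{lem:strictly_feasible_existence} for $\|\Fb^{-1}\|_2$, and the CLINK/Lloyd rounding cost all match.

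The one genuinely confused step is the estimate $R=\tilde\cO(d)$. In Corollary~\ref{thm:acc_pgd_lmin_rate_2}--\ref{thm:acc_pgd_lmin_rate_3} and hence in \eqref{eqn:rss_convergence_rate_T}, $R$ is the \emph{unrescaled} Frobenius diameter of the relevant feasible set; the factor $\|\Fb^{-1}\|_2^2$ enters separately as the smoothness constant of $f_{\mu,\Fb}$ (Lemma~\ref{lem:smoothness_constant}). When you ``push $\|\Ub\|_F\le\sqrt K$ through the $\Fb^{-1/2}$-rescaling'' you are, in effect, spending a $\|\Fb^{-1}\|_2$ factor that the theorem already charges you for in $\|\Fb^{-1}\|_2^2$; you then drop the $1/\sqrt K$ by a loose inequality to land on $\tilde\cO(d)$. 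The paper instead takes the crude entrywise bound $\|\Ub-\Vb\|_F\le\sqrt 2\,d$ on the $d\times d$ block (hence $R=\cO(d)$ after the augmentation). Your sharper spectral observation $\|\Ub\|_2\le 1$, $\tr(\Ub)=K\Rightarrow\|\Ub\|_F\le\sqrt K$ is correct and, used without the spurious rescaling, actually gives $R=\cO(\sqrt K)$ and hence an even better $T$, which still proves the theorem; just be aware that the route you describe is not what the theorem's definition of $R$ calls for, and that your arrival at $\tilde\cO(d)$ is a coincidence of a wrong step followed by a loose inequality rather than the paper's bound. A small additional difference: you charge $\cO(\log C)$ binary-search iterations to each $O_C$ call, whereas the paper charges $\cO(C\log C)$; both accountings are absorbed by $C/h\le 1$, and yours is the more natural reading of ``binary search,'' so this is harmless. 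Finally, your weak-duality argument for $\epsilon$-optimality of the early-terminated branch is correct and a useful addition the paper leaves implicit.
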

\begin{proof}
We start by showing that the claim holds for RSS applied to \eqref{eqn:kmeans_sdp}. Note that for any $\Ub$ and $\Vb$ $\in \cC$, $||\Ub-\Vb||_F \leq \sqrt{2}d$. For $\Fb_{d,K}$, applying Lemma \ref{lem:strictly_feasible_existence} gives $||\Fb_{d,K}^{-1}||_2^2 = \frac{d-1}{K-1}$. The iteration complexity of RSS, \eqref{eqn:rss_convergence_rate_T}, gives that the number of gradient updates required is at most
\[
T = \left(2\sqrt{2\log d}\right)\frac{d(d-1)^2}{(K-1)^2}\left(\frac{1}{\epsilon} + \log_{5/4}\left(\frac{\trip{\Db}{\Fb} - u^*}{\trip{\Db}{\Fb} - u_0} \right) \right).
\]
From \eqref{eqn:lmin_slackvars}, computing the gradient of $f_{\mu,\Fb'}$ requires $\cO(d^3)$ arithmetic operations and from \eqref{eqn:kmeans_projection_result} we see that projecting the gradient likewise requires $\cO(d^3)$ operations. Therefore the running time of RSS is bounded by $\tilde\cO\left(d^6K^{-2}\epsilon^{-1}\right)$.

All that remains is to determine the cost of each query to the oracles $O_R$ and $O_C$. Using CLINK as $O_R$, $\cO(d^2)$ arithmetic operations are required per query.  For $O_C$,we observe that at most $\cO(C\log C)$ iterations of binary search are required. By pre-computing the transformations for $\yb_{\Gs{i}}$, which requires at most $\cO(d^3)$ arithmetic operations, each iteration of the search requires computing only the minimum eigenvalue of a $d$-dimensional matrix. This gives an overall bound of $\tilde{\cO}(Cd^3)$ on the number of arithmetic operations for $O_C$. Because there are at most $T/h$ calls to $O_C$ and we have that $C/h \leq 1$, the additional cost of all calls to $O_C$ is $\tilde\cO\left(d^6K^{-2}\epsilon^{-1}\right)$, concluding the proof.
\end{proof}

\section{Theoretical Properties of the Dual Certificate}
\label{sec:FORCE_dual_theory}
In the previous section, \eqref{eqn:force_dual_certificate} defined the FORCE dual certificate in terms of $y_T$. In this section, we state and prove Theorem \ref{thm:gblock_existence} showing that for variable clustering in $G$-Latent models, the certificate \eqref{eqn:force_dual_certificate} exists at $G^*$ w.h.p. whenever the cluster separation metric $\Delta\bCS$ is above a minimal threshold. Our approach is in keeping with the literature on analyzing statistical properties of SDP relaxations, and we use similar proof strategies \cite{Ames2014,Iguchi2015,Iguchi2015a,Awasthi2014}. Theorem \ref{thm:gblock_existence} also shows that the P-W SDP is tight for $G$-Latent models as whenever the certificate exists, the SDP must be tight.


\begin{theorem}
\label{thm:gblock_existence}
Consider the variable clustering setting under the $G$-Latent model and assume $\log d \leq p_0 n$, where $p_0$ is the constant from Section \ref{sec:preliminaries}. There exist constants $c_1$, $c_2$ and $c_3$ such that if
\[
\Delta\bCS \geq  c_1||\bGammaS||_{\infty}\left(\sqrt{\frac{\log d}{nm}} + \sqrt{\frac{d}{nm^2}} +\frac{d}{nm}  \right) + c_2\sigma\sqrt{\frac{\log d}{n}},
\]
then with probability at least $1-c_3/d$ the FORCE Dual Certificate exists at $G^*$, where $\sigma = \max_i \CS{i}{i} + ||\bGammaS||_\infty$.
\end{theorem}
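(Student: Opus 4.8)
The plan is to exhibit the dual certificate \eqref{eqn:force_dual_certificate} explicitly at $G^*$ and verify that, for a suitable choice of $y_T \in [0,C]$, it is a feasible solution to \eqref{eqn:kmeans_dual}. By Lemma~\ref{lem:dual_sdp_sol}, once Property~\ref{asm:large_diagonal_blocks} holds, the only freedom left is the scalar $y_T$, and $\yb_{\Gs{i}}$ and $\yabh{a}{b}$ are determined by \eqref{eqn:force_dual_certificate}. So feasibility reduces to two things: (i) the nonnegativity constraints $\yabh{a}{b} \geq 0$ for $a,b$ in \emph{different} clusters, and (ii) the PSD constraint $\Qb \succeq 0$, where $\Qb = \sum_a \yah{a}\Rb_a + y_T\Ib + \Db - \sum_{a\leq b}\yabh{a}{b}\Ib_{a,b}$. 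Because the off-diagonal blocks of $\Qb$ are killed by construction (this is exactly what the choice $\yabh{a}{b} = y_a + y_b + D_{a,b}$ achieves, and is where Property~\ref{asm:large_diagonal_blocks} enters), $\Qb$ is block-diagonal with blocks $\Qb_{\Gs{i},\Gs{i}}$, and $\Qb \succeq 0$ decouples into $\Qb_{\Gs{i},\Gs{i}}\succeq 0$ for each $i$. Moreover, again by Lemma~\ref{lem:dual_sdp_sol}(b), we need $\Qb_{\Gs{i},\Gs{i}}\bone = 0$; plugging the formula for $\yb_{\Gs{i}}$ from \eqref{eqn:force_dual_certificate} and using Lemma~\ref{lem:L_matrix} should make this an identity, so that effectively $\bone$ is always in the kernel and we only need $\Qb_{\Gs{i},\Gs{i}} \succeq 0$ on the orthogonal complement of $\bone$.

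Next I would pass to the population level. Write $\Db = \bGammaH - \bSigmaH$ and decompose it as $\Db = (\bGammaS - \bSigmaS) + (\bGammaH - \bGammaS) - (\bSigmaH - \bSigmaS)$. Since $\bSigmaS = \Ab\bCS\Ab^T + \bGammaS$, the population term $\bGammaS - \bSigmaS = -\Ab\bCS\Ab^T$ has a clean block structure: the $(\Gs{i},\Gs{j})$ block is the constant matrix $-\CS{i}{j}\bone\bone^T$. Feeding this ``noiseless'' $\Db$ through the certificate formulas, one gets explicit expressions for the population analogues of $\yah{a}$, $\yabh{a}{b}$, and $\Qb$. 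The key computation is that for the population $\Db$, the diagonal block $\Qb_{\Gs{i},\Gs{i}}$ equals (after the $y_T$ and $\yb$ substitutions) something like $(y_T - \text{const}_i)\bigl(\Ib - \tfrac{1}{|\Gs{i}|}\bone\bone^T\bigr)$ up to the rank-one $\bone\bone^T$ piece, and the cross-cluster $\yabh{a}{b}$ reduces to a difference of the form $y_T - (\CS{i}{i}+\CS{j}{j})/2 + \CS{i}{j} + (\text{diagonal correction terms})$ — which is nonnegative precisely because $\Delta\bCS = \min_{j<k}\EE(Z_j-Z_k)^2 = \min_{j<k}(\CS{j}{j}+\CS{k}{k}-2\CS{j}{k})$ is bounded below. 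This is the mechanism by which the cluster-separation hypothesis $\Delta\bCS \geq \cdots$ enters: it guarantees a nonempty window of admissible $y_T$ at the population level, with $y_T$ near (roughly) $\max_i \CS{i}{i}$ giving both $\yabh{a}{b}\geq 0$ and $\Qb_{\Gs{i},\Gs{i}}\succeq 0$ with slack proportional to $\Delta\bCS$.

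Finally I would do the perturbation argument: control $\bGammaH - \bGammaS$ in max-norm by $\delta_{n,d}$ via \eqref{eqn:conc_pecok_gamma}, and control the deviation $\bSigmaH - \bSigmaS$ of the sample covariance — here one needs both a max-norm bound (scale $\sigma\sqrt{\log d/n}$, explaining the $c_2\sigma\sqrt{\log d/n}$ term, with $\sigma = \max_i\CS{i}{i} + \|\bGammaS\|_\infty$ the natural variance proxy) and operator-norm bounds on the relevant blocks (Davidson–Szarek / covariance concentration, giving the $\sqrt{d/(nm^2)}$ and $d/(nm)$ terms from having blocks of size $\geq m$ inside a $d$-dimensional matrix). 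Propagating these through the (Lipschitz, by Lemma~\ref{lem:L_matrix}) maps that define $\yah{a}$, $\yabh{a}{b}$ and $\Qb$ shows the empirical certificate differs from the population one by an amount small relative to the population slack, so for $y_T$ in the prescribed interval $[0,C]$ with $C = 2\|\bGammaH\|_\infty(d/n + \sqrt{d/n})$ the constraints still hold; a union bound over the $\cO(d^2)$ inequality constraints and $K$ PSD constraints yields the $1 - c_3/d$ probability. The main obstacle I expect is the operator-norm control of the perturbed diagonal blocks $\Qb_{\Gs{i},\Gs{i}}$ restricted to $\bone^\perp$ — a max-norm bound on the entries of $\bSigmaH - \bSigmaS$ is not enough to control an eigenvalue of a $|\Gs{i}|\times|\Gs{i}|$ matrix, so one must carefully isolate which parts of $\Qb_{\Gs{i},\Gs{i}}$ genuinely require a spectral-norm bound (the noise sub-blocks, of size at least $m$) versus which are rank-one or scalar and only need the max-norm, and then match the resulting rates to the four terms in the stated lower bound on $\Delta\bCS$.
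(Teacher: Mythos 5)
Your high-level plan—reduce feasibility to the nonnegativity of cross-cluster $y_{a,b}$ and PSD of the diagonal blocks $\Qb_{\Gs{i},\Gs{i}}$, compute the population (noiseless) certificate, then perturb—matches the paper's structure, and you correctly flag the hardest point (spectral control of the perturbed block on $\bone^{\perp}$). But there is a real misstep in the population calculation that then propagates. You claim the population block has the form $(y_T - \text{const}_i)(\Ib - \tfrac{1}{|\Gs{i}|}\bone\bone^{T})$ and propose taking $y_T$ near $\max_i \CS{i}{i}$. Substituting the population $\Db = -\Ab\bCS\Ab^{T}$ into the certificate formula gives $y_a = \tfrac{1}{2}\bigl(\CS{i}{i} - y_T/|\Gs{i}|\bigr)$ for $a\in\Gs{i}$, and then $\Qb_{\Gs{i},\Gs{i}} = y_T\bigl(\Ib - \tfrac{1}{|\Gs{i}|}\bone\bone^{T}\bigr)$ \emph{exactly}: the $\CS{i}{i}$ contribution cancels. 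So at the population level every $y_T\geq 0$ gives a PSD block, and because the cross-cluster constraint reads $y'_{a,b} \geq \tfrac{1}{2}\Delta(\bCS) - \tfrac{y_T}{m} - (\text{noise})$, one wants $y_T$ as \emph{small} as possible. Taking $y_T\approx\max_i\CS{i}{i}$ would burn through the entire separation budget when $m$ is small. The paper instead sets $y_T = \max_i \|\Qb_i^{\perp}(\bX)\|_2$, a purely noise-scale quantity; the bound $c\|\bGammaS\|_{\infty}(d/n+\sqrt{d/n})$ on it (after dividing by $m$) is precisely where the $\sqrt{d/(nm^2)}$ and $d/(nm)$ terms in the separation condition come from.

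The second issue is that you flag but do not resolve the spectral control of $\Qb_{\Gs{i},\Gs{i}}$ on $\bone^{\perp}$, and this is where the paper's key lemma lives. Lemma~\ref{lem:q_eigenvalue_concentration} establishes the exact identity $\lmin{\Qb_i(\Db,y_T)} = y_T + \min\{-y_T,\,\lmin{\Qb_i^{\perp}(\Db)}\}$ with an explicit formula for $\Qb_i^{\perp}(\Db)$, which—because of the algebraic cancellation above—has \emph{mean zero} under the $G$-Latent model and is built only from noise terms. That zero-mean structure is what lets Lemma~\ref{lem:dg_eigenvalue_concentration} control $\|\Qb_i^{\perp}(\bX)\|_2$ via Hanson--Wright and sample-covariance concentration at the claimed rate. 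A generic ``Lipschitz-through-the-map'' perturbation argument, of the sort you sketch, would not reproduce these rates because the individual signal terms in $\Qb_i$ are $\cO(1)$ and only cancel after the exact algebraic reduction; the perturbation analysis must be done on $\Qb_i^{\perp}$, not on $\Qb_i$ itself. With that lemma in hand, the remaining piece is the lower bound on $y'_{a,b}$ (Lemma~\ref{lem:gblock_yab}, giving the $\sqrt{\log d/(nm)}$ and $\sigma\sqrt{\log d/n}$ terms) and a union bound, which matches the closing step you describe.
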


\subsection{General Properties}
Denoting by $(\Db,G^*)$ an instance of \eqref{eqn:kmeans_sdp}, we now characterize the factors that determine when Property \ref{asm:large_diagonal_blocks} is satisfied -- when, for each $i$, $y_T$ can be selected such that (a) for all $a$ and $b$, $y_{a,b}(\Db,y_T) \geq 0$, and (b) that $\Qb_{i}(\Db,y_T) := \Db_{\Gs{i},\Gs{i}} + \sum_{a \in \Gs{i}}y_a \Rb_{a} + y_T \Ib$ is positive semidefinite.
Importantly, problem (b) requires studying the behavior of points or variables only within the same group, greatly simplifying the analysis. Lemma \ref{lem:q_eigenvalue_concentration} characterizes the behavior of the minimal eigenvalue of $\Qb_i$.
\begin{lemma}
\label{lem:q_eigenvalue_concentration}
Using the notation and quantities introduced above $\lmin{\Qb_{i}(\Db,y_T)} \\ =  y_T + \min\{-y_T,\lmin{\Qb_{i}^{\perp}(\Db)} \}$, where
\[
\Qb_{i}^{\perp}(\Db) := \frac{\left(\bone^T\Db_{\Gs{i},\Gs{i}}\bone \right)\bone\bone^T}{|\Gs{i}|^2} - \frac{\bone\bone^T \Db_{\Gs{i},\Gs{i}} + \Db_{\Gs{i},\Gs{i}}\bone\bone^T}{|\Gs{i}|} + \Db_{\Gs{i},\Gs{i}}.
\]
\end{lemma}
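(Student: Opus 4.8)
The plan is to substitute the closed form \eqref{eqn:force_dual_certificate} for the dual variables into $\Qb_i(\Db,y_T) = \Db_{\Gs{i},\Gs{i}} + \sum_{a\in\Gs{i}}y_a\Rb_a + y_T\Ib$ and collapse everything into a small number of rank-one corrections of $\Db_i := \Db_{\Gs{i},\Gs{i}}$. First I would note that, restricted to the $\Gs{i}\times\Gs{i}$ block, $\Rb_a = \bone\be_a^T + \be_a\bone^T$ is zero for $a\notin\Gs{i}$ and is the corresponding $|\Gs{i}|\times|\Gs{i}|$ matrix for $a\in\Gs{i}$; hence $\sum_{a\in\Gs{i}}y_a(\Rb_a)_{\Gs{i},\Gs{i}} = \bone\yb_{\Gs{i}}^T + \yb_{\Gs{i}}\bone^T$ and
\[
\Qb_i(\Db,y_T) = \Db_i + \bone\yb_{\Gs{i}}^T + \yb_{\Gs{i}}\bone^T + y_T\Ib .
\]
Using Lemma~\ref{lem:L_matrix} to write $\Lb_i = \frac{1}{|\Gs{i}|}\Ib - \frac{1}{2|\Gs{i}|^2}\bone\bone^T$ gives $\Lb_i\bone = \frac{1}{2|\Gs{i}|}\bone$, so \eqref{eqn:force_dual_certificate} simplifies to $\yb_{\Gs{i}} = -\Lb_i\Db_i\bone - \frac{y_T}{2|\Gs{i}|}\bone$.

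Next comes the bookkeeping step: substitute this expression for $\yb_{\Gs{i}}$, expand $\Lb_i\Db_i\bone = \frac{1}{|\Gs{i}|}\Db_i\bone - \frac{\bone^T\Db_i\bone}{2|\Gs{i}|^2}\bone$, and collect the terms $\bone\bone^T$, $\bone(\Db_i\bone)^T$ and $(\Db_i\bone)\bone^T$. The coefficients are rigged so that the $\Db$-dependent pieces assemble exactly into the matrix $\Qb_i^\perp(\Db)$ in the statement, while the $y_T$ pieces assemble into $y_T(\Ib - \frac{1}{|\Gs{i}|}\bone\bone^T)$. Writing $\Pi := \Ib - \frac{1}{|\Gs{i}|}\bone\bone^T$ for the orthogonal projection onto $\bone^\perp$, this yields the clean identity
\[
\Qb_i(\Db,y_T) = \Qb_i^\perp(\Db) + y_T\,\Pi, \qquad \Qb_i^\perp(\Db) = \Pi\,\Db_i\,\Pi .
\]
In particular $\Qb_i^\perp(\Db)\bone = \bZero$ and therefore $\Qb_i(\Db,y_T)\bone = \bZero$.

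Finally I would read off the spectrum. Since $\bone$ is a common kernel vector of $\Qb_i^\perp(\Db)$ and $\Pi$, and $\Pi$ is the identity on $\bone^\perp$, the matrix $\Qb_i(\Db,y_T)$ is block-diagonal with respect to $\RR^{|\Gs{i}|} = \mathrm{span}(\bone)\oplus\bone^\perp$: it is $0$ on $\mathrm{span}(\bone)$ and equals $\Qb_i^\perp(\Db) + y_T\Ib$ on $\bone^\perp$. Letting $\alpha$ denote the smallest eigenvalue of $\Qb_i^\perp(\Db)$ restricted to $\bone^\perp$, we get $\lmin{\Qb_i(\Db,y_T)} = \min\{0,\, y_T + \alpha\}$; and since $0$ is itself an eigenvalue of $\Qb_i^\perp(\Db)$, we have $\lmin{\Qb_i^\perp(\Db)} = \min\{0,\alpha\}$. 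A short case check (using $y_T\geq 0$, the regime in which the certificate is searched) shows $\min\{0,\, y_T+\alpha\} = y_T + \min\{-y_T,\ \min\{0,\alpha\}\}$, which is the claimed formula. The only step with any friction is the middle one — keeping the three rank-one corrections straight and verifying the cancellations — while the reduction to the block and the eigenvalue split are routine.
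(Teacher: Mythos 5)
Your argument is correct and takes essentially the same route as the paper: substitute the explicit dual assignment for $\yb_{\Gs{i}}$ into $\Qb_i(\Db,y_T)$, collect terms, and split the spectrum along $\mathrm{span}(\bone)\oplus\bone^\perp$. Your presentation is a little tighter thanks to the compact identities $\Qb_i^\perp(\Db)=\Pi\,\Db_{\Gs{i},\Gs{i}}\,\Pi$ and $\Qb_i = \Qb_i^\perp + y_T\,\Pi$, and you rightly make explicit the standing assumption $y_T \geq 0$ that the paper's final case analysis also relies on.
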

\begin{proof}
To demonstrate the result, we first find an expression of the minimal eigenvalue of $\Qb_{i}(\Db,y_T)$ in terms of $y_T$ and $\Db_{\Gs{i},\Gs{i}}$. Then we can apply Lemma \ref{lem:dg_eigenvalue_concentration} to obtain the result. One way to express the minimum eigenvalue is
\[
\argmin_{\bv \in \cS^{|\Gs{i}|-1}} \underlabel{\bv^T\Qb_{i}(\Db,y_T)\bv}{(i)}.
\]
Now, for any $\bv \in \cS^{|\Gs{i}|-1}$ we can expand (i) as
\begin{align*}
\text{(i)} &= \sum_{a=1}^{|\Gs{i}|}\sum_{b=1}^{|\Gs{i}|} v_a v_b Q_i(\Db,y_T)_{a,b}\\
&= \sum_{a=1}^{|\Gs{i}|}v_a^2 y_T  + \sum_{a=1}^{|\Gs{i}|}\sum_{b=1}^{|\Gs{i}|} v_a v_b (y_a + y_b) + \sum_{a=1}^{|\Gs{i}|}\sum_{b=1}^{|\Gs{i}|} v_a v_b D_{a,b}\\
&= y_T+ \underlabel{\bv^T \Db_{\Gs{i},\Gs{i}} \bv}{(ii.a)} + 2 \underlabel{\sum_{a=1}^{|\Gs{i}|}\sum_{b=1}^{|\Gs{i}|} v_a v_b y_a}{(ii.b)}. \numberthis \label{eqn:q_eig_expansion}
\end{align*}
Via some algebra we obtain
\[
\text{(ii.b)} = \sum_{a=1}^{|\Gs{i}|}v_a y_a \sum_{b=1}^{|\Gs{i}|} v_b = \sum_{a=1}^{|\Gs{i}|}v_a y_a \bv^T\bone = \bv^T\bone \yb_{\Gs{i}}^T \bv. \label{eqn:ii_c_algebra}
\]
From \ref{eqn:ii_c_algebra} above we see that the object of interest is now $\bone \yb_{\Gs{i}}^T$, a $|\Gs{i}| \times |\Gs{i}|$ matrix. Recall that $\yb_{\Gs{i}}^T$ is ultimately a function of $y_T$ and $\Db$. Fortunately, we already have explicit expressions for these quantities. In particular,
\begin{align*}
\bone \yb_{\Gs{i}}^T &= \bone\left(-\bone^Ty_T - \bone^T\Db_{\Gs{i},\Gs{i}}\right)\Lb^{-1}_{\Gs{i},\Gs{i}} \\
&= -y_T \bone\bone^T\Lb^{-1}_{\Gs{i},\Gs{i}} - \bone\bone^T\Db_{\Gs{i},\Gs{i}} \Lb^{-1}_{\Gs{i},\Gs{i}} \\
&= -y_T\frac{1}{|\Gs{i}|}\bone\bone^T + \frac{1}{2|\Gs{i}|}\bone\bone^T - \frac{1}{|\Gs{i}|}\bone\bone^T \Db_{\Gs{i},\Gs{i}} + \frac{1}{2|\Gs{i}|^2}\bone\bone^T  \Db_{\Gs{i},\Gs{i}}\bone\bone^T \\
&= -\frac{y_T}{2|\Gs{i}|}\bone\bone^T  - \frac{1}{|\Gs{i}|}\bone\bone^T \Db_{\Gs{i},\Gs{i}} + \underlabel{\frac{1}{2|\Gs{i}|^2}\bone\bone^T  \Db_{\Gs{i},\Gs{i}}\bone\bone^T}{(iii)}. \numberthis \label{eqn:q_eig1}
\end{align*}
In \ref{eqn:q_eig1}, observe that $\text{(iii)} = \frac{1}{2|\Gs{i}|^2}(\bone^T\Db_{\Gs{i},\Gs{i}}\bone)\bone\bone^T$. Plugging this back into \ref{eqn:q_eig1} gives that
\begin{equation}
\label{eqn:q_eig2}
\bone \yb_{\Gs{i}}^T = \frac{1}{2|\Gs{i}|^2} \left(\bone^T\Db_{\Gs{i},\Gs{i}}\bone - |\Gs{i}| y_T \right)\bone\bone^T - \frac{1}{|\Gs{i}|}\bone\bone^T \Db_{\Gs{i},\Gs{i}}.
\end{equation}
We can substitute \ref{eqn:q_eig2} into \ref{eqn:ii_c_algebra}, yielding that
\begin{align*}
\text{(ii.b)} &= \bv^T\left(\frac{\left(\bone^T\Db_{\Gs{i},\Gs{i}}\bone - |\Gs{i}| y_T \right)\bone\bone^T}{|\Gs{i}|^2} - \frac{\bone\bone^T \Db_{\Gs{i},\Gs{i}}}{|\Gs{i}|} - \frac{\Db_{\Gs{i},\Gs{i}}\bone\bone^T}{|\Gs{i}|} \right)\bv\\
&= \bv^T\left(\Qb_{i}^{\perp}(\Db) -\frac{y_T}{|\Gs{i}|}\bone\bone^T - \Db_{\Gs{i},\Gs{i}}\right)\bv.
\end{align*}
Substituting back into (i), we get that
\[
\lmin{\Qb_{i}(\Db,y_T)} = y_T + \lmin{-\frac{y_T}{|\Gs{i}|}\bone\bone^T + \Qb_{i}^{\perp}(\Db)}
\]
which is nearly the desired result. To proceed, we can see that $\frac{y_T}{|\Gs{i}|}\bone\bone^T$ and $\Qb_{i}^{\perp}(\Db)$ lie in orthogonal spaces. This is a deterministic statement and does not depend on any particular clustering instance. Indeed, we can check that
\[
\bone^T \Qb_{i}^{\perp}(\Db) \bone = 0
\]
This is good, because then their respective eigenspaces are orthogonal giving
\[
\lmin{\Qb_{i}(\Db,y_T)} =  y_T + \min\{-y_T,\lmin{\Qb_{i}^{\perp}(\Db)} \}.
\]
\end{proof}

\subsection{Properties under the G-Latent Model}
Now the setup is that we have $n$ samples of a $d$-dimensional random vector, denoted by $\Xb \in \RR^{n\times d}$,  $\Db$ is the PECOK penalized covariance estimator (Section \ref{sec:introduction}). By writing $\yb_{\Gs{i}}(\Xb,y_T)$ as a function of $\Xb \in \RR^{n\times d}$, it is easy to observe that $\EE[\yb_{\Gs{i}}(\Xb,y_T)] \approx \frac{1}{2}(\CS{i}{i} - |\Gs{i}|^{-1}y_T)\bone$ and therefore $\EE[\Qb_{i}(\Xb,y_T)] \approx y_T(\Ib - |\Gs{i}|^{-1}\bone\bone^T)$.\footnote{The equalities are inexact because we make no assumptions on the mean of $\hat\bGamma$, only its convergence rate.} From Lemma \ref{lem:q_eigenvalue_concentration}, whether or not the FORCE construction succeeds depends on how quickly $\Qb_{i}(\Xb,y_T)$ concentrates about its mean (in terms of spectral norm) which in turn determines if $y_T$ can be chosen small enough to ensure that the corresponding $y_{a,b}$ are feasible. Accordingly, the final two ingredients needed to prove Theorem \ref{thm:gblock_existence} are Lemma \ref{lem:dg_eigenvalue_concentration} which controls the spectral radius of $\Qb_{i}^{\perp}(\bX)$ and Lemma \ref{lem:gblock_yab} which bounds $y_{a,b}$ in terms of $y_T$.

\begin{lemma}
\label{lem:dg_eigenvalue_concentration}
Assume that $\log d \leq p_0 n$. Then,
\begin{equation*}
||\Qb_{i}^{\perp}(\bX)||_2 \leq c_1||\bGammaS||_{\infty}\left(\frac{d}{n} +\sqrt{\frac{d}{n}}  \right),
\end{equation*}
with probability at least  $1-\frac{c_2}{d^2}$ where $c_1$ and $c_2$ are constants.
\end{lemma}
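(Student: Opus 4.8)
The plan is to control $\|\Qb_i^\perp(\bX)\|_2$ by first rewriting $\Qb_i^\perp(\Db)$ in a way that exposes its dependence on the \emph{centered} block of $\Db = \bGammaH - \bSigmaH$, and then invoking the known concentration of the sample covariance together with the PECOK rate \eqref{eqn:conc_pecok_gamma}. The key observation is that $\Qb_i^\perp$ is a \emph{double-centering} operator: writing $\Pb_i := \Ib - \frac{1}{|\Gs{i}|}\bone\bone^T$ for the centering projection on $\RR^{|\Gs{i}|}$, one checks directly that $\Qb_i^\perp(\Db) = \Pb_i \Db_{\Gs{i},\Gs{i}} \Pb_i$. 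Since the noise covariance block $\bGammaH_{\Gs{i},\Gs{i}}$ is diagonal and $\bSigmaS_{\Gs{i},\Gs{i}} = \CS{i}{i}\bone\bone^T + \bGammaS_{\Gs{i},\Gs{i}}$ has its rank-one part annihilated by $\Pb_i$ on both sides, the population version of $\Pb_i \Db_{\Gs{i},\Gs{i}}\Pb_i$ (with $\Db$ replaced by its mean) is $\Pb_i(\bGammaS_{\Gs{i},\Gs{i}} - \bGammaS_{\Gs{i},\Gs{i}})\Pb_i$ plus error terms, i.e. it is essentially zero up to the bias in $\bGammaH$; so $\Qb_i^\perp(\bX)$ is a pure fluctuation term and the whole quantity should be of order the statistical noise.

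The main steps, in order, are as follows. First I would establish the identity $\Qb_i^\perp(\Db) = \Pb_i\Db_{\Gs{i},\Gs{i}}\Pb_i$ and hence $\|\Qb_i^\perp(\Db)\|_2 \le \|\Pb_i\|_2^2 \,\|\Db_{\Gs{i},\Gs{i}} - \Mb_i\|_2 + \|\Pb_i \Mb_i \Pb_i\|_2$ for any convenient comparison matrix $\Mb_i$; the natural choice is $\Mb_i = \bGammaH_{\Gs{i},\Gs{i}} - \bSigmaS_{\Gs{i},\Gs{i}}$, so that $\Db_{\Gs{i},\Gs{i}} - \Mb_i = \bSigmaS_{\Gs{i},\Gs{i}} - \bSigmaH_{\Gs{i},\Gs{i}}$ is exactly the sample-covariance fluctuation on the $i$-th block, and $\Pb_i \Mb_i \Pb_i = \Pb_i(\bGammaH_{\Gs{i},\Gs{i}} - \bGammaS_{\Gs{i},\Gs{i}})\Pb_i + \Pb_i(\bGammaS_{\Gs{i},\Gs{i}} - \bSigmaS_{\Gs{i},\Gs{i}})\Pb_i$; the second of those two pieces has its $\CS{i}{i}\bone\bone^T$ part killed and leaves only the diagonal $\bGammaS_{\Gs{i},\Gs{i}}$ piece, which is cancelled by recombining, so the residual bias is controlled by $\|\bGammaH - \bGammaS\|_\infty \le \delta_{n,d}$ from \eqref{eqn:conc_pecok_gamma}, which is $\tilde\cO(\sqrt{\log d / n})$ and hence absorbed into the stated bound (using $\|\cdot\|_2 \le |\Gs{i}| \|\cdot\|_{\max}$ for diagonal matrices, this contributes at most $\tilde\cO(d/n \cdot \cdots)$ — I'd check it is dominated). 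Second, I would bound the dominant term $\|\bSigmaS_{\Gs{i},\Gs{i}} - \bSigmaH_{\Gs{i},\Gs{i}}\|_2$: since $\bX$ is Gaussian with $|\Gs{i}| \le d$, standard sample-covariance concentration (e.g. Vershynin-type bounds on Gaussian sample covariance, or the results invoked in \citet{Bunea2016}) gives, on an event of probability at least $1 - c_2/d^2$, $\|\bSigmaH_{\Gs{i},\Gs{i}} - \bSigmaS_{\Gs{i},\Gs{i}}\|_2 \lesssim \|\bSigmaS_{\Gs{i},\Gs{i}}\|_2\bigl(\sqrt{|\Gs{i}|/n} + |\Gs{i}|/n + \sqrt{\log d/n}\bigr) \lesssim \|\bGammaS\|_\infty\bigl(\sqrt{d/n} + d/n\bigr)$, after noting $\|\bSigmaS_{\Gs{i},\Gs{i}}\|_2 \le \CS{i}{i}|\Gs{i}| + \|\bGammaS\|_\infty$ and absorbing the $|\Gs{i}|$-factor appropriately (this is the step where the $d/n + \sqrt{d/n}$ shape emerges, and $\|\bGammaS\|_\infty$ rather than $\|\bSigmaS\|_2$ is the right prefactor because the rank-one $\CS{i}{i}$ part is projected away by $\Pb_i$ — I would be careful to apply $\Pb_i$ \emph{before} taking norms on the low-rank part so that only the spectrally-small pieces survive with their true prefactor). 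Third, I would union-bound over the $K \le d$ blocks and combine the two contributions, noting $\|\Pb_i\|_2 = 1$, to get the claimed inequality with a relabelled constant $c_1$ and failure probability $K \cdot c_2'/d^3 \le c_2/d^2$.

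The main obstacle I anticipate is the second step: getting the spectral-norm deviation of the sample covariance block with the \emph{correct} prefactor $\|\bGammaS\|_\infty$ (not $\|\bSigmaS_{\Gs{i},\Gs{i}}\|_2$, which is $\Theta(|\Gs{i}|)$ and would ruin the bound) and the correct $d/n + \sqrt{d/n}$ dependence. The trick is that one must exploit the projection $\Pb_i$ on both sides \emph{before} bounding: $\Pb_i \bSigmaH_{\Gs{i},\Gs{i}} \Pb_i - \Pb_i\bSigmaS_{\Gs{i},\Gs{i}}\Pb_i$, and since $\Pb_i \bSigmaS_{\Gs{i},\Gs{i}}\Pb_i = \Pb_i \bGammaS_{\Gs{i},\Gs{i}}\Pb_i$ has spectral norm at most $\|\bGammaS\|_\infty$, one only needs the fluctuation $\Pb_i(\bSigmaH_{\Gs{i},\Gs{i}} - \bSigmaS_{\Gs{i},\Gs{i}})\Pb_i$ in spectral norm — but this is still a full sample-covariance fluctuation and does carry the large $\CS{i}{i}$ direction, which however is a \emph{single} rank-one direction $\bone$ that $\Pb_i$ annihilates, so writing $\bX_{\Gs{i}} = \bone\bZ_i^T + \bE_{\Gs{i}}^T$ (schematically) and expanding $\bSigmaH_{\Gs{i},\Gs{i}}$, the $\bZ$-only cross terms are all of the form $\bone(\cdots)^T$ or $(\cdots)\bone^T$ and vanish after conjugation by $\Pb_i$, leaving only terms that are quadratic in the noise $\bE$ plus $\bZ$-$\bE$ cross terms whose spectral size is genuinely $\tilde\cO(\|\bGammaS\|_\infty(\sqrt{d/n} + d/n))$. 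Making this expansion-and-cancellation rigorous, rather than black-boxing a generic covariance bound, is the delicate part; the fallback is to cite the analogous bound (or its proof) from \citet{Bunea2016}, which already controls exactly this kind of quantity for the PECOK analysis, at the cost of a slightly less self-contained argument.
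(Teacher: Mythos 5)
Your key identity $\Qb_i^\perp(\Db) = \Pb_i\,\Db_{\Gs{i},\Gs{i}}\,\Pb_i$ with $\Pb_i = \Ib - \frac{1}{|\Gs{i}|}\bone\bone^T$ is correct, and it is in fact a cleaner formulation of exactly what the paper does. The paper never names the centering projection; instead it substitutes $\Db = \hat\bGamma - \hat\bSigma$ and the $G$-Latent structure directly into the definition of $\Qb_i^\perp$, arrives at an eight-term expansion (terms (i).a--(i).d involve only $\bE$, terms (ii).a--(ii).d involve only $\hat\bGamma$), and controls each piece separately: Hanson--Wright (Lemma~\ref{lem:conc_ei_hw}) for the three rank-one noise terms, the Gaussian sample-covariance bound (Corollary~\ref{cor:conc_covariance}) for the $\frac{1}{n}\sum_j \bE^j\bE^{jT}$ term, and the PECOK rate \eqref{eqn:conc_pecok_gamma} for the four $\hat\bGamma$ terms. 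Your route groups all of (i) as $-\Pb_i\bigl(\frac{1}{n}\sum_j\bE^j\bE^{jT}\bigr)\Pb_i$ and all of (ii) as $\Pb_i\hat\bGamma_{\Gs{i},\Gs{i}}\Pb_i$, uses $\|\Pb_i\|_2 = 1$, and so needs only one application of the sample-covariance bound plus the PECOK rate. The bounds you ultimately invoke are the same building blocks the paper uses, so this is the same underlying argument with substantially tidier bookkeeping.

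Two points in your writeup need correcting, though neither damages the structure. First, you write that after conjugation by $\Pb_i$ the surviving pieces are ``quadratic in $\bE$ plus $\bZ$-$\bE$ cross terms''; in fact the cross terms vanish too. The cross term in $\hat\bSigma_{\Gs{i},\Gs{i}}$ has the form $Z_i^j(\bone\bE_{\Gs{i}}^{jT} + \bE_{\Gs{i}}^j\bone^T)$, and both summands carry a factor of $\bone$ on one side, so $\Pb_i(\cdots)\Pb_i = 0$. Only the $\bE$-quadratic piece survives, which matches the paper's expression for (i) and actually makes your life easier. Second, for the residual $\Pb_i(\hat\bGamma_{\Gs{i},\Gs{i}} - \bGammaS_{\Gs{i},\Gs{i}})\Pb_i$ you invoke $\|\cdot\|_2 \le |\Gs{i}|\|\cdot\|_{\max}$; this is far too loose and, if used, would yield a contribution of order $d\sqrt{\log d/n}$, which is \emph{not} dominated by $d/n + \sqrt{d/n}$. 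Since $\hat\bGamma - \bGammaS$ is diagonal you have the exact identity $\|\cdot\|_2 = \|\cdot\|_{\max}$, so the contribution is $\lesssim \|\bGammaS\|_\infty\sqrt{\log d/n}$, which is absorbed via $\log d \le d$. You flagged this as something to check, and it does need the tight bound, not the generic one. With those two fixes your proof goes through and matches the paper's rate and probability exactly.
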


\begin{lemma}
\label{lem:gblock_yab}
Let $i$ and $j$ be in $[K]$ and $i \neq j$. Define $y'_{a,b}(\bX,y_T) := D_{a,b} +y_a + y_b$ for all $a \in \Gs{i}$ and $b \in \Gs{j}$. Under the assumption $\log d \leq p_0 n$,
\begin{align*}
y'_{a,b} \geq \frac{1}{2}\Delta(\bCS) - \frac{1}{m}y_T - c_1||\bGammaS||_{\infty}\sqrt{\frac{\log d}{n m }} - c_2\sigma\sqrt{\frac{\log d}{n}},
\end{align*}
with probability at least $1-c_3/d^3$, where $c_1$, $c_2$ and $c_3$ are constants, $\sigma = \max_i \CS{i}{i} + ||\bGammaS||_\infty$.
\end{lemma}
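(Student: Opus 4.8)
The plan is to reduce $y'_{a,b}$ to an explicit scalar expression in the data, isolate a deterministic ``signal'' of size $\tfrac12\Delta(\bCS)$, and control the remaining stochastic fluctuations by Bernstein-type concentration, being careful to distinguish fluctuations that get averaged over a cluster of size $\ge m$ (which then scale like $\|\bGammaS\|_\infty/\sqrt m$) from those that do not (which scale like $\sigma$).

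First I would use \eqref{eqn:force_dual_certificate} together with Lemma \ref{lem:L_matrix} to write, for $a\in\Gs{i}$,
\[
y_a = -\frac{1}{|\Gs{i}|}\sum_{c\in\Gs{i}}D_{a,c} + \frac{\bone^T\Db_i\bone}{2|\Gs{i}|^2} - \frac{y_T}{2|\Gs{i}|}.
\]
Since $\bGammaH$ is diagonal, $\sum_{c\in\Gs{i}}D_{a,c} = \gammaH{a} - |\Gs{i}|\,n^{-1}\sum_t X_{t,a}\bar X_{t,i}$ and $\bone^T\Db_i\bone = \sum_{c\in\Gs{i}}\gammaH{c} - |\Gs{i}|^2\,n^{-1}\sum_t \bar X_{t,i}^2$, where $\bar X_{t,i} := |\Gs{i}|^{-1}\sum_{c\in\Gs{i}}X_{t,c} = Z_{t,i}+\bar E_{t,i}$ and $\bar E_{t,i}:=|\Gs{i}|^{-1}\sum_{c\in\Gs{i}}E_{t,c}$. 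Substituting $X_{t,a}=Z_{t,i}+E_{t,a}$ and expanding, the $Z_{t,i}\bar E_{t,i}$ cross terms cancel, leaving
\begin{align*}
y_a = \frac{1}{2n}\sum_t Z_{t,i}^2 &+ \frac1n\sum_t Z_{t,i}E_{t,a} + \frac1n\sum_t E_{t,a}\bar E_{t,i} - \frac{1}{2n}\sum_t\bar E_{t,i}^2 \\
&- \frac{\gammaH{a}}{|\Gs{i}|} + \frac{1}{2|\Gs{i}|^2}\sum_{c\in\Gs{i}}\gammaH{c} - \frac{y_T}{2|\Gs{i}|}.
\end{align*}

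Next I would add the analogous formula for $y_b$ ($b\in\Gs{j}$) and $D_{a,b}=-n^{-1}\sum_t X_{t,a}X_{t,b}$. The quadratic-in-$Z$ terms combine into $\tfrac{1}{2n}\sum_t(Z_{t,i}-Z_{t,j})^2$ and the $Z$--$E$ terms into $\tfrac1n\sum_t(Z_{t,i}-Z_{t,j})(E_{t,a}-E_{t,b})$. A lower-tail chi-squared bound gives $\tfrac{1}{2n}\sum_t(Z_{t,i}-Z_{t,j})^2 \ge \tfrac12\EE(Z_i-Z_j)^2 - c\,\EE(Z_i-Z_j)^2\sqrt{\log d/n} \ge \tfrac12\Delta(\bCS) - c'\sigma\sqrt{\log d/n}$, using $\EE(Z_i-Z_j)^2 = \CS{i}{i}+\CS{j}{j}-2\CS{i}{j}\le 4\max_k\CS{k}{k}\le 4\sigma$; and $-\tfrac{y_T}{2|\Gs{i}|}-\tfrac{y_T}{2|\Gs{j}|}\ge -\tfrac{y_T}{m}$ since $y_T\ge 0$ and $|\Gs{i}|,|\Gs{j}|\ge m$. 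The means of $\tfrac1n\sum_t E_{t,a}\bar E_{t,i}$ and $-\tfrac1n\sum_t\bar E_{t,i}^2/2$ cancel $-\gammaH{a}/|\Gs{i}|$ and $\sum_c\gammaH{c}/(2|\Gs{i}|^2)$ up to the residual $\tfrac{\gammaS{a}-\gammaH{a}}{|\Gs{i}|}+\tfrac{1}{2|\Gs{i}|^2}\sum_{c\in\Gs{i}}(\gammaH{c}-\gammaS{c})$, which by \eqref{eqn:conc_pecok_gamma} is at most a constant times $\delta_{n,d}/m \le \|\bGammaS\|_\infty\sqrt{\log d/(nm)}$ (and likewise for $\Gs{j}$).

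It remains to control the centered fluctuations by Bernstein's inequality. Writing $E_{t,a}\bar E_{t,i}=|\Gs{i}|^{-1}E_{t,a}^2+|\Gs{i}|^{-1}E_{t,a}\sum_{c\ne a}E_{t,c}$, every term that contains a cluster-average ($\tfrac1n\sum_t E_{t,a}\bar E_{t,i}$ after centering, $\tfrac1n\sum_t\bar E_{t,i}^2$, and their $\Gs{j}$-analogues) is a normalized sum of sub-exponential variables with parameter $\cO(\|\bGammaS\|_\infty/\sqrt m)$ — the averaging over $\ge m$ independent noise coordinates shrinks the variance by a factor $m$ — so each contributes $\cO(\|\bGammaS\|_\infty\sqrt{\log d/(nm)})$; the two un-averaged cross-cluster terms $\tfrac1n\sum_t(Z_{t,i}-Z_{t,j})(E_{t,a}-E_{t,b})$ and $\tfrac1n\sum_t E_{t,a}E_{t,b}$ have parameter $\cO(\sqrt{\sigma\|\bGammaS\|_\infty})\le \cO(\sigma)$, contributing $\cO(\sigma\sqrt{\log d/n})$. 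Collecting everything, on the intersection of these events $y'_{a,b}\ge \tfrac12\Delta(\bCS)-\tfrac1m y_T-c_1\|\bGammaS\|_\infty\sqrt{\log d/(nm)}-c_2\sigma\sqrt{\log d/n}$; since each concentration bound holds with probability at least $1-c/d^5$ and there are $\cO(d^2)$ pairs $(a,b)$, a union bound yields the claim with probability $\ge 1-c_3/d^3$. The main obstacle is the bookkeeping: checking the two exact cancellations (the $Z\bar E$ cross terms and the $\gamma^*$ mean parts) and, in the concentration step, correctly identifying which fluctuations keep the $1/\sqrt m$ variance reduction, so that the bound splits precisely into a $\|\bGammaS\|_\infty\sqrt{\log d/(nm)}$ piece and a $\sigma\sqrt{\log d/n}$ piece rather than collapsing to the coarser $\sigma\sqrt{\log d/n}$ throughout.
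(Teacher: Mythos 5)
Your proposal is correct and follows essentially the same route as the paper's proof: expand $y'_{a,b}$ via the explicit formula for $\yb_{\Gs{i}}$, isolate the $\tfrac12\Delta(\bCS)$ signal from the $Z$ quadratic form, observe that the $\bGammaH$ terms offset the means of the $\bar\bE$ quadratic forms up to a $\delta_{n,d}/m$ residual, then control each centered fluctuation by sub-exponential/Hanson--Wright concentration (the paper routes all of these through its Lemma \ref{lem:conc_ei_hw} and Corollary \ref{lem:hanson_wright_gaussian2} rather than calling them ``Bernstein'', but the bounds are the same) and union-bound. The only cosmetic differences are that you combine the $Z$--$E$ cross terms into a single sum before bounding, and that you make the $Z\bar E$ cancellation inside $y_a$ explicit where the paper silently uses it; neither changes the argument.
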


\subsubsection*{Proof of Theorem \ref{thm:gblock_existence}}
Now that we have all the necessary lemmas, we can prove the main result. First, we select
\[
y_T' := \max_i ||\Qb_{i}^{\perp}(\bX)||_2,
\]
ensuring that all $\Qb_i(\bX)$ are positive semidefinite. By Lemma \ref{lem:dg_eigenvalue_concentration} and taking the union bound over all $i \in [K]$, 
\[
y_T' \leq c_1'||\bGammaS||_{\infty}\left(\frac{d}{n} +\sqrt{\frac{d}{n}} \right),
\]
with probability at least $1-c_2'/d$.

Furthermore, by taking the union bound over all $a$ and $b$ not in the same group and using Lemma \ref{lem:gblock_yab}, 
\begin{align*}
\min y'_{a,b} &\geq \frac{1}{2}\Delta(\bCS) - \frac{1}{2m}y_T - \frac{1}{2m}y_T - c_2''||\bGammaS||_{\infty}\sqrt{\frac{\log d}{n m }} - c_3''\sigma\sqrt{\frac{\log d}{n}}
\end{align*}
with probability at least $1 -c_1''/d$. Therefore, there exist constants $c_1$, $c_2$ and $c_3$ such that if we take $y_T = y_T'$ and 
\begin{align*}
\Delta\bCS \geq  c_1||\bGammaS||_{\infty}\left(\sqrt{\frac{\log d}{nm}}  + \sqrt{\frac{d}{nm^2}} +\frac{d}{nm}  \right) + c_2\sigma\sqrt{\frac{\log d}{n}},
\end{align*}
then with probability at least $1-c_3/d$, $\min_{a,b} y_{a,b} \geq 0$, demonstrating dual feasibility. Thus  with probability at least $1-c_3/d$, $y_T'$ gives a feasible solution to \eqref{eqn:kmeans_dual}, concluding the proof of the theorem.

\section{Extension of FORCE to Unknown K}
\label{sec:FORCE_unknown_K}
The motivation and insight behind the FORCE algorithm remains the same when $K$ is unknown, so we do not repeat the full discussion given in Section \ref{sec:FORCE_all}. When $K$ is not known a priori, it can sometimes be estimated simultaneously by exchanging the trace constraint for an appropriately chosen trace penalty. In the variable clustering setting, \citet{Bunea2016} show that \eqref{eqn:kmeans_adapt_sdp} recovers the optimal solution to \eqref{eqn:kmeans_nphard} without requiring $K$ to be known a priori at the same cluster separation rate as the setting where $K$ is known.

\subsubsection*{K-means Adaptive SDP}
We  refer to
\begin{equation}
\underset{\Ub}{\text{maximize}} \trip{- \Db - \hat \kappa \Ib}{\Ub}  ~~\text{s.t.}~~ \Ub \in \cC := \{ \Ub : \Ub \geq 0; \Ub \bone = \bone; \Ub \succeq 0\},
\label{eqn:kmeans_adapt_sdp}
\end{equation}
as the $K$-means Adaptive SDP due to its use in {\it adaptively} selecting the number of clusters and finding the optimal clustering simultaneously. The trace penalty is defined by a data driven tuning parameter $\hat \kappa$. It is beyond the scope of this work to consider the theoretical properties of \eqref{eqn:kmeans_adapt_sdp} for data clustering and the remainder of this section focuses on the variable clustering setting.

Like the case when $K$ is known, the dual SDP has the form
\begin{equation}
\begin{aligned}
& \underset{y_{a,b},y_a}{\text{minimize}} & & 2\sum_{a=1}^d y_a  \\
& \text{subject to} & & \sum_{a=1}^d y_a \Rb_a + \hat\kappa \Ib \succeq - \Db + \sum_{a \leq b} y_{a,b} \Ib_{a,b} \\
&&& y_{a,b} \geq 0 \text{ for all } a \leq b.
\end{aligned}
\label{eqn:kmeans_adapt_dual}
\end{equation}

\subsubsection*{Conversion to Eigenvalue Maximization}
The conversion to standard form and an eigenvalue maximization problem is nearly identical to the case when $K$ is known, so the derivation is omitted. Using the notation from Section \ref{sec:FORCE_all}, in standard form \eqref{eqn:kmeans_adapt_sdp} becomes 
\begin{align*}
&\underset{\Ub'}{\text{minimize}}~ \trip{\Db' + \hat\kappa \Ib'}{\Ub'}, \numberthis \label{eqn:kmeans_adpat_sdp_sf} \\
& \text{s.t.}~~ \Ub' \in \{\Ub' : \trip{\Ib'_{ab}}{\Ub'} = 0 \text{ for } a \leq b;~\trip{\Rb'_a}{\Ub'} = 2 \text{ for all } a;~  \Ub' \succeq 0\},
\end{align*}
with corresponding smoothed eigenvalue maximization problem 
\begin{align*}
&\underset{\Vb'}{\text{maximize}}~~ f_{\mu,\Fb'}(\Vb'), \\
&\text{s.t.}~~ \Vb' \in \cC_{\lambda} := \left\{\begin{array}{l l l}
\multirow{2}{*}{$\Vb':$} & \trip{\Ib'_{ab}}{\Vb'} = 0 \text{ for } a \leq b;&\trip{\Db'}{ \Vb'} = u_0;\\
&\trip{\Rb'_a}{\Vb'} = 2 \text{ for all } a.&
\end{array}\right\} \numberthis  \label{eqn:kmeans_adapt_lmin_form}
\end{align*}

\subsubsection*{Constraint Projection}
As in the case when $K$ is known, we must derive the projection onto $\cC_{\lambda}^{\perp}$. Solving the KKT conditions, we get the projected matrix
\begin{equation*}
\cP_{\cC_{\lambda}^{\perp}}(\Vb_*') = \left[
\begin{array}{c|c}
\Vb_* & \bZero \\
\hline
\bZero & \dvect(\Vb_*)
\end{array}
\right],
\Vb_* = \frac{1}{2}\left[\Ub + \Ub_{\cC} - \sum_{a=1}^d \Rb_a Y_a^* - \lambda^*\left(\Db + \hat\kappa \Ib \right) \right].
\end{equation*}

\subsubsection*{Existence of a Feasible Solution}
Clearly for \eqref{eqn:kmeans_adapt_sdp} $\Fb = \Ib$ is feasible, but unfortunately it is not strictly feasible so a different choice of $\Fb$ is required. Unlike in the case when $K$ is known, there is no trace constraint and therefore we can find an $\Fb$ such that for any $d$, $c_1^{-1} \leq \lmin{\Fb} \leq \lmax{\Fb} \leq c_1$, for some $c_1 \geq 1$. In particular, we can choose
\[
\Fb := \frac{1}{2}\Ib + \frac{1}{2d}\bone\bone^T,
\]
which clearly is strictly feasible for \eqref{eqn:kmeans_adapt_sdp}. Using the Sherman-Morrison formula, we obtain that
\[
\Fb^{-1} = 2\Ib - \frac{1}{d}\bone\bone^T.
\]
Furthermore, it is easy to see that $\frac{1}{2} \leq \lmin{\Fb} \leq \lmax{\Fb} \leq 2$. This shows that in the case where $K$ is unknown, we pay only a factor of 4 penalty for $\Ib$ not being strictly feasible. This is a sharp contrast to the fixed $K$ case, where the penalty is much higher.

\subsection{FORCE Dual Step}
In order to find a dual certificate, we first characterize the form of optimal solutions to \eqref{eqn:kmeans_adapt_dual}. Lemma \ref{lem:dual_char_adapt} characterizes all primal, dual optimal pairs for \eqref{eqn:kmeans_adapt_sdp}, just as Lemma \ref{lem:dual_sdp_sol} does for the case where $K$ is known a priori.
\begin{lemma}
\label{lem:dual_char_adapt}
The following are equivalent: (a) $\Bb^*$ is an optimal solution to \eqref{eqn:kmeans_sdp}, (b) every solution to \eqref{eqn:kmeans_dual} satisfies $y_{a,b} = 0$  for $a,b \in \Gs{i}$ and $\Qb_{\Gs{i},\Gs{i}}\bone = 0$  for all $i$, and (c) every solution to \eqref{eqn:kmeans_dual} satisfies $\yb_{\Gs{i}} = \Lb^{-1}_{\Gs{i},\Gs{i}}(-\Db_{\Gs{i},\Gs{i}}\bone - \hat\kappa\bone)$.
\end{lemma}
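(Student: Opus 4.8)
The plan is to mirror the proof of Lemma \ref{lem:dual_sdp_sol} (Theorem 4 of \citet{Iguchi2015a}) almost verbatim, since \eqref{eqn:kmeans_adapt_sdp} is the same as \eqref{eqn:kmeans_sdp} except that the trace constraint $\tr(\Ub)=K$ has been removed and replaced by the penalty term $-\hat\kappa\tr(\Ub)$ in the objective. The effect of this change on the Lagrangian is simply that the dual variable $y_T$ attached to the trace constraint is now frozen at the fixed value $\hat\kappa$: wherever the $K$-known dual \eqref{eqn:kmeans_dual} has a free variable $y_T$, the adaptive dual \eqref{eqn:kmeans_adapt_dual} has the constant $\hat\kappa$. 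So the first step is to write out the KKT / complementary slackness conditions for the primal--dual pair \eqref{eqn:kmeans_adapt_sdp}--\eqref{eqn:kmeans_adapt_dual}, identifying the dual slack matrix $\Qb := \sum_a y_a \Rb_a + \hat\kappa\Ib + \Db - \sum_{a\le b} y_{a,b}\Ib_{a,b}$ and noting that strong duality holds because $\Fb = \tfrac12\Ib + \tfrac1{2d}\bone\bone^T$ is strictly feasible (as established in the ``Existence of a Feasible Solution'' paragraph).

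Next, I would establish the equivalences. For (a)$\Rightarrow$(b): given that $\Bb^* = B(G^*)$ is primal optimal, complementary slackness $\trip{\Qb}{\Bb^*}=0$ together with $\Qb\succeq 0$, $\Bb^*\succeq 0$ forces $\Qb\Bb^* = 0$; since $\Bb^*$ is block-diagonal with $\Bb^*_{\Gs{i},\Gs{i}} = |\Gs{i}|^{-1}\bone\bone^T$ of rank one, $\Qb\Bb^*=0$ gives $\Qb_{\Gs{i},\Gs{i}}\bone = 0$ for each $i$ (reading off the relevant block columns), and then primal feasibility $B^*_{ab}>0$ for $a,b\in\Gs{i}$ combined with the primal complementary slackness on the $\Ub\ge 0$ constraint — i.e.\ $y_{a,b}\cdot U_{a,b}^* = 0$ — forces $y_{a,b}=0$ for $a,b\in\Gs{i}$. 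For (b)$\Rightarrow$(c): restricting the identity $\Qb_{\Gs{i},\Gs{i}}\bone = 0$ and using $y_{a,b}=0$ within blocks, the block equation becomes $\Db_{\Gs{i},\Gs{i}}\bone + (\sum_{a\in\Gs{i}} y_a\Rb_a)_{\Gs{i},\Gs{i}}\bone + \hat\kappa\bone = 0$; expanding $\Rb_a = \bone\be_a^T + \be_a\bone^T$ restricted to $\Gs{i}$ yields exactly the linear system $\Lb_{\Gs{i},\Gs{i}}\yb_{\Gs{i}} = -\Db_{\Gs{i},\Gs{i}}\bone - \hat\kappa\bone$ with $\Lb_{\Gs{i},\Gs{i}} = |\Gs{i}|\Ib + \bone\bone^T$, which by Lemma \ref{lem:L_matrix} (invertibility of $\Lb$) gives $\yb_{\Gs{i}} = \Lb_{\Gs{i},\Gs{i}}^{-1}(-\Db_{\Gs{i},\Gs{i}}\bone - \hat\kappa\bone)$. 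For (c)$\Rightarrow$(a): any $\yb$ satisfying the stated formula, padded with $y_{a,b}=0$ within blocks and $y_{a,b}$ chosen appropriately across blocks, is dual feasible and satisfies $\trip{\Qb}{\Bb^*}=0$ by construction (the within-block contributions vanish by the linear system, the cross-block contributions vanish because $\Bb^*$ is block-diagonal), so the duality gap is zero and $\Bb^*$ is optimal.

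One detail I would be careful about is that the statement as written says ``every solution to \eqref{eqn:kmeans_dual}'' rather than \eqref{eqn:kmeans_adapt_dual} — this is presumably a typo and I would state and prove it for the adaptive dual \eqref{eqn:kmeans_adapt_dual}; the argument that \emph{every} dual optimal solution (not just some) has this form relies on $\Qb\Bb^*=0$ holding for any optimal dual $\Qb$, which is exactly what complementary slackness under strong duality delivers, so this causes no real difficulty.

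The main obstacle — really the only non-routine point — is verifying that the constant $\hat\kappa$ genuinely plays the role of a fixed $y_T$ throughout, i.e.\ that dropping the trace constraint does not change the structure of $\Lb$ or the complementary slackness reasoning; once one checks that the $\Rb_a$, $\Ib_{a,b}$ matrices and the $\Ub\ge 0$ / $\Ub\succeq 0$ constraints are untouched, the entire proof of Lemma \ref{lem:dual_sdp_sol} transfers with $y_T \rightsquigarrow \hat\kappa$. I therefore expect this lemma to follow with essentially no new estimates, and I would present it by pointing to the proof of Lemma \ref{lem:dual_sdp_sol} and indicating precisely the one substitution that needs to be made.
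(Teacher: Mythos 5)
Your proposal matches the paper's own proof, which simply appeals to complementary slackness, rearrangement of the resulting linear system, and Theorem~4 of \citet{Iguchi2015a}; you have expanded that one-line argument into its constituent steps, with the single substitution $y_T \rightsquigarrow \hat\kappa$, which is exactly what the paper intends. The typo you flag (the lemma statement referencing \eqref{eqn:kmeans_sdp} and \eqref{eqn:kmeans_dual} rather than \eqref{eqn:kmeans_adapt_sdp} and \eqref{eqn:kmeans_adapt_dual}) is indeed present in the paper, and your reading of it is the correct one.
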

\begin{proof}
The proof of Lemma \ref{lem:dual_char_adapt} follows from complementary slackness and by re-arranging a system of linear equations. For more details, we direct the reader to \citet[Theorem 4]{Iguchi2015a}.
\end{proof}

Now, observe that in \eqref{eqn:kmeans_adapt_dual}, $\hat \kappa$ plays the same role as $y_T$ in \eqref{eqn:kmeans_dual}. Therefore the results and intuition regarding the dual construction still hold, but now there is no search over $y_T$. Instead we just invert a linear system and check feasibility.
The dual solution to \eqref{eqn:kmeans_adapt_dual} corresponding to  $G^*$ is
\begin{equation}
\label{eqn:FORCE_dual_solution_unknown_K}
\yb_{\Gs{i}}(\Db) = \Lb_i(-\Db_{i}\bone - \hat\kappa \bone), ~y_{a,b}(\Db) = \begin{cases}
0, \text{ if } a = b \\
y_a + y_b + D_{a,b}, \text{ o/w,}
\end{cases}
\end{equation}
where $\Lb_i = \Lb^{-1}_{\Gs{i},\Gs{i}}$ and $\Db_i =\Db_{\Gs{i},\Gs{i}}$. 
Just as the case when $K$ is known, we can use the explicit dual solution construction \eqref{eqn:FORCE_dual_solution_unknown_K} to certify the optimality.

\subsection{The FORCE Algorithm}
Algorithm \ref{alg:smoothed_primal_dual} requires only minor modification to be applied to \eqref{eqn:kmeans_adapt_sdp}. First, we apply RSS to \eqref{eqn:kmeans_adapt_sdp} instead of \eqref{eqn:kmeans_sdp}. Second we replace the certificate oracle $O_C$ with one based on \eqref{eqn:kmeans_adapt_dual}. Finally, we replace the rounding oracle $O_R$ with a procedure that can simultaneously cluster the projected iterate and select $K$. One such approach is to choose $K = \mathrm{round}(\tr(P_{\Fb}(\Vb_{s})))$ and then proceed by applying either CLINK or Lloyd's algorithm using the selected $K$. However, although this approach is theoretically justified, in practice one could consider using CLINK for the clustering step to obtain the entire solution path for all $K$, requiring only $\cO(d^2)$ arithmetic operations. The mean-squared error (MSE) of each clustering solution can be plotted against $K$ and the elbow method used to select $K$.  

\subsection{Theoretical Results}
Mirroring our results for fixed $K$, Theorem \ref{thm:kmeans_adapt_sdp_FORCE} gives a worst-case bound on the computational complexity of FORCE for \eqref{eqn:kmeans_adapt_sdp}. Proofs of the results in this section are nearly identical to those in Sections \ref{sec:FORCE_all} and \ref{sec:FORCE_dual_theory}.
\begin{theorem}
\label{thm:kmeans_adapt_sdp_FORCE}
For any certificate search frequency $h$, Algorithm \ref{alg:smoothed_primal_dual} applied to solving \eqref{eqn:kmeans_adapt_sdp} terminates in at most $\tilde{\cO}(d^{4}\epsilon^{-1})$ arithmetic operations, giving an $\epsilon$-optimal solution.
\end{theorem}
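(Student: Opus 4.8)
The plan is to follow the proof of Theorem~\ref{thm:force_running_time} almost verbatim, tracking the two places where dropping the trace constraint changes the relevant quantities. As there, the total running time of Algorithm~\ref{alg:smoothed_primal_dual} applied to \eqref{eqn:kmeans_adapt_sdp} splits into (i) the cost of RSS and (ii) the cost of the at most $T/h$ calls to $O_R$ and $O_C$, and I would bound each separately.

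For (i), the iteration count \eqref{eqn:rss_convergence_rate_T} depends on the Frobenius diameter $R$ of the feasible set and on $\|\Fb^{-1}\|_2$. Any $\Ub$ feasible for \eqref{eqn:kmeans_adapt_sdp} has nonnegative entries and unit row sums, so each $U_{i,j}\in[0,1]$ and $\|\Ub\|_F^2 \le \sum_{i,j}U_{i,j} = d$; hence $R = \cO(d)$ (the crude bound used for \eqref{eqn:kmeans_sdp}, which here could even be tightened to $\cO(\sqrt d)$). The essential improvement is in the choice of $\Fb$: because there is no trace constraint, we take the well-conditioned strictly feasible point $\Fb = \frac{1}{2}\Ib + \frac{1}{2d}\bone\bone^T$ from the ``Existence of a Feasible Solution'' subsection, for which $\frac{1}{2} \le \lmin{\Fb}\le\lmax{\Fb}\le 2$ and therefore $\|\Fb^{-1}\|_2 \le 2 = \cO(1)$, in contrast to $\|\Fb_{d,K}^{-1}\|_2 = \frac{d-1}{K-1}$ in the fixed-$K$ case. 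Substituting $R=\cO(d)$, $\|\Fb^{-1}\|_2^2 = \cO(1)$, the $\sqrt{\log d}$ factor, and the logarithmic start-up term into \eqref{eqn:rss_convergence_rate_T} gives $T = \tilde\cO(d\,\epsilon^{-1})$ gradient updates. Each update still costs $\cO(d^3)$: evaluating $\nabla f_{\mu,\Fb'}$ reduces, by the slack-variable spectrum identity analogous to \eqref{eqn:lmin_slackvars}, to an eigendecomposition of the $d\times d$ block, and the projection $\cP_{\cC_\lambda^\perp}$ has the closed form recorded in the ``Constraint Projection'' subsection. Hence RSS runs in $\tilde\cO(d^4\epsilon^{-1})$ operations.

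For (ii), the rounding oracle $O_R$ sets $K = \mathrm{round}(\tr(P_{\Fb}(\Vb_s)))$ and runs CLINK (or Lloyd's), costing $\cO(d^2)$. The key point for $O_C$ is that the fixed penalty $\hat\kappa$ in \eqref{eqn:kmeans_adapt_dual} plays the role of $y_T$, so there is no binary search over $y_T$: by Lemma~\ref{lem:dual_char_adapt} and \eqref{eqn:FORCE_dual_solution_unknown_K} one solves a single block-diagonal linear system for $\{\yb_{\Gs{i}}\}$, sets the off-diagonal $y_{a,b} = D_{a,b}+y_a+y_b$, and checks feasibility (nonnegativity of the $y_{a,b}$, and positive semidefiniteness of the diagonal blocks of $\Qb$ if not already guaranteed), all in $\cO(d^3)$ operations. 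This removes the $\cO(C\log C)$ factor present in the fixed-$K$ analysis, so regardless of $h$ the total oracle cost is at most $T\cdot\cO(d^3) = \tilde\cO(d^4\epsilon^{-1})$, which does not dominate. Combining (i) and (ii) gives the claimed $\tilde\cO(d^4\epsilon^{-1})$ bound.

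There is no real obstacle here: the argument is the fixed-$K$ proof with two favorable substitutions ($\|\Fb^{-1}\|_2$ falls from $\cO(d/K)$ to $\cO(1)$, and $O_C$ sheds its $C$-dependent search). The only parts needing a little care are confirming that the $\cO(d^3)$ per-iteration bounds for $\nabla f_{\mu,\Fb'}$ and $\cP_{\cC_\lambda^\perp}$ carry over to the modified constraint set --- which follows from the explicit projection and slack-variable identities already written down for \eqref{eqn:kmeans_adapt_sdp} --- and noting that since $O_C$ is now only $\cO(d^3)$, the bound holds for every search frequency $h$, including $h=1$.
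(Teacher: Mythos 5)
Your proof is correct and takes exactly the route the paper intends (the paper itself does not write out the argument, merely remarking that the proofs are ``nearly identical'' to those for the fixed-$K$ case): you substitute $\|\Fb^{-1}\|_2^2 = \cO(1)$ for the well-conditioned $\Fb = \tfrac12\Ib + \tfrac{1}{2d}\bone\bone^T$ into \eqref{eqn:rss_convergence_rate_T} to get $T=\tilde\cO(d\epsilon^{-1})$, keep the $\cO(d^3)$ per-iteration cost, and note that with $\hat\kappa$ fixed the certificate oracle sheds its $\cO(C\log C)$ search factor so the bound holds for every $h$. The only cosmetic remark is that your aside about tightening $R$ to $\cO(\sqrt d)$ is immaterial, since the stated $\tilde\cO(d^4\epsilon^{-1})$ already follows from the cruder $R=\cO(d)$ used for the fixed-$K$ theorem.
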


Next we address how to choose $\hat \kappa$ in practice. The choice is driven by the following consideration: when does the dual certificate exist and when is the SDP relaxation tight? These questions are intimately connected, and so similar to \citet{Bunea2016} we choose
\[
\hat\kappa := 5||\bGammaH||_{\infty}\left(\frac{d}{n} + \sqrt{\frac{d}{n}} \right)
\]
for variable clustering in $G$-Latent models when $K$ is unknown. As is made clear below, the choice of constant in $\hat \kappa$ could be altered, but we do not explore whether or not some other choice is preferable. Importantly $\hat \kappa$ is data-driven in the sense that it's selection requires {\it no knowledge} of the parameters of the generating distribution.

\begin{theorem}
\label{thm:gblock_adapt_existence}
Consider the variable clustering setting under the $G$-Latent model and assume $\log d \leq p_0 n$, where $p_0$ is the constant from Section \ref{sec:preliminaries}. If $\hat\kappa = 5||\bGammaH||_{\infty}\left(d/n + \sqrt{d/n} \right)$, there exist constants $c_1$, $c_2$ and $c_3$ such that if
\[
\Delta\bCS \geq  c_1||\bGammaS||_{\infty}\left(\sqrt{\frac{\log d}{nm}}  + \sqrt{\frac{d}{nm^2}} +\frac{d}{nm}  \right) + c_2\sigma\sqrt{\frac{\log d}{n}},
\]
then with probability at least $1-c_3/d$ the FORCE Dual Certificate exists at $G^*$, where $\sigma = \max_i \CS{i}{i} + ||\bGammaS||_\infty$.
\end{theorem}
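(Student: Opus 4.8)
The plan is to mirror the proof of Theorem~\ref{thm:gblock_existence} almost verbatim, using the observation already noted in the text that in the dual~\eqref{eqn:kmeans_adapt_dual} the fixed penalty $\hat\kappa$ plays exactly the role of the free variable $y_T$ in~\eqref{eqn:kmeans_dual}. By Lemma~\ref{lem:dual_char_adapt}, exhibiting a feasible solution of~\eqref{eqn:kmeans_adapt_dual} of the form~\eqref{eqn:FORCE_dual_solution_unknown_K} with matching objective value certifies that $\Bb^* = B(G^*)$ is optimal, hence also that~\eqref{eqn:kmeans_adapt_sdp} is tight. Since $\yb_{\Gs{i}} = \Lb_i(-\Db_i\bone - \hat\kappa\bone)$ is by Lemma~\ref{lem:L_matrix} the unique choice enforcing $\Qb_{\Gs{i},\Gs{i}}\bone = 0$, and the within-group off-diagonal multipliers are set to $0$, feasibility of~\eqref{eqn:FORCE_dual_solution_unknown_K} reduces to two events: (i) $\Qb_i(\Db) := \Db_{\Gs{i},\Gs{i}} + \sum_{a\in\Gs{i}} y_a\Rb_a + \hat\kappa\Ib \succeq 0$ for every $i$, and (ii) $y_{a,b}(\Db) = D_{a,b} + y_a + y_b \geq 0$ for every $a\in\Gs{i}$, $b\in\Gs{j}$ with $i\neq j$. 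The only structural departure from the fixed-$K$ argument is that there is no binary search over $y_T$: the value is pinned at $\hat\kappa$, so this single value must satisfy (i) and (ii) simultaneously with high probability.

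For (i) I would invoke Lemma~\ref{lem:q_eigenvalue_concentration} — a purely deterministic identity, unaffected by the change of problem — which gives $\lmin{\Qb_i(\Db)} = \hat\kappa + \min\{-\hat\kappa, \lmin{\Qb_i^\perp(\bX)}\}$; this is nonnegative precisely when $\hat\kappa \geq \|\Qb_i^\perp(\bX)\|_2$ for all $i$. Lemma~\ref{lem:dg_eigenvalue_concentration} combined with a union bound over $i\in[K]$ (using $K\leq d$) bounds $\max_i\|\Qb_i^\perp(\bX)\|_2$ by $c\,\|\bGammaS\|_\infty(d/n + \sqrt{d/n})$ with probability at least $1 - c'/d$. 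To see that the chosen $\hat\kappa = 5\|\bGammaH\|_\infty(d/n + \sqrt{d/n})$ dominates this, I would use the PECOK rate~\eqref{eqn:conc_pecok_gamma}: on its high-probability event $\|\bGammaS\|_\infty \leq (1 - p_1\sqrt{\log d/n})^{-1}\|\bGammaH\|_\infty$, which under $\log d\leq p_0 n$ is a constant multiple of $\|\bGammaH\|_\infty$; the numerical constant $5$ is then chosen to absorb $c$ times that ratio. This is exactly the flexibility the text alludes to before the theorem statement: any constant in $\hat\kappa$ large enough to dominate the concentration constant of Lemma~\ref{lem:dg_eigenvalue_concentration} would work.

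For (ii) I would apply Lemma~\ref{lem:gblock_yab} with $y_T$ replaced by $\hat\kappa$ and union bound over the at most $d^2$ cross-group pairs, yielding $\min_{a,b} y_{a,b} \geq \frac{1}{2}\Delta(\bCS) - \frac{1}{m}\hat\kappa - c_1\|\bGammaS\|_\infty\sqrt{\log d/(nm)} - c_2\sigma\sqrt{\log d/n}$ with probability at least $1 - c_3/d$. Since $\frac{1}{m}\hat\kappa \lesssim \|\bGammaS\|_\infty(d/(nm) + \sqrt{d/(nm^2)})$ (again passing from $\|\bGammaH\|_\infty$ to $\|\bGammaS\|_\infty$ via~\eqref{eqn:conc_pecok_gamma}), each subtracted term is of the form appearing in the hypothesized lower bound on $\Delta(\bCS)$, so choosing the constants in that bound large enough forces $\min_{a,b} y_{a,b} \geq 0$. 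Intersecting the events from (i), (ii), and~\eqref{eqn:conc_pecok_gamma} and relabelling constants gives the stated probability $1 - c_3/d$, and on this event~\eqref{eqn:FORCE_dual_solution_unknown_K} is a feasible, hence optimal, dual solution.

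The main difficulty is not conceptual but is the bookkeeping forced by the data-driven penalty: one must track the interaction between the fixed numerical constant $5$ in $\hat\kappa$, the unspecified constant of Lemma~\ref{lem:dg_eigenvalue_concentration}, and the $\|\bGammaH\|_\infty$-versus-$\|\bGammaS\|_\infty$ discrepancy controlled by~\eqref{eqn:conc_pecok_gamma}, and verify that the resulting two-sided requirement on $\hat\kappa$ — bounded below by a constant times $\|\bGammaS\|_\infty(d/n + \sqrt{d/n})$ for (i) and above by a constant times $m\,\Delta(\bCS)$ for (ii) — is consistent. It is, precisely because the hypothesis on $\Delta(\bCS)$ already contains the terms $d/(nm)$ and $\sqrt{d/(nm^2)}$ with adjustable constants, so that $\hat\kappa/m$ can be absorbed.
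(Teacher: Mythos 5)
Your proposal is correct and takes essentially the same approach as the paper, which itself only remarks that the proofs in this section are ``nearly identical'' to those of Sections 3 and 4 and leaves the reconstruction implicit. You correctly identify the one structural change (no binary search; the fixed data-driven $\hat\kappa$ must simultaneously dominate $\max_i\|\Qb_i^\perp(\bX)\|_2$ from below and be absorbed into the $\Delta(\bCS)$ lower bound from above), you correctly route the $\|\bGammaH\|_\infty$-versus-$\|\bGammaS\|_\infty$ discrepancy through~\eqref{eqn:conc_pecok_gamma}, and the union-bound accounting matches the paper's.
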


\begin{remark}
The additional cost of constraining $K$ to be fixed is imposed directly by the trace constraint. It is somewhat surprising that we should obtain a significantly better worst-case complexity bound, for certain K, when we have {\it less information} about the structure of the problem at hand. For this reason we suspect it may not be impossible to obtain the same worst case bound if we impose a fixed $K$ in the problem formulation.
\end{remark}

The adaptive formulation, \eqref{eqn:kmeans_adapt_sdp}, can also be applied to data clustering, and we suspect the FORCE algorithm may have strong theoretical properties in that setting when $K$ is unknown, but that analysis is beyond the scope of this work.

\section{Numerical Results}
\label{sec:numerical}
We evaluate FORCE by validating Theorem \ref{thm:gblock_existence} empirically, comparing the FORCE primal step to other methods for solving \eqref{eqn:kmeans_sdp}, and comparing the performance of FORCE with clustering heuristics. Due to space constraints we focus on the case where $K$ is known, but similar results are obtained for $K$ unknown. Note that the third evaluation captures a combination of the properties of \eqref{eqn:kmeans_sdp} and of FORCE, since it is an inexact solver for the SDP.


\subsubsection*{Implementation Details} We implement FORCE in R and because FORCE is not a traditional primal-dual algorithm and does not make dual updates, we use an early stopping rule as the termination condition. Specifically, for a given $s$ and $\delta$, if at any iteration $t$, 
\[
\max_{u\in[t-s+1,t]} \frac{f_{\mu,\Fb}(\Vb_{u}) - f_{\mu,\Fb}(\Vb_{t-s})}{f_{\mu,\Fb}(\Vb_{t-s})} <  \delta,
\]
then the algorithm terminates. For all experiments we use $(s,\delta)=(100,10^{-4})$. An adaptive restart rule is used for the accelerated PGD weighting coefficients \citep{ODonoghue2015}. In practice, we also found that the warm-start step of RSS was unnecessary to achieve good performance, and the following simple heuristic gave at least as good results in terms of the final output: let $\Ub_0 = \frac{1}{d}B(\cK(\Db,K)) + \frac{d-1}{d}\Fb$, then perform accelerated PGD on $f_{\mu,\Fb}$ starting with initial iterate $\Vb_0 := \Ub_0$ for a fixed number of iterations $N$ to obtain $\Ub_1 := P_{\Fb}(\Vb_N)$. The matrix $\Ub_1$ is then used in place of the original warm-start step of RSS. We found that this heuristic produced in practice a matrix $\Ub_1$ satisfying the warm-start requirements of RSS.

\subsubsection*{Benchmarking Framework}
To benchmark the algorithms we use a Dell XPS 9570 with an i7-8750H processor. All algorithms are limited to 6 computational threads and the R build is linked against Intel's MKL BLAS implementation to ensure a fair comparison with MATLAB.

We compare FORCE with several alternatives. Primarily this shows how several alternative algorithms scale. We compare against a MATLAB implementation using MOSEK \citep{Andersen2000} as the solver, a MATLAB implementation using SDPNAL+ \citep{Sun2017}, and an ADMM algorithm to solve \eqref{eqn:kmeans_sdp} due to \citet{Ames2014}.\footnote{The authors have made the code available on-line at \url{http://bpames.people.ua.edu/software.html}} For short, we refer to these algorithms as MOSEK, SDPNAL+, and ADMM respectively. MOSEK and SDPNAL+ are run using the default options and ADMM is run using the same options as in \citet{Ames2014}. FORCE refers to Algorithm \ref{alg:smoothed_primal_dual} and FORCE-P denotes just the primal step of FORCE with no dual certificate search.

\subsubsection*{Generative Model}
Recall that the generating distribution of a $G$-Latent model with $d$ observed variables and $K$ latent factors can be described in terms of $(G^*,\bThetaS,\bGammaS)$. We first select a graph structure for $\Zb$ and then once the graph structure is constructed,  the latent precision matrix is defined as $\bThetaS = \rho\Wb + (|\lmin{\Wb}| + 0.2)\Ib$, where $\Wb$ is the adjacency matrix of the generated graph. We take $\bGammaS = \gamma\Ib$ for some constant $\gamma$ to be specified later. Because we work in the high-dimensional regime, we generate $n=d$ samples for each simulation.

Throughout we use the scale-free generative model to construct the dependency structure amongst the latent variables $\Zb$. It is a model for network data, whose degree distribution follows a power law and we generate the graph one node at a time, starting with a 2 node chain. For nodes $s \in \{3,\dots,K\}$, node $s$ is added and one edge is added between $s$ and one of the $s-1$ previous nodes. At each step, if $k_i$ denotes the current degree of node $i$ in the graph, the probability that node $t$ and node $i$ are connected is $p_i = k_i / (\sum_i k_i)$. By construction, such a graph always has $K$ edges.\footnote{Similar results can be obtained for other graph structures, such as Band or Hub graphs.}

\subsubsection*{Dual Certificate}
To assess the effect of noise on the existence of the dual certificates, \eqref{eqn:kmeans_dual} and \eqref{eqn:kmeans_adapt_dual}, we select two designs (one for $d=250$ and $d=500$) and vary the level of $\gamma$. Figure \ref{fig:force_dual_certificates} contains the results and we can observe a sharp phase transition as $\gamma$ increases. Interestingly only slightly less noise is required for the certificate to exist when $K$ is not known versus when $K$ is fixed a priori. Another interpretation of Figure \ref{fig:force_dual_certificates} is that it shows the sharp phase transition under which the P-W SDP is tight for $G$-Latent models as a function of noise $\gamma$.

\begin{figure}[H]
\centering
\begin{tabular}{l l}
\includegraphics[width=0.5\textwidth]{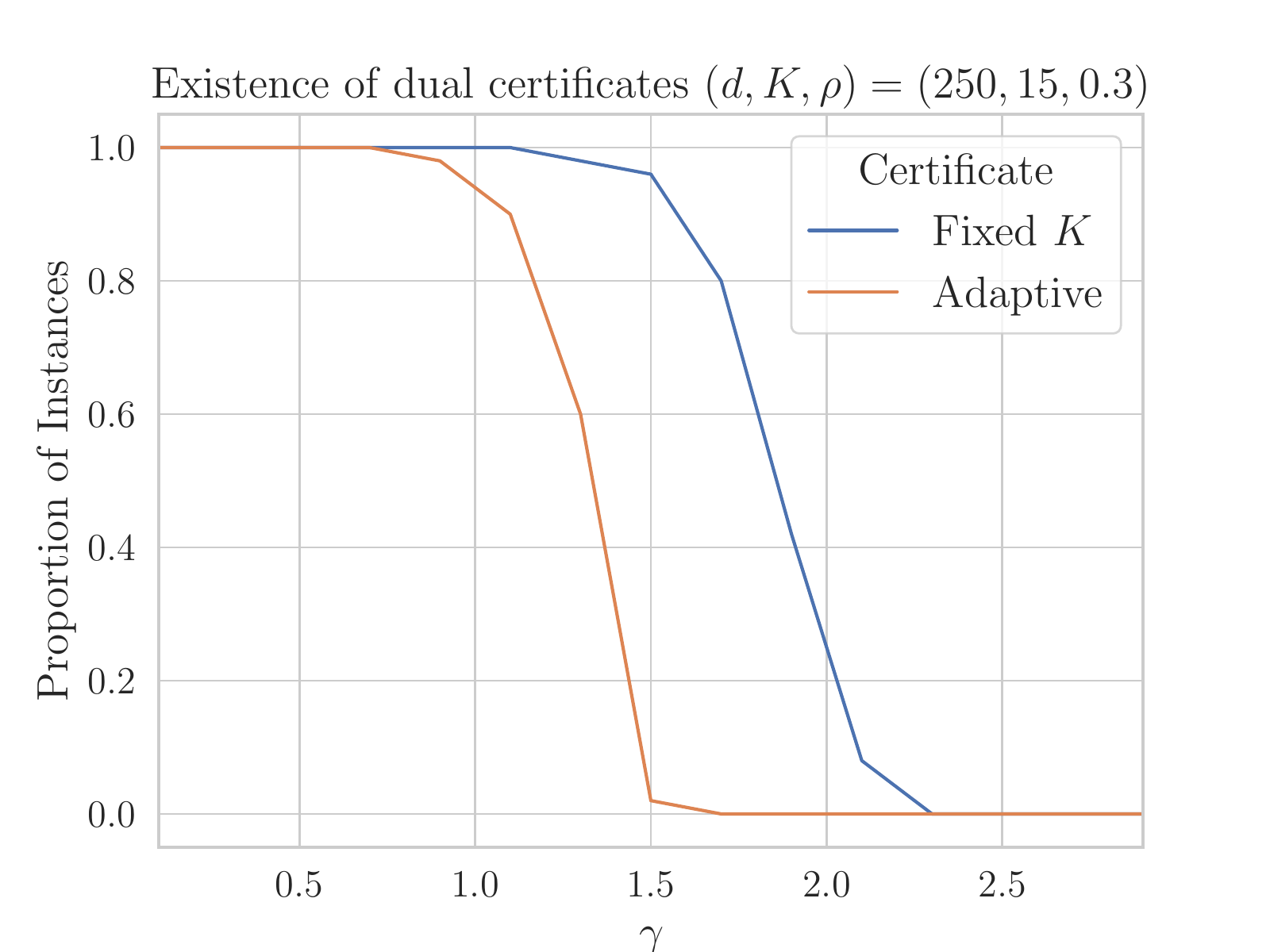} & \includegraphics[width=0.5\textwidth]{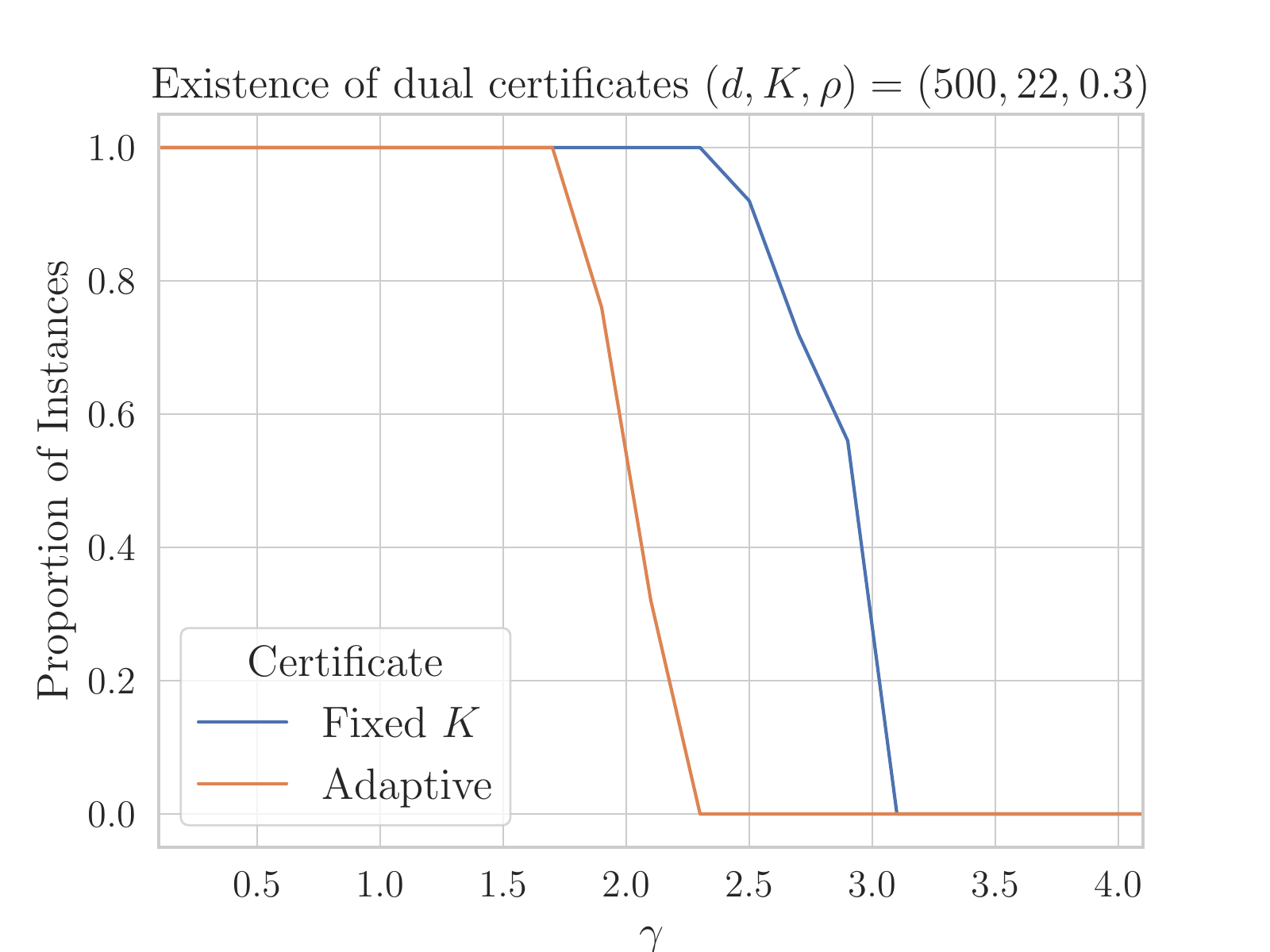}
\end{tabular}
\caption{Proportion of randomly generated instances for which a certificate exists at $G^*$.}
\label{fig:force_dual_certificates} 
\end{figure}

\subsection{FORCE vs. other algorithms for solving the P-W SDP}
\paragraph{Low-Dimensional Problem Sizes.}
The goal of the simulations in lower dimensions is to evaluate the scaling of the various alternatives. We vary both $d$ and the level of noise $\gamma$, evaluating six different settings. For each setting, we generate 100 random instances and run each algorithm. Relative error is measured in terms of the objective value of \eqref{eqn:kmeans_sdp} and we assume $v^* = \trip{-\Db}{B(G^*)}$.
Table \ref{tab:benchmark_lowd} gives the results, showing that even for $d=120$ MOSEK (a traditional interior point solver) is computationally expensive.

\begin{table}[htbp]
\centering
\caption{Benchmark results for low dimensional designs comparing FORCE and FORCE-P with MOSEK, SDPNAL+ and ADMM.}
\label{tab:benchmark_lowd}
\begin{tabular}{ l l l l l}
\toprule
$(d,k,\rho,\gamma)$ & \multicolumn{2}{c}{$(50,5,0.3,0.3)$} & \multicolumn{2}{c}{$(50,5,0.3,1.0)$}   \\
Algorithm & Rel. Err. & Time (sec)& Rel. Err. & Time (sec)\\ 
\midrule
MOSEK & $2.31\times 10^{-8}$ & $4.13\times 10^{-1}$ s & $3.82\times 10^{-8}$ & $4.24\times 10^{-1}$ s \\
SDPNAL+ & $5.55\times 10^{-7}$ & $3.97\times 10^{-1}$ s & $4.91\times 10^{-7}$ & $3.87\times 10^{-1}$ s \\
ADMM & $2.10\times 10^{-7}$ & $2.08\times 10^{-2}$ s & $3.07\times 10^{-7}$ & $3.29\times 10^{-2}$ s \\
FORCE & $0.00\times 10^{0}$ & $1.31\times 10^{-3}$ s & $1.10\times 10^{-8}$ & $1.97\times 10^{-2}$ s \\
FORCE-P & $6.86\times 10^{-3}$ & $9.18\times 10^{-2}$ s & $9.26\times 10^{-3}$ & $1.06\times 10^{-1}$ s \\
\midrule
\midrule
$(d,k,\rho,\gamma)$ & \multicolumn{2}{c}{$(50,5,0.3,0.3)$} & \multicolumn{2}{c}{$(50,5,0.3,1.0)$}   \\
Algorithm & Rel. Err. & Time (sec)& Rel. Err. & Time (sec)\\ 
\midrule
MOSEK & $1.04\times 10^{-8}$ & $3.42\times 10^{0}$ s & $2.56\times 10^{-8}$ & $3.58\times 10^{0}$ s \\
SDPNAL+ & $7.77\times 10^{-7}$ & $1.38\times 10^{0}$ s & $2.20\times 10^{-6}$ & $1.37\times 10^{0}$ s \\
ADMM & $1.42\times 10^{-7}$ & $7.48\times 10^{-2}$ s & $2.69\times 10^{-7}$ & $9.14\times 10^{-2}$ s \\
FORCE & $0.00\times 10^{0}$ & $2.65\times 10^{-2}$ s & $0.00\times 10^{0}$ & $8.85\times 10^{-2}$ s \\
FORCE-P & $7.60\times 10^{-3}$ & $3.54\times 10^{-1}$ s & $9.96\times 10^{-3}$ & $4.12\times 10^{-1}$ s \\
\midrule
\midrule
$(d,k,\rho,\gamma)$ & \multicolumn{2}{c}{$(50,5,0.3,0.3)$} & \multicolumn{2}{c}{$(50,5,0.3,1.0)$}   \\
Algorithm & Rel. Err. & Time (sec)& Rel. Err. & Time (sec)\\ 
\midrule
MOSEK & $2.17\times 10^{-8}$ & $1.83\times 10^{1}$ s & $3.96\times 10^{-8}$ & $1.83\times 10^{1}$ s \\
SDPNAL+ & $7.69\times 10^{-8}$ & $3.28\times 10^{0}$ s & $1.09\times 10^{-6}$ & $3.01\times 10^{0}$ s \\
ADMM & $1.07\times 10^{-7}$ & $4.41\times 10^{-2}$ s & $1.43\times 10^{-7}$ & $4.97\times 10^{-2}$ s \\
FORCE & $0.00\times 10^{0}$ & $4.62\times 10^{-2}$ s & $0.00\times 10^{0}$ & $1.84\times 10^{-1}$ s \\
FORCE-P & $7.25\times 10^{-3}$ & $7.18\times 10^{-1}$ s & $1.12\times 10^{-2}$ & $8.60\times 10^{-1}$ s \\
\bottomrule
\end{tabular}
\end{table}

\paragraph{High-Dimensional Problem Sizes.}
For higher dimensional setups, $d=500$, we find that both MOSEK and SDPNAL+ require too much memory and computational resources to run the simulations on our benchmarking platform (a high-end consumer PC), and therefore we compare only ADMM, FORCE and FORCE-P. We compare both high $(\gamma=3.0)$ and low $(\gamma=1.0)$ noise setups for $K=9,22,50,100$ which range from $\cO(\log d)$ to $\cO(d)$. For each design, 100 random instances were generated and the results are reported in Table \ref{tab:benchmark_highd}. To compute relative error we assume $v^* = \trip{-\Db}{B(G^*)}$.

When they converge, all three methods exhibit similar running times. However for designs closer to the threshold above which exact recovery is possible, ADMM often fails to converge. By comparison, FORCE and FORCE-P converge on all instances encountered during benchmarking. Table \ref{tab:benchmark_highd} shows that FORCE always has 0 error, i.e. that it achieves perfect recovery. From this we conclude FORCE-P finds a solution to \eqref{eqn:kmeans_sdp} that is ``close-enough'' to the optimal solution that by rounding and finding a dual certificate, FORCE achieves exact recovery.

Table \ref{tab:benchmark_highd} also reveals that as $K$ increases, it takes longer for FORCE to solve \eqref{eqn:kmeans_sdp}, which aligns with our intuition as the effective sample size per group is decreasing in $K$. This runs contrary, however, to the predictions of Theorem \ref{thm:force_running_time}. One reason Theorem \ref{thm:force_running_time} may be overly pessimistic is that the rate also depends on $||\Vb_0-\Vb^*||_\Fb$, the distance between the initial and optimal iterates. Our bound on this quantity may be too pessimistic in practice as we use a heuristic clustering to construct $\Vb_0$ and the heuristic should output a closer to optimal solution for smaller $K$ (indeed Figure \ref{fig:force_vs_heuristics} confirms this intuition).

\begin{table}[htbp]
\centering
\caption{Benchmark results for high dimensional designs comparing ADMM, FORCE and FORCE-P. $\cF$ is the event ADMM converges on a problem instance.}
\label{tab:benchmark_highd}
\begin{tabular}{ l l l l l l l}
\toprule
Alg.  & $(d,k,\rho,\gamma)$& Rel. Err. & Rel. Err.$|\cF$ & Conv. & Time (sec) \\ 
\midrule
ADMM & $(500,9,0.3,1.0)$ & $3.71\times 10^{-7}$ & $3.71\times 10^{-7}$ & $100.0\%$ & $2.39\times 10^{0}$ \\
FORCE &  & $0.00\times 10^{0}$ & $0.00\times 10^{0}$ & $100.0\%$ & $3.20\times 10^{-1}$ \\
FORCE-P &  & $9.12\times 10^{-3}$ & $9.12\times 10^{-3}$ & $100.0\%$ & $1.77\times 10^{1}$ \\
\midrule[0.1pt]
ADMM & $(500,9,0.3,3.0)$ & $5.38\times 10^{-7}$ & $5.38\times 10^{-7}$ & $96.0\%$ & $3.41\times 10^{0}$ \\
FORCE &  & $0.00\times 10^{0}$ & $0.00\times 10^{0}$ & $100.0\%$ & $1.29\times 10^{0}$ \\
FORCE-P &  & $2.39\times 10^{-2}$ & $2.40\times 10^{-2}$ & $100.0\%$ & $2.34\times 10^{1}$ \\
\midrule[0.1pt]
ADMM & $(500,22,0.3,1.0)$ & $1.86\times 10^{-7}$ & $1.86\times 10^{-7}$ & $100.0\%$ & $3.24\times 10^{0}$ \\
FORCE &  & $0.00\times 10^{0}$ & $0.00\times 10^{0}$ & $100.0\%$ & $4.03\times 10^{0}$ \\
FORCE-P &  & $1.70\times 10^{-2}$ & $1.70\times 10^{-2}$ & $100.0\%$ & $2.34\times 10^{1}$ \\
\midrule[0.1pt]
ADMM & $(500,22,0.3,3.0)$ & $7.99\times 10^{-7}$ & $7.99\times 10^{-7}$ & $56.0\%$ & $5.90\times 10^{0}$ \\
FORCE &  & $0.00\times 10^{0}$ & $0.00\times 10^{0}$ & $100.0\%$ & $8.48\times 10^{0}$ \\
FORCE-P &  & $2.29\times 10^{-2}$ & $2.29\times 10^{-2}$ & $100.0\%$ & $1.99\times 10^{1}$ \\
\midrule[0.1pt]
ADMM & $(500,50,0.3,1.0)$ & $2.96\times 10^{-8}$ & $2.96\times 10^{-8}$ & $96.0\%$ & $3.13\times 10^{0}$ \\
FORCE &  & $0.00\times 10^{0}$ & $0.00\times 10^{0}$ & $100.0\%$ & $1.14\times 10^{1}$ \\
FORCE-P &  & $1.69\times 10^{-2}$ & $1.72\times 10^{-2}$ & $100.0\%$ & $2.37\times 10^{1}$ \\
\midrule[0.1pt]
ADMM & $(500,50,0.3,3.0)$ & $5.84\times 10^{-8}$ & $5.84\times 10^{-8}$ & $64.0\%$ & $3.33\times 10^{0}$ \\
FORCE &  & $0.00\times 10^{0}$ & $0.00\times 10^{0}$ & $100.0\%$ & $1.46\times 10^{1}$ \\
FORCE-P &  & $2.11\times 10^{-2}$ & $1.99\times 10^{-2}$ & $100.0\%$ & $2.45\times 10^{1}$ \\
\midrule[0.1pt]
ADMM & $(500,100,0.3,1.0)$ & $1.32\times 10^{-8}$ & $1.32\times 10^{-8}$ & $20.0\%$ & $3.53\times 10^{0}$ \\
FORCE &  & $0.00\times 10^{0}$ & $0.00\times 10^{0}$ & $100.0\%$ & $1.56\times 10^{1}$ \\
FORCE-P &  & $1.16\times 10^{-2}$ & $1.08\times 10^{-2}$ & $100.0\%$ & $2.65\times 10^{1}$ \\
\midrule[0.1pt]
ADMM & $(500,100,0.3,3.0)$ & N/A & N/A & $0.0\%$ & N/A \\
FORCE &  & $0.00\times 10^{0}$ & N/A & $100.0\%$ & $2.57\times 10^{1}$ \\
FORCE-P &  & $2.08\times 10^{-2}$ & N/A & $100.0\%$ & $2.88\times 10^{1}$ \\
\bottomrule
\end{tabular}
\end{table}

\subsection{FORCE and the P-W SDP vs. Heuristic Methods}
Lastly, we compare FORCE applied to the P-W SDP to heuristic methods to cluster the data. Heuristic methods are typically fast, and if they were to offer similar performance in practice, it may not make sense to solve the P-W SDP using FORCE or any other algorithm. As we show in the experiments described below, this is not the case. We compare against Lloyd's algorithm with kmeans++ initialization as this gave better results than either CLINK or Lloyd's algorithm with random initialization. We consider the design $(d,K,\rho)=(500,22,0.3)$ and study the effect of $\gamma$ on the performance gap of FORCE and the P-W SDP versus heuristic methods.

First we compare clustering applied $\Vb_T$, the final iterate output by FORCE-P, to clustering applied to either $\Db = \bSigmaH-\bGammaH$ or $\bSigmaH$. Denoting by $\cK(\Mb,K)$ the algorithm that takes matrix $\Mb$ and runs Lloyd's algorithm with kmeans++ initialization returning a partition $\hat G$. The metrics used to evaluate the output are $d_1(\hat G, G^*) = \II[\hat G = G^*]$ and
$d_2(\hat G,G^*) = n^{-1}\sum_{i=1}^K \max_j |\hat G_i \cap G^*_j|,$
which captures the number of correctly assigned variables. 

Row one in Figure \ref{fig:force_vs_heuristics} shows $\EE[d_i(\cK(\Mb,K), G^*)]$ plotted against $\gamma$ for $\Mb=\bSigmaH,~\bSigmaH-\bGammaH,~ P_{\Fb}(\Vb_T)$;  the expectation is both with respect to the generating model and the randomness of $\cK$. For each level of $\gamma$, 50 random instances were generated, and because $\cK$ is a random algorithm, it is run multiple times on each instance. The average across both instances and runs of $\cK$ is reported. One trend of particular importance is that as the level of noise increases, the expected exact recovery rate for either of the alternative candidate heuristics goes to zero.

A natural follow-up question is whether or not, despite the {\it expected} recovery rate going to zero as $\gamma$ increases, if we run $\cK$ many times using using $\Mb=\bSigmaH,~\bSigmaH-\bGammaH$ and select the best clustering found, can we do just as well as FORCE? If yes, then running FORCE (and indeed solving the P-W SDP relaxation in general) offers little benefit over running $\cK$ many times and then attempting to certify the best clustering found.  To answer this question, we denote by $\cK\cB(\Mb,K,N)$ the algorithm which runs $\cK$, defined above, $N$ times and returns the best clustering found in terms of SDP objective value. We compare this to the output of FORCE and choose $N$ to be the maximum of 100 and the number of times FORCE calls a clustering algorithm as a sub-routine on that problem instance. The results in terms of $\EE[d_i(\cK\cB(\Mb,K,N), G^*)]$ are plotted versus gamma in row two of Figure \ref{fig:force_vs_heuristics}; as before, 50 random instances were generated for each level of $\gamma$. Examining the plots we can conclude that solving the SDP not only improves the percentage of points clustered correctly on average, but that it is essential to achieving exact recovery. Using heuristic methods alone cannot achieve the same performance as FORCE or other algorithms that leverage the P-W SDP relaxation.

\begin{figure}[h]
\centering
\hspace*{-1em}
\begin{tabular}{l l}
\includegraphics[width=0.5\textwidth]{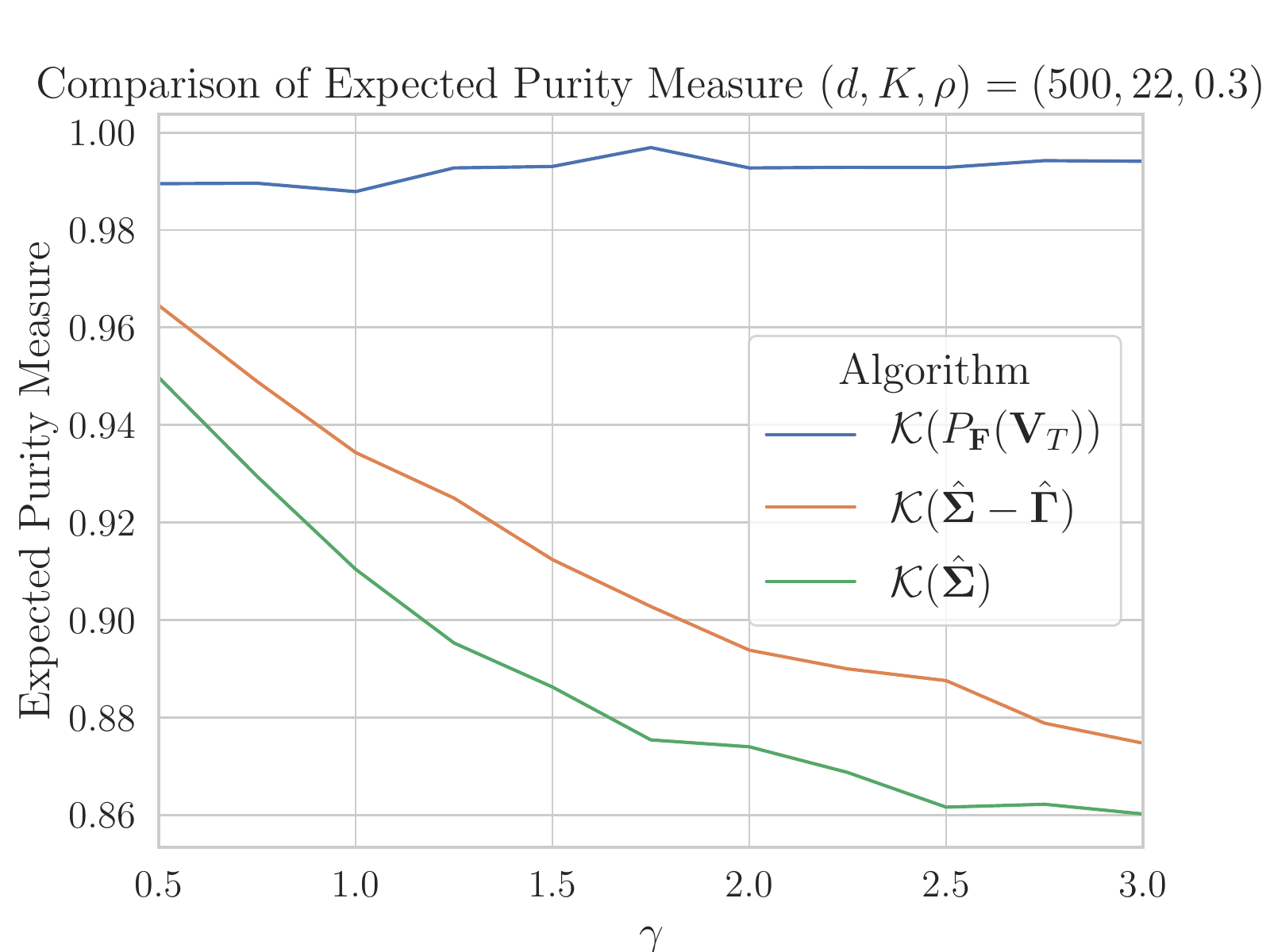} & \hspace*{-1em}\includegraphics[width=0.5\textwidth]{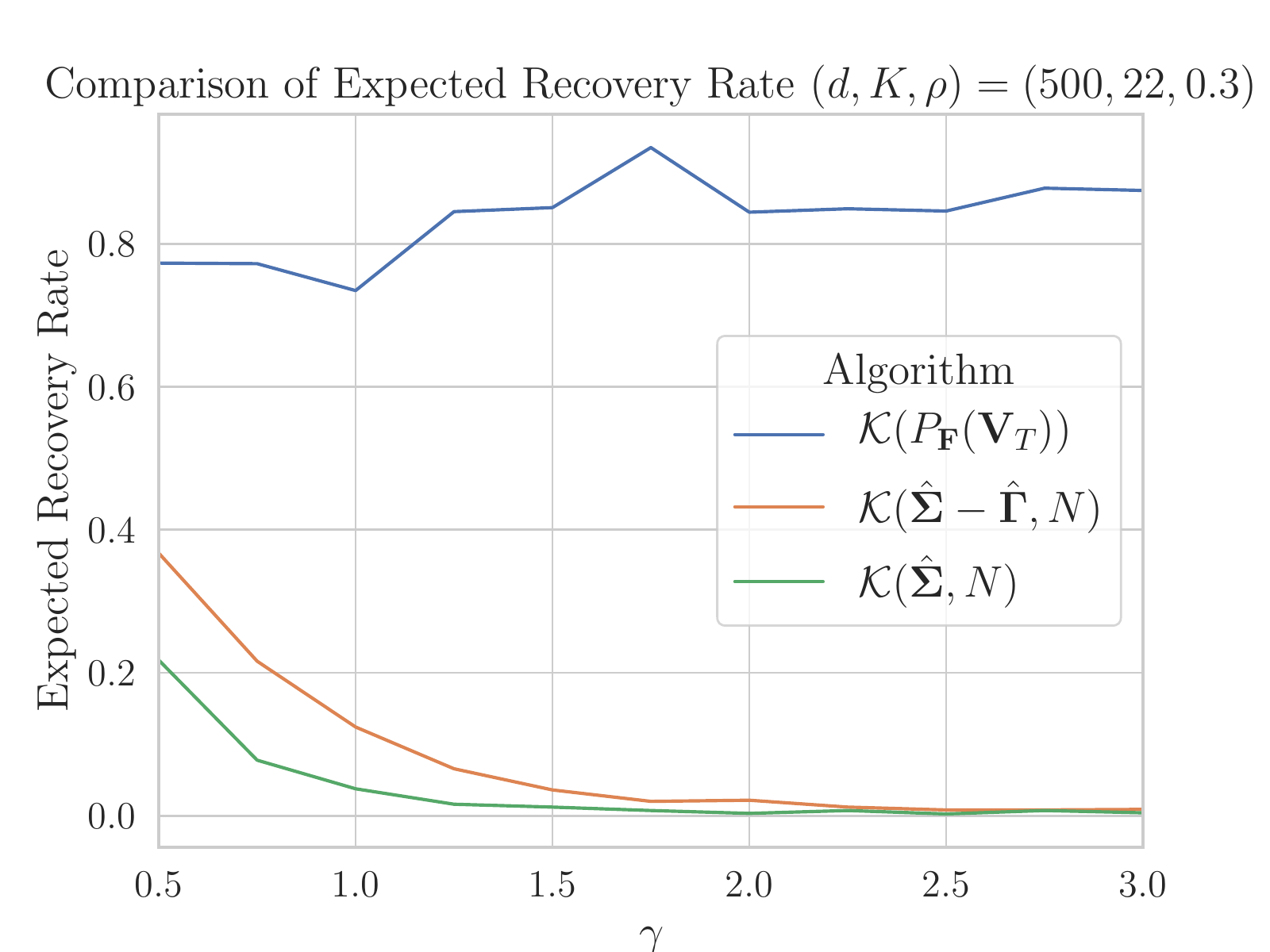} \\
\includegraphics[width=0.5\textwidth]{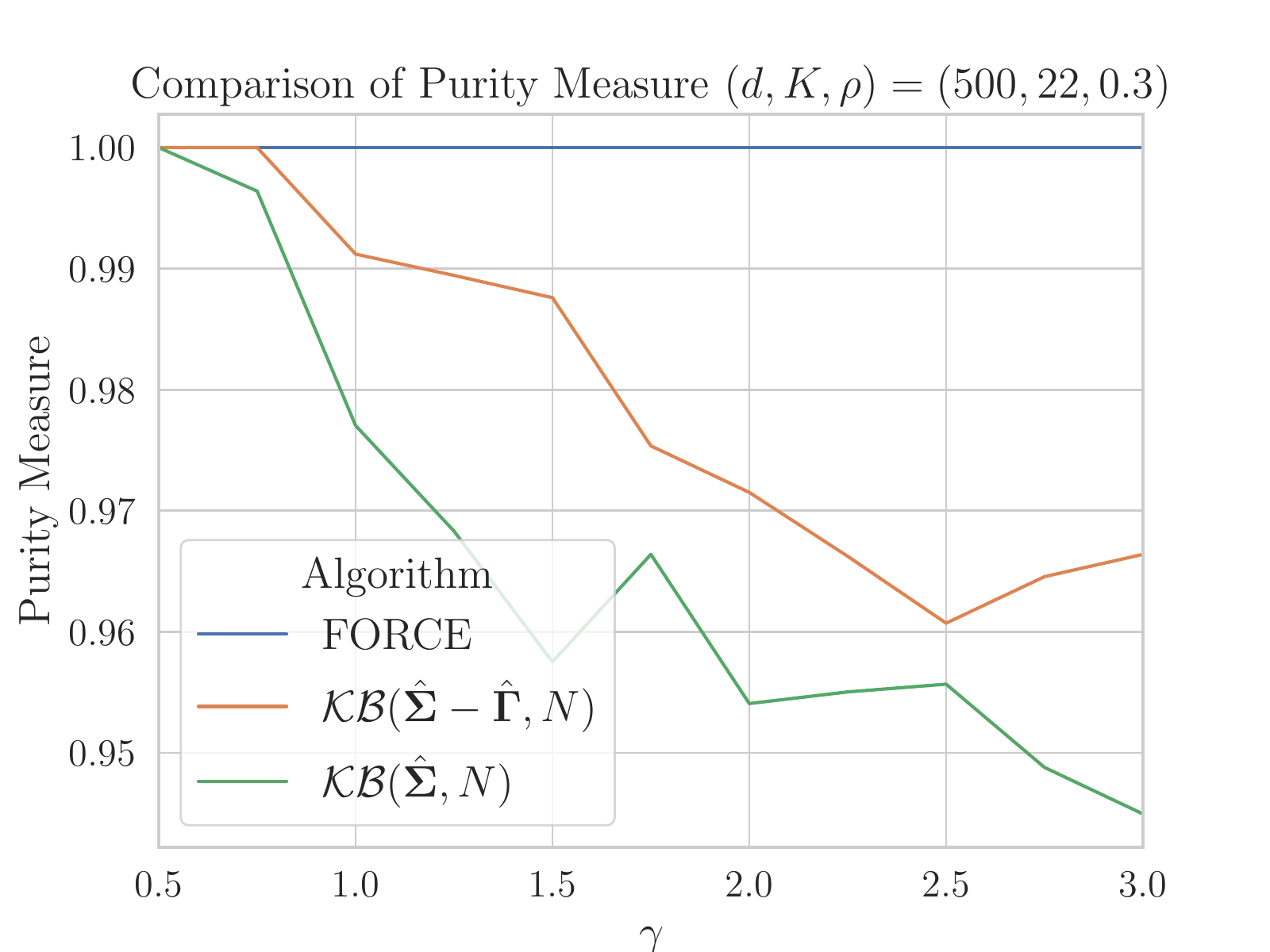} & \hspace*{-1em}\includegraphics[width=0.5\textwidth]{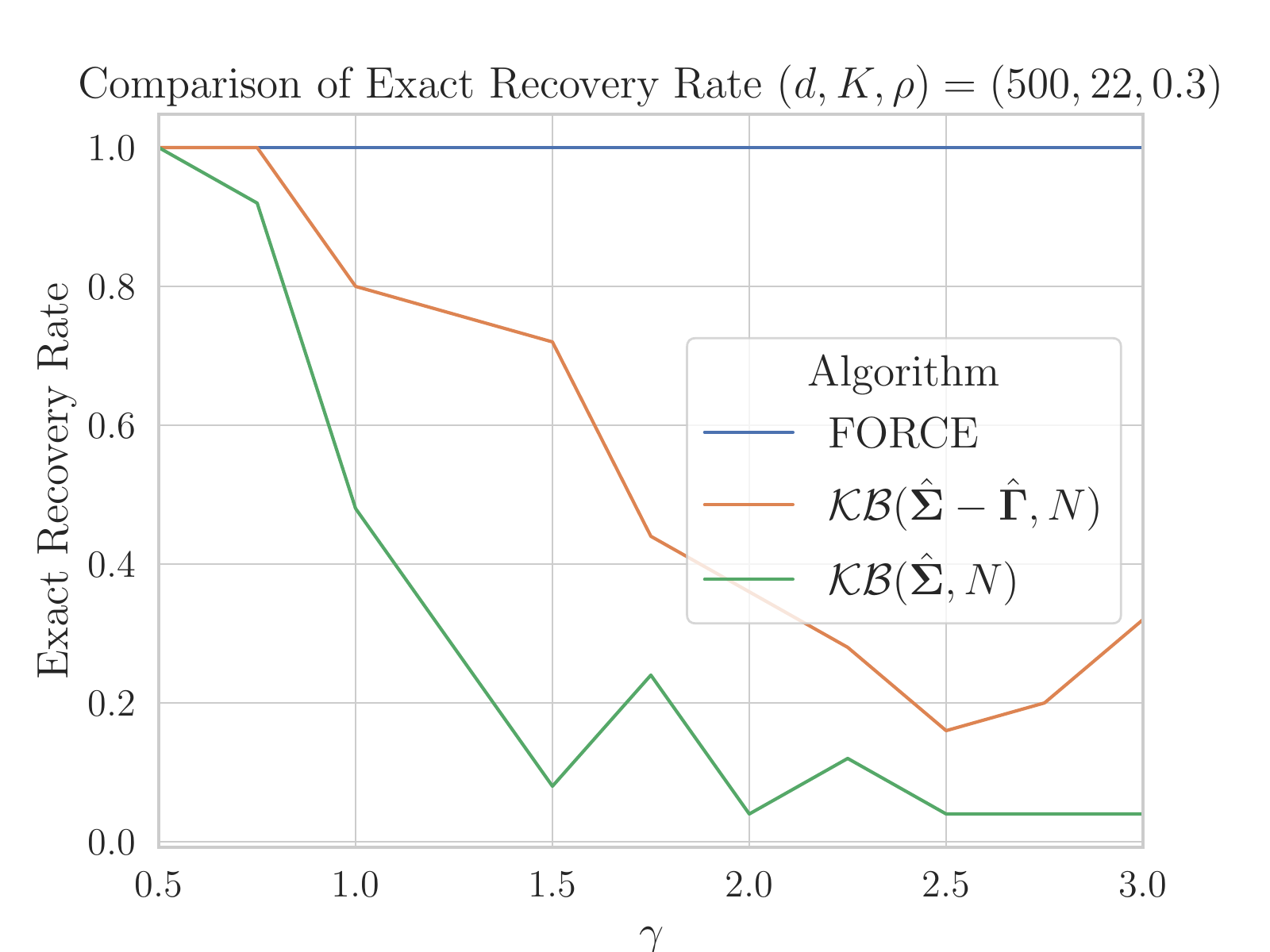} 
\end{tabular}
\caption{Comparison of FORCE with heuristic methods, demonstrating that as noise increases heuristics alone cannot provide high quality clusterings.}
\label{fig:force_vs_heuristics}
\end{figure}

\section{Conclusion}
Motivated by the variable clustering problem, we proposed a new algorithm, FORCE, to solve the P-W SDP which has strong statistical properties in many clustering regimes. FORCE consists of a primal first-order method based on Renegar's method \citep{Renegar2014} and a novel dual certificate construction. We show that for $G$-Latent models satisfying a minimal cluster separation condition, FORCE is guaranteed with high probability to both recover the true latent structure $G^*$ and provide a certificate of having done so. We extended our results to a variant of the P-W SDP where $K$ is not known a priori.

One interesting consequence of our certificate existence theorems, Theorems \ref{thm:gblock_existence} and \ref{thm:gblock_adapt_existence}, is that they show for $G$-Latent models, the SDPs \eqref{eqn:kmeans_sdp} and \eqref{eqn:kmeans_adapt_sdp} are tight with high probability for $\Delta(C^*)$ sufficiently large. Indeed we recover nearly the same minimal cluster separation rate as \citet{Bunea2016}, with the distinction that our proof is constructive in nature.

Our numerical studies clearly indicate the success of FORCE in the variable clustering setting. In our simulation studies, only one other method, ADMM, was able to scale to high dimensions, and it often did not converge in high noise designs. Our studies also verified that solving the P-W SDP was essential to achieve high quality clusterings as noise increased (see Figure \ref{fig:force_vs_heuristics}).  In future work it would be of interest to study the properties of the FORCE dual certificate under other generating distributions for variable and data clustering. The FORCE algorithm is available in the R package GFORCE on CRAN.

\bibliographystyle{ims_nourl_eprint}
\bibliography{/Shared_Documents/Bibliography/library}

\begin{thebibliography}{32}
\expandafter\ifx\csname natexlab\endcsname\relax\def\natexlab#1{#1}\fi
\expandafter\ifx\csname url\endcsname\relax
  \def\url#1{\texttt{#1}}\fi
\expandafter\ifx\csname urlprefix\endcsname\relax\def\urlprefix{URL }\fi

\bibitem[{Abbe et~al.(2016)Abbe, Bandeira and Hall}]{Abbe2016}
\textsc{Abbe, E.}, \textsc{Bandeira, A.~S.} and \textsc{Hall, G.} (2016).
\newblock {Exact Recovery in the Stochastic Block Model}.
\newblock \textit{IEEE: Transactions on Information Theory} \textbf{62}.

\bibitem[{Ames(2014)}]{Ames2014}
\textsc{Ames, B. P.~W.} (2014).
\newblock {Guaranteed clustering and biclustering via semidefinite
  programming}.
\newblock \textit{Mathematical Programming, Series A} \textbf{147} 429--465.

\bibitem[{Andersen and Andersen(2000)}]{Andersen2000}
\textsc{Andersen, E.~D.} and \textsc{Andersen, K.~D.} (2000).
\newblock {The Mosek Interior Point Optimizer for Linear Programming: An
  Implementation of the Homogeneous Algorithm}.
\newblock In \textit{High Performance Optimization}. Springer, 197--232.

\bibitem[{Arora et~al.(2005)Arora, Hazan and Kale}]{Arora2005}
\textsc{Arora, S.}, \textsc{Hazan, E.} and \textsc{Kale, S.} (2005).
\newblock {Fast Algorithms for Approximate Semidefinite Programming using the
  Multiplicative Weights Update Method}.
\newblock In \textit{FOCS}.

\bibitem[{Arthur and Vassilvitskii(2007)}]{Arthur2007}
\textsc{Arthur, D.} and \textsc{Vassilvitskii, S.} (2007).
\newblock {k-means++: The Advantages of Careful Seeding}.
\newblock In \textit{SODA}.

\bibitem[{Awasthi and Bandeira(2015)}]{Awasthi2014}
\textsc{Awasthi, P.} and \textsc{Bandeira, A.~S.} (2015).
\newblock {Relax, no need to round: integrality of clustering formulations}.
\newblock In \textit{ITCS}.

\bibitem[{Awasthi and Sheffet(2012)}]{Awasthi2012}
\textsc{Awasthi, P.} and \textsc{Sheffet, O.} (2012).
\newblock {Improved Spectral-Norm Bounds for Clustering}.
\newblock In \textit{Approximation, Randomization, and Combinatorial
  Optimization. Algorithms and Techniques}. 37--49.

\bibitem[{Bandeira(2015)}]{Bandeira2015}
\textsc{Bandeira, A.~S.} (2015).
\newblock {A Note On Probably Certifiably Correct Algorithms} .

\bibitem[{Boyd et~al.(2011)Boyd, Parikh, Chu, Peleato and Eckstein}]{Boyd2011}
\textsc{Boyd, S.}, \textsc{Parikh, N.}, \textsc{Chu, E.}, \textsc{Peleato, B.}
  and \textsc{Eckstein, J.} (2011).
\newblock {Distributed Optimization and Statistical Learning via the
  Alternating Direction Method of Multipliers}.
\newblock \textit{Foundations and Trends in Machine Learning} \textbf{3}
  1--122.

\bibitem[{Boyd and Vandenberghe(2004)}]{Boyd2004}
\textsc{Boyd, S.} and \textsc{Vandenberghe, L.} (2004).
\newblock \textit{{Convex Optimization}}.
\newblock Cambridge University Press.

\bibitem[{Bubeck(2015)}]{Bubeck2015}
\textsc{Bubeck, S.} (2015).
\newblock {Convex Optimization: Algorithms and Complexity}.
\newblock \textit{Foundations and Trends in Machine Learning} \textbf{8}
  231--357.

\bibitem[{Bunea et~al.(2018)Bunea, Giraud, Luo, Royer and Verzelen}]{Bunea2018}
\textsc{Bunea, F.}, \textsc{Giraud, C.}, \textsc{Luo, X.}, \textsc{Royer, M.}
  and \textsc{Verzelen, N.} (2018).
\newblock {Model Assisted Variable Clustering: Minimax-Optimal Recovery and
  Algorithms}.
\newblock {\href{https://arxiv.org/abs/1508.01939}{\texttt{arXiv:1508.01939}}}.

\bibitem[{Bunea et~al.(2016)Bunea, Giraud, Royer and Verzelen}]{Bunea2016}
\textsc{Bunea, F.}, \textsc{Giraud, C.}, \textsc{Royer, M.} and
  \textsc{Verzelen, N.} (2016).
\newblock {PECOK: a convex optimization approach to variable clustering}.
\newblock {\href{https://arxiv.org/abs/1606.05100}{\texttt{arXiv:1606.05100}}}.

\bibitem[{Bunea et~al.(2017)Bunea, Ning and Wegkamp}]{Bunea2017}
\textsc{Bunea, F.}, \textsc{Ning, Y.} and \textsc{Wegkamp, M.} (2017).
\newblock {Overlapping Variable Clustering with Statistical Guarantees} .

\bibitem[{Dasgupta(2008)}]{Dasgupta2008}
\textsc{Dasgupta, S.} (2008).
\newblock {The hardness of k -means clustering}.
\newblock Tech. rep.

\bibitem[{Defays(1977)}]{Defays1977}
\textsc{Defays, D.} (1977).
\newblock {An efficient algorithm for a complete link method}.
\newblock \textit{The Computer Journal} \textbf{20} 364--366.

\bibitem[{Iguchi et~al.(2015)Iguchi, Mixon, Peterson and Villar}]{Iguchi2015a}
\textsc{Iguchi, T.}, \textsc{Mixon, D.~G.}, \textsc{Peterson, J.} and
  \textsc{Villar, S.} (2015).
\newblock {On the tightness of an SDP relaxation of k-means}.
\newblock {\href{https://arxiv.org/abs/1505.04778}{\texttt{arXiv:1505.04778}}}.

\bibitem[{Iguchi et~al.(2016)Iguchi, Mixon, Peterson and Villar}]{Iguchi2015}
\textsc{Iguchi, T.}, \textsc{Mixon, D.~G.}, \textsc{Peterson, J.} and
  \textsc{Villar, S.} (2016).
\newblock {Probably certifiably correct k-means clustering}.
\newblock \textit{Mathematical Programming}  1--29.

\bibitem[{Kumar and Kannan(2010)}]{Kumar2010}
\textsc{Kumar, A.} and \textsc{Kannan, R.} (2010).
\newblock {Clustering with Spectral Norm and the k-means Algorithm}.
\newblock In \textit{FOCS}.

\bibitem[{Lloyd(1982)}]{Lloyd1982}
\textsc{Lloyd, S.~P.} (1982).
\newblock {Least Squares Quantization in PCM}.
\newblock \textit{IEEE Transactions on Information Theory} \textbf{28}
  129--137.

\bibitem[{Mahajan et~al.(2012)Mahajan, Nimbhorkar and
  Varadarajan}]{Mahajan2012}
\textsc{Mahajan, M.}, \textsc{Nimbhorkar, P.} and \textsc{Varadarajan, K.}
  (2012).
\newblock {The planar k-means problem is NP-hard}.
\newblock \textit{Theoretical Computer Science} \textbf{442} 13--21.

\bibitem[{Nesterov(2004)}]{Nesterov2004}
\textsc{Nesterov, Y.} (2004).
\newblock \textit{{Introductory Lectures on Convex Optimization: A Basic
  Course}}, vol.~87.
\newblock Springer US.

\bibitem[{Nesterov(2005)}]{Nesterov2005}
\textsc{Nesterov, Y.} (2005).
\newblock {Smooth minimization of non-smooth functions}.
\newblock \textit{Math. Program., Ser. A} \textbf{103} 127--152.

\bibitem[{Nesterov(2007)}]{Nesterov2007}
\textsc{Nesterov, Y.} (2007).
\newblock {Smoothing technique and its applications in semidefinite
  optimization}.
\newblock \textit{Operations Research} \textbf{259} 245--259.

\bibitem[{O'Donoghue and Cand{\`{e}}s(2015)}]{ODonoghue2015}
\textsc{O'Donoghue, B.} and \textsc{Cand{\`{e}}s, E.} (2015).
\newblock {Adaptive Restart for Accelerated Gradient Schemes}.
\newblock \textit{Foundations of Computational Mathematics} \textbf{15}
  715--732.

\bibitem[{Peng and Wei(2007)}]{Peng2007}
\textsc{Peng, J.} and \textsc{Wei, Y.} (2007).
\newblock {Approximating K-means-type Clustering via Semidefinite Programming}.
\newblock \textit{SIAM Journal on Optimization} \textbf{18} 186--205.

\bibitem[{Pirinen and Ames(2016)}]{Pirinen2016}
\textsc{Pirinen, A.} and \textsc{Ames, B.} (2016).
\newblock {Clustering of Sparse and Approximately Sparse Graphs by Semidefinite
  Programming}.
\newblock {\href{https://arxiv.org/abs/1603.05296}{\texttt{arXiv:1603.05296}}}.

\bibitem[{Renegar(2014)}]{Renegar2014}
\textsc{Renegar, J.} (2014).
\newblock {Efficient first-order methods for linear programming and
  semidefinite programming}.
\newblock {\href{https://arxiv.org/abs/1409.5832}{\texttt{arXiv:1409.5832}}}.

\bibitem[{Rudelson and Vershynin(2013)}]{Rudelson2013}
\textsc{Rudelson, M.} and \textsc{Vershynin, R.} (2013).
\newblock {Hanson-Wright Inequality and Sub-Gaussian Concentration}.
\newblock {\href{https://arxiv.org/abs/1306.2872}{\texttt{arXiv:1306.2872}}}.

\bibitem[{Sun et~al.(2017)Sun, Toh, Yuan and Zhao}]{Sun2017}
\textsc{Sun, D.}, \textsc{Toh, K.-C.}, \textsc{Yuan, Y.} and \textsc{Zhao,
  X.-Y.} (2017).
\newblock {SDPNAL+: A Matlab software for semidefinite programming with bound
  constraints (version 1.0)} .

\bibitem[{Vazirani(2001)}]{Vazirani2001}
\textsc{Vazirani, V.} (2001).
\newblock \textit{{Approximation Algorithms}}.

\bibitem[{Vershynin(2011)}]{Vershynin2011}
\textsc{Vershynin, R.} (2011).
\newblock {Introduction to the non-asymptotic analysis of random matrices}.
\newblock {\href{https://arxiv.org/abs/1011.3027}{\texttt{arXiv:1011.3027}}}.

\end{thebibliography}

\appendix

\section{Proofs Omitted in Section \ref{sec:FORCE_dual}}
First we have a lemma regarding the concentration of the noise terms $\Eb$ about their mean. Sometimes rather than state these concentration results in terms of $d$, we state them in terms of $t \geq d$ to allow for more precise control of constants in our main theorems.  We let $\cE$ denote the event that $||\hat\bGamma - \bGammaS ||_{\infty} \leq p_1||\bGammaS||_{\max}\sqrt{\frac{\log d}{n}}$.
\begin{lemma}
\label{lem:conc_ei_hw}
Under the notation and assumptions from previous sections, if $t\geq d$ then
\[
\biggl|\sum_{j=1}^n \bone^T\bE_{\Gs{i}}^j\bE_{\Gs{i}}^{jT}\bone - \bone^T\bGamma^*_{\Gs{i},\Gs{i}}\bone\biggl| \leq c_0 ||\bGammaS||_{\infty}\sqrt{|\Gs{i}|^2 n \log t},
\]
with probability at least $1-\frac{2}{t}$, where $c_0 = c'(1+\sqrt{p_0})$ is a constant that depends only on $p_0$ and the absolute constant $c'$ from Proposition \ref{lem:hanson_wright_gaussian}. Similarly with probability at least $1-\frac{2}{t}$, for $a \in \Gs{i}$,
\[
\biggl|\sum_{j=1}^n \bone^T\bE_{\Gs{i}}^j E^j_{a} - \gamma^*_a \biggr| \leq c_0 ||\bGammaS||_{\infty}\sqrt{|\Gs{i}| n \log t} ,
\]
\end{lemma}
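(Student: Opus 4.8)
The plan is to recognise both left-hand sides as quadratic forms in a standard Gaussian vector and to control them with the Gaussian Hanson--Wright inequality, Proposition~\ref{lem:hanson_wright_gaussian}. Since $\bGammaS$ is diagonal and the $n$ samples are independent, we may write $\bE^j_{\Gs{i}} = (\bGamma^*_{\Gs{i},\Gs{i}})^{1/2}\bz_j$, where $\bz = (\bz_1,\dots,\bz_n) \sim \cN(0,\Ib_{n|\Gs{i}|})$. For the first bound put $\bw := (\bGamma^*_{\Gs{i},\Gs{i}})^{1/2}\bone$, so that $\bone^T\bE^j_{\Gs{i}} = \bw^T\bz_j$ and
\[
\sum_{j=1}^n \bone^T\bE^j_{\Gs{i}}\bE^{jT}_{\Gs{i}}\bone \;=\; \sum_{j=1}^n (\bw^T\bz_j)^2 \;=\; \bz^T A\,\bz, \qquad A := \Ib_n \otimes \bw\bw^T .
\]
Then $\tr A = n\|\bw\|_2^2 = n\,\bone^T\bGamma^*_{\Gs{i},\Gs{i}}\bone$ is the centering quantity; and using $\|\bw\|_2^2 = \sum_{a\in\Gs{i}}\gammaS{a} \leq |\Gs{i}|\,\|\bGammaS\|_\infty$ we get $\|A\|_2 = \|\bw\|_2^2 \leq |\Gs{i}|\,\|\bGammaS\|_\infty$ and $\|A\|_F = \sqrt{n}\,\|\bw\|_2^2 \leq \sqrt{n}\,|\Gs{i}|\,\|\bGammaS\|_\infty$.

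Proposition~\ref{lem:hanson_wright_gaussian} then gives, for any $s>0$,
\[
\Pr\bigl(|\bz^T A\bz - \tr A| > s\bigr) \;\leq\; 2\exp\!\Bigl(-c'\min\{\, s^2/\|A\|_F^2,\; s/\|A\|_2 \,\}\Bigr).
\]
Taking $s = c_0\,\|\bGammaS\|_\infty\sqrt{|\Gs{i}|^2 n\log t}$, the first ratio is $\gtrsim c_0^2\log t$ and the second is $\gtrsim c_0\sqrt{n\log t}$; since $t\geq d$ and $\log d\leq p_0 n$ (and $t$ is at most polynomial in $d$ in every application), $\sqrt{n\log t}\gtrsim\log t$, so the sub-exponential branch dominates up to a factor depending only on $p_0$, and the choice $c_0 = c'(1+\sqrt{p_0})$ makes the exponent at least $\log t$, yielding probability at most $2/t$. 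This is the first inequality.

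For the second inequality the only change is the matrix: with $a\in\Gs{i}$ and $\bw_a := (\bGamma^*_{\Gs{i},\Gs{i}})^{1/2}\be_a$, so $E^j_a = \bw_a^T\bz_j$, symmetrising the rank-one form gives
\[
\sum_{j=1}^n \bone^T\bE^j_{\Gs{i}}E^j_a \;=\; \sum_{j=1}^n (\bw^T\bz_j)(\bw_a^T\bz_j) \;=\; \bz^T B\,\bz, \quad B := \tfrac12\,\Ib_n \otimes (\bGamma^*_{\Gs{i},\Gs{i}})^{1/2}(\bone\be_a^T + \be_a\bone^T)(\bGamma^*_{\Gs{i},\Gs{i}})^{1/2},
\]
with $\tr B = n\,\be_a^T\bGamma^*_{\Gs{i},\Gs{i}}\bone = n\gammaS{a}$. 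Since $\bone\be_a^T + \be_a\bone^T$ has eigenvalues $1\pm\sqrt{|\Gs{i}|}$, one gets $\|B\|_2 \lesssim \sqrt{|\Gs{i}|}\,\|\bGammaS\|_\infty$ and $\|B\|_F \lesssim \sqrt{n|\Gs{i}|}\,\|\bGammaS\|_\infty$; feeding these into Proposition~\ref{lem:hanson_wright_gaussian} with $s = c_0\,\|\bGammaS\|_\infty\sqrt{|\Gs{i}| n\log t}$ and arguing exactly as above gives the claim with probability at least $1 - 2/t$.

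The trace and norm bookkeeping for these Kronecker-structured matrices is routine. The one delicate point is the last step: comparing the two branches of the Hanson--Wright bound needs $\log t \lesssim n$, which is precisely where the hypothesis $\log d\leq p_0 n$ enters, is what forces the constant $c_0$ to depend on $p_0$, and is what makes the deviation scale like $\sqrt{\,\cdot\,\log t}$ rather than $\cdot\,\log t$.
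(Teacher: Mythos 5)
Your proof is correct and follows the paper's argument: both write the sum as a quadratic form in the stacked standard Gaussian vector, take the coefficient matrix to be block-diagonal with $n$ copies of a rank-one (or symmetrized rank-two) $|\Gs{i}|\times|\Gs{i}|$ block, bound its spectral and Frobenius norms by $||\bGammaS||_\infty|\Gs{i}|$ and $||\bGammaS||_\infty|\Gs{i}|\sqrt{n}$, invoke the Gaussian Hanson--Wright inequality, and use $\log d\leq p_0 n$ to fold the sub-exponential branch into the sub-Gaussian one, giving $c_0 = c'(1+\sqrt{p_0})$. The only cosmetic departure is that you symmetrize the blocks as $\tfrac12(\bone\be_a^T+\be_a\bone^T)$ for the second estimate, whereas the paper feeds the unsymmetrized blocks $(\bGammaS_{\Gs{i},\Gs{i}})^{1/2}\bone\be_a^T(\bGammaS_{\Gs{i},\Gs{i}})^{1/2}$ directly to Hanson--Wright; the resulting norm bounds agree up to a factor of two.
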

\begin{proof}
To obtain the result, we observe that
\[
\sum_{j=1}^n \bone^T\bE_{\Gs{i}}^j\bE_{\Gs{i}}^{jT}\bone - \bone^T\bGamma^*_{\Gs{i},\Gs{i}}\bone
\]
is a quadratic form of a $n|\Gs{i}|$-dimensional Gaussian random vector with independent entries. In particular, if we define $\Mb$ to be block diagonal with the $i^{th}$ $n\times n$ diagonal block as $(\bGammaS_{\Gs{i},\Gs{i}})^{1/2}\bone \bone^T(\bGammaS_{\Gs{i},\Gs{i}})^{1/2}$, then we can apply Corollary \ref{lem:hanson_wright_gaussian2} with matrix $\Mb$. Because  $||\Mb||_2 \leq ||\bGammaS||_{\infty}|\Gs{i}|$ and $||\Mb||_F \leq ||\bGammaS||_{\infty}|\Gs{i}|\sqrt{n}$, applying the corollary gives
\[
\biggl|\sum_{j=1}^n \bone^T\bE_{\Gs{i}}^j\bE_{\Gs{i}}^{jT}\bone - \bone^T\bGamma^*_{\Gs{i},\Gs{i}}\bone\biggr| \leq c' ||\bGammaS||_{\infty}\left(\sqrt{|\Gs{i}|^2 n \log t} + |\Gs{i}|\log t \right),
\]
with probability at least $1-\frac{2}{t}$. Using the assumption $\log d \leq p_0 n$ gives the desired result. The proof of the second statement follows similarly, taking instead the diagonal blocks of $\Mb$ as $(\bGammaS_{\Gs{i},\Gs{i}})^{1/2}\bone \eb_a^T(\bGammaS_{\Gs{i},\Gs{i}})^{1/2}$, giving $||\Mb||_2 \leq ||\bGammaS||_{\infty}\sqrt{|\Gs{i}|}$ and $||\Mb||_F \leq ||\bGammaS||_{\infty}\sqrt{n|\Gs{i}|}$.
\end{proof}

\subsubsection*{Proof of Lemma \ref{lem:dg_eigenvalue_concentration}}
{\bf Step 1:} 
For notation, $c_i$ will be used to denote absolute constants. The first step is to decompose $\Qb_{i}^{\perp}(\bX)$. Recall that under the G-Latent model, $\Db = -\hat\bSigma + \hat\bGamma$. Substituting that into the expression for $\Qb_{i}^{\perp}(\bX)$ gives
\begin{align*}
\Qb_{i}^{\perp}(\bX) &= \underlabel{-\frac{1}{|\Gs{i}|^2} \left(\bone^T\hat\bSigma_{\Gs{i},\Gs{i}}\bone \right)\bone\bone^T + \frac{1}{|\Gs{i}|}\left(\bone\bone^T \hat\bSigma_{\Gs{i},\Gs{i}} + \hat\bSigma_{\Gs{i},\Gs{i}}\bone\bone^T\right) - \hat\bSigma_{\Gs{i},\Gs{i}}}{(i)}\\
&\quad + \underlabel{\frac{1}{|\Gs{i}|^2} \left(\bone^T\hat\bGamma_{\Gs{i},\Gs{i}}\bone \right)\bone\bone^T - \frac{1}{|\Gs{i}|}\left(\bone\bone^T \hat\bGamma_{\Gs{i},\Gs{i}} + \hat\bGamma_{\Gs{i},\Gs{i}}\bone\bone^T\right) + \hat\bGamma_{\Gs{i},\Gs{i}}}{(ii)}.
\end{align*}
For (i), we recall that by the definition of the G-Latent model that
\[
\hat\bSigma_{\Gs{i},\Gs{i}} = \frac{1}{n}\sum_{j=1}^n\bX_{\Gs{i}}^j\bX_{\Gs{i}}^{jT} = \sum_{j=1}^n(Z_i^j + \bE_{\Gs{i}}^j)(Z_i^j + \bE_{\Gs{i}}^j)^T.
\]
Plugging this into (i) and simplifying gives us that
\[
\text{(i)} = \frac{1}{n}\sum_{j=1}^n\left(-\frac{\bone^T\bE_{\Gs{i}}^j\bE_{\Gs{i}}^{jT}\bone}{|\Gs{i}|^2} \bone\bone^T + \frac{\bone^T\bE_{\Gs{i}}^j}{|\Gs{i}|}\left(\bone\bE_{\Gs{i}}^{jT} + \bE_{\Gs{i}}^j\bone^T\right) - \bE_{\Gs{i}}^j\bE_{\Gs{i}}^{jT} \right).
\]
Now we see that, again, the expression for $\Qb_{i}^{\perp}(\bX)$ has eight terms. We first show that each concentrates to its mean at the desired rate, and then use the triangle inequality to obtain the final result. Fortunately, we can subtract the mean for each of the 8 terms to the expression for $\Qb_{i}^{\perp}(\bX)$ as the means for (i) are offset by the means for (ii). To give the new decomposition of $\Qb_{i}^{\perp}(\bX)$ explicitly, 
\begin{align*}
\Qb_{i}^{\perp}(\bX) &= -\underlabel{\sum_{j=1}^n\frac{\bone^T\bE_{\Gs{i}}^j\bE_{\Gs{i}}^{jT}\bone}{n|\Gs{i}|^2} \bone\bone^T}{(i).a} + \underlabel{\sum_{j=1}^n\frac{\bone^T\bE_{\Gs{i}}^j}{n|\Gs{i}|}\bone\bE_{\Gs{i}}^{jT}}{(i).b} + \underlabel{\sum_{j=1}^n\frac{\bone^T\bE_{\Gs{i}}^j}{n|\Gs{i}|}\bE_{\Gs{i}}^j\bone^T}{(i).c} - \underlabel{\frac{1}{n}\sum_{j=1}^n\bE_{\Gs{i}}^j\bE_{\Gs{i}}^{jT}}{(i).d}\\
&\quad + \underlabel{\frac{1}{|\Gs{i}|^2} \left(\bone^T\hat\bGamma_{\Gs{i},\Gs{i}}\bone \right)\bone\bone^T}{(ii).a} - \underlabel{\frac{1}{|\Gs{i}|}\bone\bone^T \hat\bGamma_{\Gs{i},\Gs{i}}}{(ii).b} + \underlabel{\frac{1}{|\Gs{i}|}\hat\bGamma_{\Gs{i},\Gs{i}}\bone\bone^T}{(ii).c} + \underlabel{\hat\bGamma_{\Gs{i},\Gs{i}}}{(ii).d}. \numberthis \label{eqn:q_decomp}
\end{align*}
\\\noindent{\bf Step 2:}
For the term (i).a, we can directly apply Lemma \ref{lem:conc_ei_hw}. Doing so, it follows immediately that with probability at least $1-\frac{2}{t}$
\[
\bignorm{\sum_{j=1}^n\frac{\bone^T\bE_{\Gs{i}}^j\bE_{\Gs{i}}^{jT}\bone}{n|\Gs{i}|^2} \bone\bone^T -  \frac{1}{|\Gs{i}|^2}\left(\bone^T\bGamma^*_{\Gs{i},\Gs{i}}\bone \right)\bone\bone^T}_2 \leq c_0 ||\bGamma^*||_{\infty}\sqrt{\frac{\log t}{n}}.
\]
For the term (i).c (and so by symmetry (i).b), we observe that has the form $\ub \vb^T$ and that $||\ub\vb^T||_2 = ||\ub||_2||\vb||_2$. Therefore, we can apply Lemma \ref{lem:conc_ei_hw} and obtain that with probability at least $1-2|\Gs{i}|/t^2$,
\[
\bignorm{\sum_{j=1}^n\frac{\bone^T\bE_{\Gs{i}}^j}{n|\Gs{i}|}\bE_{\Gs{i}}^j\bone^T - \frac{1}{|\Gs{i}|}\bone\bone^T \bGamma^*_{\Gs{i},\Gs{i}}}_2 \leq c_0 ||\bGamma^*||_{\infty} \sqrt{\frac{2\log t}{n}}.
\]
\\\noindent{\bf Step 3:}
Now we control the term (i).d, the sample covariance matrix of the errors. We can directly apply Corollary \ref{cor:conc_covariance} to obtain that with probability at least $1-2/t$
\begin{align*}
\bignorm{\frac{1}{n}\sum_{j=1}^n\bE_{\Gs{i}}^j\bE_{\Gs{i}}^{jT} - \bGamma^*_{\Gs{i},\Gs{i}}}_2 &\leq ||\bGammaS||_{\infty}\left(\frac{|\Gs{i}|}{n} + 2\frac{\sqrt{2|\Gs{i}|\log t}}{n} + 2\sqrt{\frac{|\Gs{i}|}{n}} + (2+\sqrt{p_0})\sqrt{\frac{2\log t}{n}}\right) \\
&\leq ||\bGammaS||_{\infty}\left(\frac{d}{n} + (2+2\sqrt{2p_0})\sqrt{\frac{d}{n}} + (2+\sqrt{p_0})\sqrt{\frac{2\log t}{n}}\right).
\end{align*}
\\\noindent{\bf Step 4:}
For the terms in (ii), consider first (ii).a. We see that
\[
\bignorm{\left(\bone^T\hat\bGamma_{\Gs{i},\Gs{i}}\bone \right)\bone\bone^T - \left(\bone^T\bGamma^*_{\Gs{i},\Gs{i}}\bone \right)\bone\bone^T}_{\max} \leq |\Gs{i}|||\hat\bGamma_{\Gs{i},\Gs{i}} - \bGamma^*_{\Gs{i},\Gs{i}} ||_{\infty}
\]
Conditional on event $\cE$,
\[
\bignorm{\frac{1}{|\Gs{i}|^2} \left(\bone^T\hat\bGamma_{\Gs{i},\Gs{i}}\bone \right)\bone\bone^T - \frac{1}{|\Gs{i}|^2} \left(\bone^T\bGamma^*_{\Gs{i},\Gs{i}}\bone \right)\bone\bone^T}_{\max} \leq \frac{p_1||\bGamma^*||_{\infty}}{|\Gs{i}|}\sqrt{\frac{\log d}{n}}.
\]
Because the matrices above are a multiple of $\bone\bone^T$, it follows that 
\[
\bignorm{\frac{1}{|\Gs{i}|^2} \left(\bone^T\hat\bGamma_{\Gs{i},\Gs{i}}\bone \right)\bone\bone^T - \frac{1}{|\Gs{i}|^2} \left(\bone^T\bGamma^*_{\Gs{i},\Gs{i}}\bone \right)\bone\bone^T}_{2} \leq p_1 ||\bGamma^*||_{\infty}\sqrt{\frac{\log d}{n}}.
\]
Next for (ii).b (and (ii).c by symmetry), we can see that
\begin{align*}
\bignorm{\frac{1}{|\Gs{i}|}\bone\bone^T \hat\bGamma_{\Gs{i},\Gs{i}} - \frac{1}{|\Gs{i}|}\bone\bone^T \bGamma^*_{\Gs{i},\Gs{i}}}_{2} = \frac{1}{|\Gs{i}|}\bignorm{\bone\bone^T \left(\hat\bGamma_{\Gs{i},\Gs{i}} -  \bGamma^*_{\Gs{i},\Gs{i}} \right)}_{2}. \numberthis \label{eqn:q_conc_iib}
\end{align*}
Because $\bGammaH$ and $\bGammaS$ are diagonal, we can use event $\cE$ and the fact that for matrices of the form $\ub \vb^T$,  $||\ub\vb^T||_2 = ||\ub||_2||\vb||_2$, to obtain
\[
\bignorm{\frac{1}{|\Gs{i}|}\bone\bone^T \hat\bGamma_{\Gs{i},\Gs{i}} - \frac{1}{|\Gs{i}|}\bone\bone^T \bGamma^*_{\Gs{i},\Gs{i}}}_{2} \leq p_1 |\bGamma^*|_{\infty}\sqrt{\frac{\log d}{n}}
\]
The same result is immediate for (ii).a by \eqref{eqn:conc_pecok_gamma}. Therefore by combining the above, applying the triangle inequality to \eqref{eqn:q_decomp}, using that $\cE$ occurs with probability at least $1-p_2/d^2$, and choosing $t=d^2$, we find that with probability at least $1-\frac{c_2}{d^2}$ 
\begin{equation*}
||\Qb_{i}^{\perp}(\bX)||_2 \leq c_1||\bGammaS||_{\infty}\left(\frac{d}{n} +\sqrt{\frac{d}{n}}  + \sqrt{\frac{\log d}{n}}  \right),
\end{equation*}
concluding the proof.

\subsubsection*{Proof of Lemma \ref{lem:gblock_yab}}
Under the G-Latent model,
\[
y'_{a,b}(\Xb,y_T) = -\underlabel{\SigmaH{a}{b}}{(i)} + \underlabel{y_a(\Xb,y_T)}{(ii)} + \underlabel{y_b(\Xb,y_T)}{(iii)}
\]
Above, we saw that
\[
y_a(\bX,y_T) = \frac{1}{2|\Gs{i}|^2}\bone^T\Db_{\Gs{i},\Gs{i}}\bone - \frac{1}{|\Gs{i}|}\Db_{a,\Gs{i}}\bone - \frac{1}{2|\Gs{i}|}y_T,
\]
and likewise for $y_b$. Below we denote by $\sigma_1 = \max_i \CS{i}{i}$ and $\sigma_2 = \max\{\max_i \CS{i}{i},||\bGammaS||_{\infty}\}$. Following the same decomposition as in Lemma \ref{lem:dg_eigenvalue_concentration}, we get that
\begin{align*}
y_a(\bX,y_T) &= -\frac{1}{2|\Gs{i}|^2}\bone^T\bSigmaH[\Gs{i},\Gs{i}]\bone + \frac{1}{2|\Gs{i}|^2}\bone^T\bGammaH_{\Gs{i},\Gs{i}}\bone + \frac{1}{|\Gs{i}|}\bSigmaH[a,\Gs{i}] \bone - \GammaH{a}{a} - \frac{1}{2|\Gs{i}|}y_T \\
&= \underlabel{\frac{1}{n}\sum_{l=1}^n \frac{1}{2}(Z_i^l)^2}{(ii).a}  - \underlabel{\frac{1}{2n|\Gs{i}|^2}\sum_{l=1}^n (\bone^T\bE_{\Gs{i}}^l)^2}{(ii).b} +  \underlabel{\frac{1}{n|\Gs{i}|}\sum_{l=1}^n E_a^l \bone^T\bE_{\Gs{i}}^l}{(ii).c} + \underlabel{\frac{1}{n}\sum_{l=1}^n E^l_a Z_i^l}{(ii).d}\\
&\quad + \underlabel{\frac{1}{2|\Gs{i}|^2}\bone^T\bGammaH_{\Gs{i},\Gs{i}}\bone}{(ii).e} - \underlabel{\frac{1}{|\Gs{i}|}\GammaH{a}{a}}{(ii).f} - \frac{1}{2|\Gs{i}|}y_T.
\end{align*}
As in the proof of Lemma \ref{lem:dg_eigenvalue_concentration}, the means of (ii).b and (ii).c offset the means of (ii).e and (ii).f. To control terms (ii).b and (ii).c, by Lemma \ref{lem:conc_ei_hw} with probability at least $1-1/t$,
\[
\frac{1}{2n|\Gs{i}|^2}\sum_{j=1}^n\left(\bone^T\bE_{\Gs{i}}^j\bE_{\Gs{i}}^{jT}\bone - \bone^T\bGamma^*_{\Gs{i},\Gs{i}}\bone\right) \leq \frac{c_0 ||\bGamma^*||_{\infty}}{2} \sqrt{\frac{\log t}{n |\Gs{i}|^2 }}.
\]
Likewise, by Lemma \ref{lem:conc_ei_hw},
\[
\frac{1}{n|\Gs{i}|}\sum_{i=1}^n \left(E_a \bE_{\Gs{i}}^{jT}\bone - \gammaS{a}\right) \geq -c_0 ||\bGamma^*||_{\infty}\sqrt{\frac{\log t}{n |\Gs{i}| }},
\]
with probability at least $1-1/t$. Conditional on event $\cE$, \eqref{eqn:conc_pecok_gamma} shows that
\begin{align*}
\frac{1}{2|\Gs{i}|^2}\left(\bone^T\bGammaH_{\Gs{i},\Gs{i}}\bone - \bone^T\bGammaS_{\Gs{i},\Gs{i}}\bone \right) &\geq -p_1 ||\bGamma^*||_{\infty}\sqrt{\frac{\log d}{n |\Gs{i}| }},\\
\frac{1}{|\Gs{i}|}\left(\GammaH{a}{a} - \Gamma^*_{a,a} \right) &\leq p_1||\bGamma^*||_{\infty}\sqrt{\frac{\log d}{n |\Gs{i}| }}.
\end{align*}
Lastly, if we denote by $\sigma_1= \max_i \CS{i}{i}$, term (ii).d can be bounded by using Corollary \ref{lem:hanson_wright_gaussian2}, which gives that
\begin{equation}
\label{eqn:yab_ii_1}
\frac{1}{n}\sum_{l=1}^n E^l_a Z_i^l \geq - c_0||\bGammaS||_{\infty}^{1/2}\sigma_1^{1/2} \sqrt{\frac{\log t}{n}},
\end{equation}
with probability at least $1-1/t$. The same results can be obtained for $y_b$. For the terms in (i), we expand as before:
\[
\SigmaH{a,b} = \underlabel{\frac{1}{n}\sum_{l=1}^n Z_i^lZ_j^l}{(i).a} + \underlabel{\frac{1}{n}\sum_{l=1}^nE_a^lZ_j^l}{(i).b} + \underlabel{\frac{1}{n}\sum_{l=1}^nE_b^lZ_i^l}{(i).c} + \underlabel{\frac{1}{n}\sum_{l=1}^n E_a^l E_b^l}{(i).d}.
\]
Terms (i).b and (i).c can be bounded in the same way as \eqref{eqn:yab_ii_1}. Term (i).d can be bounded by Corollary \ref{lem:hanson_wright_gaussian2}, giving that
\[
\frac{1}{n}\sum_{l=1}^n E_a^l E_b^l \geq -c_0||\bGamma^*||_{\infty}\sqrt{\frac{\log t}{n}},
\]
with probability at least $1-1/t$. All that remains is to bound the terms (i).a, (ii).a and (iii).a. Fortunately, these correspond to the population quantity $\Delta \bCS$. Observing that this is just a quadratic form of $2n$-dimensional Gaussian vector, we can applying Lemma \ref{lem:conc_ei_hw}. Doing so gives that
\[
\frac{1}{2n}\left(\sum_{l=1}^n (Z_i^l)^2  + \sum_{l=1}^n (Z_j^l)^2 - 2\sum_{l=1}^n Z_i^lZ_j^l \right) \geq \frac{1}{2}\left(\CS{i}{i} + \CS{j}{j} - \CS{i}{j}\right) - 2c_0 \sigma_1\sqrt{\frac{\log t}{n}}
\]
with probability at least $1-1/t$. Combining all the bounds for (i)-(iii), using that $\cE$ occurs with probability at least $1-p_2/d^3$, and selecting $t=d^3$, we can see that, with probability at least $1-c_1/d^3$
\begin{align*}
y'_{a,b} &\geq \frac{1}{2}(\CS{i}{i} + \CS{j}{j} - 2\CS{i}{j}) - \frac{1}{2|\Gs{i}|}y_T - \frac{1}{2|\Gs{j}|}y_T - c_1||\bGammaS||_{\infty}\sqrt{\frac{\log d}{n |\Gs{i}| }} - c_2\sigma\sqrt{\frac{\log d}{n}}\\
&\geq \frac{1}{2}\Delta(\bCS) - \frac{1}{2|\Gs{i}|}y_T - \frac{1}{2|\Gs{j}|}y_T - c_1||\bGammaS||_{\infty}\sqrt{\frac{\log d}{n |\Gs{i}| }} - c_2\sigma\sqrt{\frac{\log d}{n}}.
\end{align*}

\section{Some Technical Lemmas}
\label{sec:tech_lemmas}
\begin{lemma}
\label{lem:F_props}
Let $\Mb$ be a $d\times d$ real, symmetric matrix of the form
\[
\Mb = a\Ib + b\bone\bone^T.
\]
where $a,b \in \RR$ then $\Mb$ has eigenvalues $a+b$ with multiplicity 1 and $a$ with multiplicity $d-1$. If $a,b > 0$, then $\Mb$ also has the  property that
\begin{align*}
\Mb^{1/2} =& \sqrt{a}\Ib + \frac{\sqrt{a+db} - \sqrt{a}}{d}\bone\bone^T,\\
\Mb^{-1} =& \frac{1}{a}\Ib - \frac{b}{a^2+abd}\bone\bone^T,\\
\Mb^{-1/2} =& \frac{1}{\sqrt{a}}\Ib - \frac{\sqrt{a+db} - \sqrt{a}}{d\sqrt{a^2 + dab}}\bone\bone^T.
\end{align*}
\end{lemma}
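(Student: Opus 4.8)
The plan is to exploit the fact that all matrices of the form $\alpha\Ib + \beta\bone\bone^T$ commute and share a common eigenbasis, so that every claim reduces to a scalar computation on two eigenspaces. First I would record the spectral decomposition of $\bone\bone^T$: it acts as multiplication by $d$ on $\mathrm{span}\{\bone\}$ and as $0$ on the orthogonal complement $\bone^{\perp}$, which has dimension $d-1$. Hence $\Mb = a\Ib + b\bone\bone^T$ has the eigenvalue $a+db$ on $\mathrm{span}\{\bone\}$ (multiplicity $1$) and the eigenvalue $a$ on $\bone^{\perp}$ (multiplicity $d-1$), which gives the first claim; here I read the displayed ``$a+b$'' as $a+db$, consistent with the $\sqrt{a+db}$ appearing in the subsequent formulas.

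For the remaining three identities the key point is that when $a,b>0$ every eigenvalue of $\Mb$ is strictly positive, so $\Mb\succ 0$ and admits a unique symmetric positive-definite square root; moreover $\Mb^{1/2}$, $\Mb^{-1}$ and $\Mb^{-1/2}$ are functions of $\Mb$ and hence are again of the form $\alpha\Ib + \beta\bone\bone^T$. So I would take the ansatz $N = \alpha\Ib + \beta\bone\bone^T$ and match eigenvalues on the two eigenspaces. For the square root, $N^2 = \Mb$ forces $\alpha^2 = a$ on $\bone^{\perp}$ and $(\alpha+d\beta)^2 = a+db$ on $\mathrm{span}\{\bone\}$; picking the positive roots $\alpha = \sqrt a$ and $\alpha + d\beta = \sqrt{a+db}$ gives $\beta = (\sqrt{a+db}-\sqrt a)/d$, and since $\sqrt{a+db}\ge\sqrt a>0$ this $N$ is positive definite, hence is $\Mb^{1/2}$ by uniqueness. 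For the inverse, $N\Mb = \Ib$ forces $\alpha a = 1$ and $(\alpha+d\beta)(a+db) = 1$, so $\alpha = 1/a$ and $\beta = \frac1d\bigl(\frac{1}{a+db}-\frac1a\bigr) = -\frac{b}{a^2+abd}$. For the inverse square root I would combine the two previous steps (invert $\Mb^{1/2}$, or take the square root of $\Mb^{-1}$) and again match eigenvalues: $\alpha = 1/\sqrt a$ and $\alpha+d\beta = 1/\sqrt{a+db}$, i.e. $\beta = \frac1d\bigl(\frac{1}{\sqrt{a+db}}-\frac1{\sqrt a}\bigr) = -\frac{\sqrt{a+db}-\sqrt a}{d\sqrt{a^2+dab}}$.

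Alternatively, and perhaps more cleanly for a write-up, one can bypass any mention of uniqueness or functional calculus and simply \emph{verify} the three displayed formulas: square the claimed expressions for $\Mb^{1/2}$ and $\Mb^{-1/2}$, and multiply the claimed $\Mb^{-1}$ by $\Mb$, using only the identity $(\bone\bone^T)^2 = d\,\bone\bone^T$. There is no genuine obstacle in this lemma; the one point deserving a sentence of care is the hypothesis $a,b>0$, which is exactly what makes the principal square roots well defined and the formulas unambiguous.
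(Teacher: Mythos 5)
Your proof is correct and follows essentially the same route as the paper's: identify the two eigenspaces $\mathrm{span}\{\bone\}$ (eigenvalue $a+db$) and $\bone^\perp$ (eigenvalue $a$), then obtain $\Mb^{1/2}$, $\Mb^{-1}$, $\Mb^{-1/2}$ by acting on each eigenspace (the paper invokes Sherman--Morrison for $\Mb^{-1}$, which is the same computation in disguise). You also correctly flag that the displayed "$a+b$" must be "$a+db$" --- this is indeed a typo in the statement, confirmed by the paper's own proof and by the $\sqrt{a+db}$ appearing in the subsequent formulas.
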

\begin{proof}[Proof of Lemma \ref{lem:F_props}]
Using the Sherman-Morrison formula, a matrix of the form $\Mb = a\Ib + b\bone\bone^T$, where $a,b > 0$ has the inverse
\[
\Mb^{-1} = \frac{1}{a}\Ib - \frac{b}{a^2+abd}\bone\bone^T.
\]
Because $\Mb \succ 0$, all eigenvalues are strictly positive and denote by $\lambda_i$ and $q_i$ the eigenvalues and corresponding eigenvectors. Without loss of generality, let $q_i$ be orthonormal. Then we can write $\Mb = \sum_i \lambda_i \qb_i\qb_i^T$. By the form of $\Mb$, clearly $\frac{1}{\sqrt{d}}\bone$ is always an eigenvector of $\Mb$ with eigenvalue $a+db$, so we can take $q_1 = \frac{1}{\sqrt{d}}\bone$ and $\lambda_1 = 1$.
The remaining $q_i$ span $(\bone\bone^T)^{\perp}$ and have corresponding eigenvalues $\lambda_i = a$. Therefore, 
\[
\Mb^{1/2} = \frac{\sqrt{a+db}}{\sqrt{d}}\bone\bone^T + \sum_{i=2}^d \sqrt{a}\qb_i\qb_i^T.
\]
Because this eigen-decomposition is unique, the above gives
\[
\Mb^{1/2} = \sqrt{a}\Ib + \frac{\sqrt{a+db} - \sqrt{a}}{d}\bone\bone^T.
\]
Using the expression for $\Mb^{-1}$ given above, it follows that
\[
\Mb^{-1/2} = \frac{1}{\sqrt{a}}\Ib - \frac{\sqrt{a+db} - \sqrt{a}}{d\sqrt{a^2 + dab}}\bone\bone^T.
\]
\end{proof}

The following result for quadratic forms of standard multivariate Gaussian random variables can be found in many forms in the literature (for example, \citet{Rudelson2013}).
\begin{lemma}[Hanson-Wright Inequality for Gaussian Random Variables]
\label{lem:hanson_wright_gaussian}
Let $\bX \sim \cN(0,\Ib)$ be a $d$-dimensional random vector and let $\Ab$ be a $d \times d$ matrix in $\RR^{d \times d}$. Then
\[
\PP\left(|\bX^T \Ab \bX - \EE\left[\bX^T \Ab \bX \right] | \geq t \right) \leq 2 \exp\left(-c \min\left\{\frac{t^2}{||\Ab||_F^2},\frac{t}{||\Ab||_2} \right\} \right),
\]
for some absolute constant $c$.
\end{lemma}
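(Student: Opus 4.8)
The plan is to reduce the quadratic form to a weighted sum of independent $\chi^2_1$ random variables and then run a Chernoff argument. First I would observe that $\bX^T\Ab\bX = \bX^T\tilde\Ab\bX$ pointwise for $\tilde\Ab := \tfrac12(\Ab+\Ab^T)$, and that $||\tilde\Ab||_F \le ||\Ab||_F$ and $||\tilde\Ab||_2 \le ||\Ab||_2$; hence it suffices to treat the case of symmetric $\Ab$. Diagonalizing $\Ab = Q\Lambda Q^T$ with $\Lambda = \diag(\lambda_1,\dots,\lambda_d)$ and using that $Q^T\bX \sim \cN(0,\Ib)$, the centered quadratic form $\bX^T\Ab\bX - \EE[\bX^T\Ab\bX]$ has the same law as $S := \sum_{i=1}^d \lambda_i(g_i^2-1)$ with $g_i$ i.i.d.\ $\cN(0,1)$, where $\sum_i \lambda_i^2 = ||\Ab||_F^2$ and $\max_i |\lambda_i| = ||\Ab||_2$.

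Next I would bound the cumulant generating function of a single centered $\chi^2_1$: for $|s| < 1/2$, $\log\EE\, e^{s(g^2-1)} = -s - \tfrac12\log(1-2s)$, and a second-order Taylor estimate shows this is at most $C s^2$ for $|s| \le 1/4$, for some absolute constant $C$. By independence, for every $|s| \le (4||\Ab||_2)^{-1}$ (so that $|s\lambda_i| \le 1/4$ for all $i$), $\log \EE\, e^{sS} \le C s^2 \sum_i \lambda_i^2 = C s^2 ||\Ab||_F^2$. Markov's inequality then gives $\PP(S \ge t) \le \exp(-st + Cs^2||\Ab||_F^2)$, and I would optimize over $s$ in the admissible interval. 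The unconstrained minimizer $s_\star = t/(2C||\Ab||_F^2)$ is feasible exactly when $t \le (2C)^{-1}||\Ab||_F^2/||\Ab||_2$, in which case the bound is $\exp(-c\,t^2/||\Ab||_F^2)$; otherwise taking $s = (4||\Ab||_2)^{-1}$ and using $||\Ab||_F^2/||\Ab||_2^2 \le t/||\Ab||_2$ in that regime gives $\exp(-c\, t/||\Ab||_2)$. In both regimes $\PP(S\ge t) \le \exp(-c\min\{t^2/||\Ab||_F^2,\ t/||\Ab||_2\})$. Applying the same bound to $-S$ (equivalently, to $-\Ab$) together with a union bound yields the two-sided statement with the factor $2$.

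The genuinely routine parts are the Taylor estimate for the log-MGF and the bookkeeping of the two-regime optimization; the one point requiring care is keeping the final constant absolute and verifying that the symmetrization step does not inflate either norm. Since this is a classical fact, I would ultimately cite \citet{Rudelson2013} for the general sub-Gaussian version and include the short Gaussian-specific argument above only for self-containedness.
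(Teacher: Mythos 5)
The paper does not actually prove Lemma~\ref{lem:hanson_wright_gaussian}: it states it and defers to \citet{Rudelson2013}, exactly as you propose doing at the end. Your self-contained Gaussian-specific argument is the classical Bernstein--Chernoff route (symmetrize, diagonalize, bound the centered $\chi^2_1$ log-MGF by a quadratic on a bounded interval, optimize over a truncated range of $s$), and it is structurally correct, so it is a reasonable supplement to the citation.

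One piece of bookkeeping is off, though, and in the stated form the second-regime argument does not close. You take the smoothness constant $C$ so that $\psi(s) := -s - \tfrac12\log(1-2s) \le C s^2$ for $|s|\le 1/4$ (e.g.\ $C=4$ works, since $\psi''\le 8$ there). The unconstrained minimizer $s_\star = t/(2C\|\Ab\|_F^2)$ satisfies $s_\star \le (4\|\Ab\|_2)^{-1}$ precisely when $t \le \tfrac{C}{2}\,\|\Ab\|_F^2/\|\Ab\|_2$, not $t \le (2C)^{-1}\|\Ab\|_F^2/\|\Ab\|_2$ as you wrote. Since $C>1$, your threshold is strictly smaller, so the complementary regime you describe includes $t$ with $(2C)^{-1}\|\Ab\|_F^2/\|\Ab\|_2 < t < \|\Ab\|_F^2/\|\Ab\|_2$; for those $t$ the inequality you then invoke, $\|\Ab\|_F^2/\|\Ab\|_2^2 \le t/\|\Ab\|_2$, is false, and plugging $s=(4\|\Ab\|_2)^{-1}$ gives exponent $-t/(4\|\Ab\|_2) + C\|\Ab\|_F^2/(16\|\Ab\|_2^2)$, which for $C=4$ can be positive (vacuous bound). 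The fix is simply to use the correct threshold $t \le \tfrac{C}{2}\|\Ab\|_F^2/\|\Ab\|_2$: then in the complementary regime $\|\Ab\|_F^2/\|\Ab\|_2 < (2/C)\,t$, and the exponent becomes $\le -t/(4\|\Ab\|_2) + t/(8\|\Ab\|_2) = -t/(8\|\Ab\|_2)$, yielding $c = \min\{1/(4C),\,1/8\}$. Everything else (symmetrization without inflating either norm, the reduction to $\sum_i\lambda_i(g_i^2-1)$, the two-sided bound via $\pm\Ab$ and a union bound) is fine.
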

In particular, the following corollary is useful.
\begin{corollary}
\label{lem:hanson_wright_gaussian2}
Let $\bX \sim \cN(0,\Ib)$ be a $d$-dimensional random vector and let $\Ab$ be a $d \times d$ matrix in $\RR^{d \times d}$. Then
\[
\PP\left(|\bX^T \Ab \bX - \EE\left[\bX^T \Ab \bX \right] | \geq ||\Ab||_F\sqrt{t} + ||\Ab||_2 t \right) \leq 2 \exp\left(-c t \right),
\]
for some absolute constant $c$. Equivalently, 
\[
|\bX^T \Ab \bX - \EE\left[\bX^T \Ab \bX \right]| \leq c'\left(||\Ab||_F\sqrt{\log t} + ||\Ab||_2 \log t \right)
\]
with probability at least $1-2/t$ for some absolute constant $c'$.
\end{corollary}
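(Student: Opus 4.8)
The plan is to derive Corollary~\ref{lem:hanson_wright_gaussian2} directly from the Hanson-Wright bound of Lemma~\ref{lem:hanson_wright_gaussian} by a judicious choice of the deviation level, followed by an elementary simplification of the resulting exponent. The guiding observation is that the quantity $s := ||\Ab||_F\sqrt{t} + ||\Ab||_2 t$ has been engineered precisely so that \emph{both} branches of the minimum in Lemma~\ref{lem:hanson_wright_gaussian} dominate $t$.

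First I would dispose of the degenerate case $\Ab = \bZero$ (equivalently $||\Ab||_F = 0$), where both sides of each displayed inequality vanish and the statement is vacuous; so assume $\Ab \neq \bZero$, which makes division by $||\Ab||_F$ and $||\Ab||_2$ legitimate. Applying Lemma~\ref{lem:hanson_wright_gaussian} with threshold $s = ||\Ab||_F\sqrt{t} + ||\Ab||_2 t$, it suffices to lower bound $\min\{\, s^2/||\Ab||_F^2,\ s/||\Ab||_2 \,\}$. Since $s \geq ||\Ab||_F\sqrt{t}$ we get $s^2/||\Ab||_F^2 \geq t$, and since $s \geq ||\Ab||_2 t$ we get $s/||\Ab||_2 \geq t$; hence the minimum is at least $t$, and Lemma~\ref{lem:hanson_wright_gaussian} yields $\PP(|\bX^T\Ab\bX - \EE[\bX^T\Ab\bX]| \geq s) \leq 2\exp(-ct)$, which is the first display.

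For the ``equivalently'' statement I would invert this tail bound: replacing the deviation level $t$ in the first display by $\tfrac{1}{c}\log t$ makes the right-hand side equal to $2/t$, so with probability at least $1 - 2/t$ one has $|\bX^T\Ab\bX - \EE[\bX^T\Ab\bX]| \leq ||\Ab||_F\sqrt{(\log t)/c} + ||\Ab||_2 (\log t)/c$. Absorbing the factors $1/\sqrt{c}$ and $1/c$ into the single constant $c' := \max\{1/\sqrt{c},\, 1/c\}$ gives the stated bound $c'(||\Ab||_F\sqrt{\log t} + ||\Ab||_2\log t)$.

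There is essentially no obstacle here: the argument is a one-line substitution plus bookkeeping of absolute constants. The only point deserving a modicum of care is verifying the two-sided lower bound on the Hanson-Wright exponent without a case split — which, as noted, is immediate because $s$ already contains $||\Ab||_F\sqrt{t}$ and $||\Ab||_2 t$ as separate additive terms — together with the trivial handling of the degenerate matrix.
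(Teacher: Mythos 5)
Your proof is correct; the paper states this corollary without proof, and your derivation supplies exactly the standard argument: substituting $s = \|\Ab\|_F\sqrt{t} + \|\Ab\|_2 t$ into Lemma~\ref{lem:hanson_wright_gaussian} so that both branches of the minimum exceed $t$, and then reparametrizing $t \mapsto (\log t)/c$ to invert the tail into a high-probability bound. One small note: in the degenerate case $\Ab = \bZero$ the first display is not literally vacuous (the event $\{0 \geq 0\}$ has probability $1$ while $2e^{-ct}$ can be less than $1$), but this is an edge case the paper's own Lemma~\ref{lem:hanson_wright_gaussian} also does not address and is conventionally ignored or handled by taking a strict inequality; the substantive content of your argument is sound.
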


Below we are concerned with the rate of concentration in the spectral norm of a sample covariance matrix to its mean: $||\hat\bSigma - \bSigma^*||_2$.  If we write $\hat\bSigma = \frac{1}{n}\bX^T\bX$, where $\bX$ refers to the $n\times d$ matrix in which the rows are the observations $\bX_i$, we see how such a result is directly applicable to the problem at hand. We repeat the statement of Gordon's Theorem given in \citet{Vershynin2011} below as Proposition \ref{prop:gordon}. We use the notation from \citet{Vershynin2011} of $s_{\min}$ and $s_{\max}$ to denote the smallest and largest singular values, respectively.
\begin{proposition}
\label{prop:gordon}
Let $\bX$ be an $n \times d$ matrix whose entries are independent standard normal random variables. Then
\[
\sqrt{n} - \sqrt{d} \leq \EE[s_{\min}(\bX)] \leq \EE[s_{\max}(\bX)] \leq \sqrt{n} + \sqrt{d}
\]
\end{proposition}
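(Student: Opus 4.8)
The plan is to realize both extreme singular values as (iterated) suprema of one bilinear Gaussian process indexed by pairs of unit vectors, and then compare that process with a much simpler \emph{additive} Gaussian process via the classical Gaussian comparison inequalities. Write $S^{n-1}$ for the Euclidean unit sphere in $\RR^{n}$ and set $X_{u,v}:=u^{T}\bX v$ for $u\in S^{n-1}$, $v\in S^{d-1}$. This is a centered Gaussian process, and the standard variational formulas give $s_{\max}(\bX)=\sup_{u,v}X_{u,v}$ and, when $n\ge d$, $s_{\min}(\bX)=\inf_{v\in S^{d-1}}\sup_{u\in S^{n-1}}X_{u,v}=\inf_{\|v\|_{2}=1}\|\bX v\|_{2}$ (when $n<d$ the lower bound is trivial, since $\sqrt{n}-\sqrt{d}\le 0\le s_{\min}(\bX)$, so I take $n\ge d$ for it); the middle inequality $\EE[s_{\min}(\bX)]\le\EE[s_{\max}(\bX)]$ is immediate from the pointwise bound $s_{\min}(\bX)\le s_{\max}(\bX)$. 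Alongside $\bX$, introduce independent standard Gaussian vectors $g\in\RR^{n}$, $h\in\RR^{d}$ and the additive process $Y_{u,v}:=\langle g,u\rangle+\langle h,v\rangle$, for which $\sup_{u,v}Y_{u,v}=\|g\|_{2}+\|h\|_{2}$ and $\inf_{v}\sup_{u}Y_{u,v}=\|g\|_{2}-\|h\|_{2}$.

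The computational heart is an increment comparison. Since $X_{u,v}$ is the linear functional of the $nd$ i.i.d.\ standard normal entries of $\bX$ with coefficient matrix $uv^{T}$, for unit vectors
\begin{align*}
\EE\bigl(X_{u,v}-X_{u',v'}\bigr)^{2} &= \|uv^{T}-u'v'^{T}\|_{F}^{2} = 2-2\langle u,u'\rangle\langle v,v'\rangle,\\
\EE\bigl(Y_{u,v}-Y_{u',v'}\bigr)^{2} &= \|u-u'\|_{2}^{2}+\|v-v'\|_{2}^{2} = 4-2\langle u,u'\rangle-2\langle v,v'\rangle,
\end{align*}
and subtracting gives $\EE(Y_{u,v}-Y_{u',v'})^{2}-\EE(X_{u,v}-X_{u',v'})^{2}=2(1-\langle u,u'\rangle)(1-\langle v,v'\rangle)\ge 0$, an inequality that is an \emph{equality} precisely when $v=v'$. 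So $X$ has pointwise no-larger increments than $Y$, matching them exactly along the inner (fixed-$v$) direction.

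This single fact drives both estimates. For the upper bound, the Sudakov--Fernique inequality converts increment domination into domination of expected suprema: $\EE[s_{\max}(\bX)]=\EE\sup_{u,v}X_{u,v}\le\EE\sup_{u,v}Y_{u,v}=\EE\|g\|_{2}+\EE\|h\|_{2}\le\sqrt{n}+\sqrt{d}$, the last step by Jensen. For the lower bound, I would invoke Gordon's Gaussian min--max comparison inequality with $v$ as the outer (minimized) index and $u$ as the inner index: its hypotheses are that the inner-direction increments of $X$ are at least those of $Y$ (here equal) and the remaining increments of $X$ are no larger than those of $Y$ (the displayed inequality), and its conclusion is $\EE\inf_{v}\sup_{u}X_{u,v}\ge\EE\inf_{v}\sup_{u}Y_{u,v}$, i.e.\ $\EE[s_{\min}(\bX)]\ge\EE\|g\|_{2}-\EE\|h\|_{2}$. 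One would then finish with $\EE\|g\|_{2}-\EE\|h\|_{2}\ge\sqrt{n}-\sqrt{d}$, which follows from the standard fact that $k\mapsto\sqrt{k}-\EE\|g_{k}\|_{2}$ (with $g_{k}\sim\cN(0,\Ib_{k})$) is nonincreasing.

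I expect the lower bound to be the only real obstacle. The delicate points are: getting the orientation of Gordon's comparison right (it is the additive process $Y$, not the bilinear $X$, whose min--max value is the \emph{lower} quantity, and one must match the two increment hypotheses — equality along the inner direction, domination along the rest — to the sign of the displayed difference, which works precisely because $u,u',v,v'$ are unit vectors); and the final numeric step, since $\EE\|g\|_{2}-\EE\|h\|_{2}\ge\sqrt{n}-\sqrt{d}$ is \emph{not} a consequence of Jensen (which controls each mean from above, the wrong direction for the $\|g\|_{2}$ term) but needs the monotonicity of $k\mapsto\sqrt{k}-\EE\|g_{k}\|_{2}$, or an equivalent lower estimate for the mean of a chi random variable. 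The remaining ingredients — the variational characterizations, the Frobenius identity, and Sudakov--Fernique itself — are routine.
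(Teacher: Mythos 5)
Your argument is correct, and it is the standard textbook proof (essentially the one in Davidson--Szarek and in Vershynin's notes). The paper itself does not prove Proposition~\ref{prop:gordon}; it simply quotes Gordon's theorem from \citet{Vershynin2011}, so there is no ``paper's proof'' to compare against. The key points all check out: the variational formulas $s_{\max}(\bX)=\sup_{u,v}u^{T}\bX v$ and, for $n\ge d$, $s_{\min}(\bX)=\inf_{v}\sup_{u}u^{T}\bX v$; the trivial reduction when $n<d$; the Frobenius increment identity $\|uv^{T}-u'v'^{T}\|_{F}^{2}=2-2\langle u,u'\rangle\langle v,v'\rangle$ for unit vectors; the additive process increments; and the algebraic identity $\bigl(4-2a-2b\bigr)-\bigl(2-2ab\bigr)=2(1-a)(1-b)\ge 0$. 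Sudakov--Fernique then gives the upper bound, and you correctly identify both that Jensen works only in the upper direction and that the lower bound needs the monotonicity of $k\mapsto\sqrt{k}-\EE\|g_{k}\|_{2}$ (a standard chi-mean estimate). You also have the orientation of Gordon's min--max comparison right: with $v$ as the outer (minimized) index and $u$ inner, the hypotheses are that $X$'s within-$v$ increments dominate $Y$'s (here equal, since the difference vanishes at $v=v'$) and $X$'s cross-$v$ increments are dominated by $Y$'s, yielding $\EE\inf_{v}\sup_{u}X\ge\EE\inf_{v}\sup_{u}Y$. One tiny wording quibble: the difference of squared increments vanishes whenever $v=v'$ \emph{or} $u=u'$, so ``is an equality precisely when $v=v'$'' should read ``is an equality when $v=v'$'' --- that is all the application needs, and the extra zeros do not disturb the argument.
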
 
Using the result on sub-Gaussian concentration of a Lipschitz function of independent random variables, we immediately obtain the following corollary (also given in \citet{Vershynin2011}).
\begin{corollary}
\label{cor:sing_vals}
Let $\bX$ be an $n \times d$ matrix whose entries are independent standard normal random variables, then for every $t \geq 0$
\[
\sqrt{n} - \sqrt{d} -t \leq s_{\min}(\bX) \leq s_{\max}(\bX) \leq \sqrt{n} + \sqrt{d} + t
\]
with probability at least $1-2\exp(-t^2/2)$.
\end{corollary}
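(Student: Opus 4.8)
The plan is to combine the expectation bounds of Proposition~\ref{prop:gordon} with the Gaussian concentration inequality for Lipschitz functions. First I would observe that, viewing the $n \times d$ matrix $\bX$ as a vector in $\RR^{nd}$ equipped with the Euclidean norm (which coincides with the Frobenius norm on matrices), both maps $\bX \mapsto s_{\max}(\bX)$ and $\bX \mapsto s_{\min}(\bX)$ are $1$-Lipschitz. This is the Weyl/Mirsky perturbation inequality for singular values: for every index $k$ one has $|s_k(\bX) - s_k(\bY)| \le \|\bX - \bY\|_2 \le \|\bX - \bY\|_F$, and in particular this holds for the indices realizing the largest and smallest singular values.

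Next I would invoke the standard Gaussian concentration inequality: if $\bg$ is a standard Gaussian vector in $\RR^N$ and $f$ is $1$-Lipschitz with respect to the Euclidean norm, then $\PP\bigl(f(\bg) - \EE f(\bg) \ge t\bigr) \le \exp(-t^2/2)$ and likewise $\PP\bigl(f(\bg) - \EE f(\bg) \le -t\bigr) \le \exp(-t^2/2)$ for all $t \ge 0$. Since the entries of $\bX$ are i.i.d.\ standard normal, $\vect(\bX)$ is exactly such a Gaussian vector. Applying the upper tail bound to $f = s_{\max}$ and using $\EE[s_{\max}(\bX)] \le \sqrt{n} + \sqrt{d}$ from Proposition~\ref{prop:gordon} gives $s_{\max}(\bX) \le \sqrt{n} + \sqrt{d} + t$ off an event of probability at most $\exp(-t^2/2)$; applying the lower tail bound to $f = s_{\min}$ and using $\EE[s_{\min}(\bX)] \ge \sqrt{n} - \sqrt{d}$ gives $s_{\min}(\bX) \ge \sqrt{n} - \sqrt{d} - t$ off an event of probability at most $\exp(-t^2/2)$.

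Finally I would take the union bound over these two bad events. Since $s_{\min}(\bX) \le s_{\max}(\bX)$ always, on the complementary event — which has probability at least $1 - 2\exp(-t^2/2)$ — the full chain
\[
\sqrt{n} - \sqrt{d} - t \le s_{\min}(\bX) \le s_{\max}(\bX) \le \sqrt{n} + \sqrt{d} + t
\]
holds, which is the claim. I do not expect a genuine obstacle here: the proof is essentially an application of two quoted results. The only points requiring a little care are (i) confirming that the extreme-singular-value maps have Lipschitz constant exactly $1$, which is precisely the perturbation inequality cited above, and (ii) tracking the constant in the Gaussian concentration inequality so that each one-sided deviation costs $\exp(-t^2/2)$, and hence the union bound over the two relevant one-sided events yields the factor $2$ in the stated probability.
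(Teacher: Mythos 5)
Your proposal is correct and follows essentially the same route as the paper: the paper's one-line proof invokes the $1$-Lipschitz property of the extreme singular values together with the Gaussian concentration (sub-Gaussian tail) bound for Lipschitz functions, combined with the expectation bounds from Proposition~\ref{prop:gordon}, which is exactly what you spell out. Your added care about the Weyl/Mirsky perturbation inequality for the Lipschitz constant and the union bound accounting for the factor $2$ simply makes explicit what the paper compresses into ``the result is immediate from the above.''
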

\begin{proof}
Observing that the functions $s_{\min}$ and $s_{\max}$ are 1-Lipschitz and using the sub-Gaussian tail bound, the result is immediate from the above.
\end{proof}

\begin{corollary}
\label{cor:conc_covariance}
Let $\bX_i$, for $i=1,\dots,n$, be a $d$-dimensional random vector sampled from $\cN(0,\bSigma)$. Denoting $\bSigmaH := n^{-1}\sum_{i=1}^n\bX_i\bX_i^\top$, we have that 
\begin{align*}
\lambda_{\min}\left(\bSigmaH-\bSigma\right) &\geq \lambda_{\min}(\bSigma)\left(\frac{d}{n} + \frac{2t\sqrt{d}}{n} + \frac{t^2}{n} - \frac{2(\sqrt{d} + t)}{\sqrt{n}} \right),\\
\lambda_{\max}\left(\bSigmaH-\bSigma\right) &\leq \lambda_{\max}(\bSigma)\left(\frac{d}{n} + \frac{2t\sqrt{d}}{n} + \frac{t^2}{n} + \frac{2(\sqrt{d} + t)}{\sqrt{n}} \right),
\end{align*}
with probability at least $1-2\exp(-t^2/2)$.
\end{corollary}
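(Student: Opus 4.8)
The plan is to reduce to the isotropic (Wishart) case and then invoke Corollary~\ref{cor:sing_vals}. Let $\bX \in \RR^{n \times d}$ denote the matrix whose $i$-th row is $\bX_i^\top$. In distribution $\bX = \Wb\,\bSigma^{1/2}$, where $\Wb \in \RR^{n \times d}$ has i.i.d.\ $\cN(0,1)$ entries and $\bSigma^{1/2}$ is the PSD square root of $\bSigma$; consequently
\[
\bSigmaH - \bSigma \;=\; \frac{1}{n}\bX^\top\bX - \bSigma \;=\; \bSigma^{1/2}\Bigl(\frac{1}{n}\Wb^\top\Wb - \Ib\Bigr)\bSigma^{1/2},
\]
an identity valid whether or not $\bSigma$ is invertible. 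This moves all the randomness into $\frac1n\Wb^\top\Wb$, whose eigenvalues are $\frac1n s_i(\Wb)^2$ for the singular values $s_i(\Wb)$.

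Next I would control the spectrum of $\frac1n\Wb^\top\Wb - \Ib$. Applying Corollary~\ref{cor:sing_vals} to $\Wb$: with probability at least $1-2\exp(-t^2/2)$, $\sqrt{n}-\sqrt{d}-t \le s_{\min}(\Wb) \le s_{\max}(\Wb) \le \sqrt{n}+\sqrt{d}+t$. On this event, squaring and dividing by $n$ gives $\lambda_{\max}(\frac1n\Wb^\top\Wb) \le \frac1n(\sqrt n + \sqrt d + t)^2$ and, provided $\sqrt n \ge \sqrt d + t$, $\lambda_{\min}(\frac1n\Wb^\top\Wb) \ge \frac1n(\sqrt n - \sqrt d - t)^2$. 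Using the algebraic identity $\frac1n\bigl(\sqrt n \pm (\sqrt d + t)\bigr)^2 - 1 = \pm\frac{2(\sqrt d + t)}{\sqrt n} + \frac{d + 2\sqrt d\,t + t^2}{n}$, this yields $\frac1n\Wb^\top\Wb - \Ib \preceq B_+\Ib$ and $\frac1n\Wb^\top\Wb - \Ib \succeq B_-\Ib$, where $B_\pm := \frac{d}{n} + \frac{2t\sqrt d}{n} + \frac{t^2}{n} \pm \frac{2(\sqrt d + t)}{\sqrt n}$ are exactly the two parenthesized expressions in the statement (with $\lambda_{\min}(\bSigma)$ attached to $B_-$ and $\lambda_{\max}(\bSigma)$ to $B_+$).

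Finally I would transfer these bounds through the congruence by $\bSigma^{1/2}$: from $\frac1n\Wb^\top\Wb - \Ib \preceq B_+\Ib$ (with $B_+ \ge 0$) we get $\bSigmaH - \bSigma \preceq B_+\bSigma$, hence $\lambda_{\max}(\bSigmaH - \bSigma) \le B_+\lambda_{\max}(\bSigma)$; from $\frac1n\Wb^\top\Wb - \Ib \succeq B_-\Ib$ we get $\bSigmaH - \bSigma \succeq B_-\bSigma$, hence $\lambda_{\min}(\bSigmaH - \bSigma) \ge B_-\lambda_{\min}(\bSigma)$ when $B_- \ge 0$. Both inequalities hold on the single event furnished by Corollary~\ref{cor:sing_vals}, whose probability $1 - 2\exp(-t^2/2)$ is exactly the claimed one. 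The whole argument is routine; the only point needing care — and the step I expect to be the main obstacle — is the sign bookkeeping in the congruence, together with the caveat $\sqrt n \ge \sqrt d + t$ needed to square the lower bound: when $B_- < 0$ the Loewner inequality $\bSigmaH - \bSigma \succeq B_-\bSigma$ only gives $\lambda_{\min}(\bSigmaH - \bSigma) \ge B_-\lambda_{\max}(\bSigma)$, so in that regime the lower bound is read with $\lambda_{\max}(\bSigma)$ in place of $\lambda_{\min}(\bSigma)$ — which is precisely the form used downstream, where the corollary bounds $\|\bSigmaH - \bSigma\|_2$ by $\lambda_{\max}(\bSigma)$ times a nonnegative quantity.
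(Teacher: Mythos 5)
Your proof takes the same route the paper intends (the paper's own proof is the one-liner ``follows directly from Corollary~\ref{cor:sing_vals}''): factor $\bSigmaH - \bSigma = \bSigma^{1/2}\bigl(\tfrac1n\Wb^\top\Wb - \Ib\bigr)\bSigma^{1/2}$, control the spectrum of $\tfrac1n\Wb^\top\Wb$ via the singular values of $\Wb$, and transfer through the congruence; the algebra you wrote out for $B_\pm$ is correct. The two caveats you flag are genuine defects of the corollary \emph{as stated}, not of your argument: squaring the lower singular-value bound requires $\sqrt n \ge \sqrt d + t$, and when $B_- < 0$ the congruence only gives $\lambda_{\min}(\bSigmaH-\bSigma) \ge B_-\lambda_{\max}(\bSigma)$ rather than $B_-\lambda_{\min}(\bSigma)$. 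Since $|B_-| \le B_+$ always and the paper invokes this corollary only to bound $\|\bSigmaH-\bSigma\|_2$ by $\lambda_{\max}(\bSigma)\cdot B_+$ (see Step~3 of the proof of Lemma~\ref{lem:dg_eigenvalue_concentration}), neither caveat affects the downstream use, but you are right that a careful reader should note them.
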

\begin{proof}
This follows directly from Corollary \ref{cor:sing_vals}.
\end{proof}

\section{Extension of First-Order SDP Results}
\label{sec:renegar_method_details}
This section contains the derivations of the convergence rate of the modified Renegar's method used in Section \ref{sec:FORCE_all}. First we mention that one way to avoid the $\Fb \neq \Ib$ issue, as shown in \citet{Renegar2014}, is to instead solve the rotated problem
\begin{equation}
\begin{aligned}
& \underset{\Vb}{\text{maximize}} & & \lmin{\Vb}  \\
& \text{subject to} & & \trip{\Fb^{1/2} \Ab_i \Fb^{1/2}}{\Vb} = b_i \text{ for } i=1,\dots,p \\
&&& \trip{\Fb^{1/2}\Db\Fb^{1/2}}{\Vb} = u_0.
\end{aligned}
\label{eqn:lmin_form_rotated}
\end{equation}
Rotating the system of constraints is not a satisfactory solution for \eqref{eqn:kmeans_sdp} because the easy projection onto $\cC_{\lambda}^{\perp}$ is lost.  Thus we need to carefully analyze the smoothness of the objective function $f_{\mu,\Fb}$ yielding similar results as the case when $\Fb = \Ib$.

\subsection{Extension of the Smoothed Scheme to Arbitrary Initial Solutions}
\label{sec:renegar_method_details:smoothed_E}
For completeness, we give in this section the extension of the results in \citet{Renegar2014} to arbitrary choice of initial feasible solution $\Fb$. Similar to the notation in \citet{Renegar2014}, we denote the smoothed approximation of $\lminf{\Vb}$ as
\begin{equation}
f_{\mu,\Fb}(\Vb) = - \mu \log \left(\sum_j \exp\left(-\lambda_{j}(\Fb^{-1/2} \Vb \Fb^{-1/2}) / \mu \right) \right),
\end{equation}
where $\lambda_j$ denotes the $j^{th}$ eigenvalue of $\Vb$.
\begin{lemma}
\label{lem:smoothness_constant}
The function $f_{\mu,\Fb}(\Vb)$ is $\frac{||\Fb^{-1}||^2_2}{\mu}$-smooth.
\end{lemma}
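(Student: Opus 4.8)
The plan is to deduce this from the $\tfrac1\mu$-smoothness of the \emph{unrotated} objective $f_{\mu,\Ib}(\Wb)=-\mu\log\sum_j\exp(-\lambda_j(\Wb)/\mu)$, together with the elementary fact that precomposing an $L$-smooth function with a linear map of operator norm $\kappa$ produces an $(L\kappa^{2})$-smooth function. Throughout, ``smooth'' is with respect to the Frobenius norm on symmetric matrices.

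\textbf{Step 1 (smoothness of $f_{\mu,\Ib}$).} I would first recall that $f_{\mu,\Ib}$ is $\tfrac1\mu$-smooth --- this is exactly the estimate that makes RSS run, and it is established in \citet{Renegar2014}. For completeness one can re-derive it: the scalar soft-min $\varphi(x)=-\mu\log\sum_je^{-x_j/\mu}$ has Hessian $\nabla^2\varphi(x)=-\tfrac1\mu\bigl(\diag(p)-pp^{T}\bigr)$, where $p$ is the softmax of $-x/\mu$; since $\diag(p)-pp^{T}\succeq 0$ and $v^{T}(\diag(p)-pp^{T})v=\sum_jp_jv_j^{2}-\bigl(\sum_jp_jv_j\bigr)^{2}\le\sum_jp_jv_j^{2}\le\|v\|_2^{2}$, we get $\|\nabla^2\varphi(x)\|_2\le\tfrac1\mu$, i.e. $\varphi$ is $\tfrac1\mu$-smooth w.r.t. $\|\cdot\|_2$. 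Because $\varphi$ is symmetric (permutation invariant), the associated spectral function $\Wb\mapsto\varphi(\lambda(\Wb))=f_{\mu,\Ib}(\Wb)$ inherits $\tfrac1\mu$-smoothness w.r.t. $\|\cdot\|_F$ by the standard transfer principle for spectral functions.

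\textbf{Step 2 (the rotation).} Introduce the linear self-map $\cL(\Vb):=\Fb^{-1/2}\Vb\Fb^{-1/2}$ of the symmetric matrices, so that $f_{\mu,\Fb}=f_{\mu,\Ib}\circ\cL$. Since $\Fb^{-1/2}\succ0$ is symmetric, $\cL$ is self-adjoint for the trace inner product, $\trip{\cL(\Vb)}{\Wb}=\tr(\Fb^{-1/2}\Vb\Fb^{-1/2}\Wb)=\trip{\Vb}{\cL(\Wb)}$, and from $\|\Ab\Xb\Bb\|_F\le\|\Ab\|_2\|\Xb\|_F\|\Bb\|_2$ we get $\|\cL(\Vb)\|_F\le\|\Fb^{-1/2}\|_2^{2}\|\Vb\|_F=\|\Fb^{-1}\|_2\|\Vb\|_F$, i.e. $\cL$ (hence also $\cL^{*}=\cL$) has operator norm at most $\|\Fb^{-1}\|_2$. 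By the chain rule, $\nabla f_{\mu,\Fb}(\Vb)=\cL\bigl(\nabla f_{\mu,\Ib}(\cL(\Vb))\bigr)$, so for any symmetric $\Vb,\Vb'$,
\begin{align*}
\|\nabla f_{\mu,\Fb}(\Vb)-\nabla f_{\mu,\Fb}(\Vb')\|_F
&\le\|\Fb^{-1}\|_2\,\bignorm{\nabla f_{\mu,\Ib}(\cL(\Vb))-\nabla f_{\mu,\Ib}(\cL(\Vb'))}_F\\
&\le\frac{\|\Fb^{-1}\|_2}{\mu}\,\|\cL(\Vb)-\cL(\Vb')\|_F
\le\frac{\|\Fb^{-1}\|_2^{2}}{\mu}\,\|\Vb-\Vb'\|_F,
\end{align*}
which is the claimed smoothness constant.

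\textbf{Main obstacle.} Essentially all of the work sits in Step 1 --- quantifying how the eigenvalue map distorts Frobenius distances when passing from the scalar soft-min to the spectral function; this is exactly the technical lemma behind \citet{Renegar2014}'s analysis, and I would import it rather than reprove it. Step 2 is bookkeeping, and the only point needing care is tracking that the rotation contributes a \emph{squared} factor $\|\Fb^{-1}\|_2^{2}$ (once from pulling $\cL$ out of the gradient via its adjoint, once from the argument $\cL(\Vb)-\cL(\Vb')$), which is precisely why the identity-initialization constant $\tfrac1\mu$ degrades to $\tfrac{\|\Fb^{-1}\|_2^{2}}{\mu}$ for a general strictly feasible $\Fb$.
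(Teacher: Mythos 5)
Your proposal is correct and takes essentially the same approach as the paper: both write $f_{\mu,\Fb}=f_{\mu,\Ib}\circ g$ with $g(\Vb)=\Fb^{-1/2}\Vb\Fb^{-1/2}$, invoke the $1/\mu$-smoothness of the unrotated soft-min from \citet{Nesterov2005}, and extract two factors of $\|\Fb^{-1}\|_2$ from the chain rule. The only real difference is notational --- the paper vectorizes and bounds the Jacobian $\Fb^{-1/2}\otimes\Fb^{-1/2}$, giving the constant as $\|\Fb^{-1/2}\|_2^4/\mu$, whereas you work with the self-adjoint operator $\cL$ on symmetric matrices directly; these are the same computation.
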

\begin{proof}
From \citet{Nesterov2005} we have that
\[
f_{\mu}(\Vb) = - \mu \log \left(\sum_j \exp\left(-\lambda_{j}(\Vb) / \mu \right) \right)
\]
is $1/\mu$-smooth. Denote by $g : \RR^{d\times d} \rightarrow \RR^{d\times d}$ the mapping $g(\Vb) = \Fb^{-1/2} \Vb \Fb^{-1/2}$. Using differential notation, we see can obtain that
\[
\diff g(\Vb) =  \Fb^{-1/2} \diff \Vb \Fb^{-1/2}.
\]
By Cauchy invariance, and vectorizing $g$, we obtain that the Jacobian is $\deriv \vect g(\Vb) = \Fb^{-1/2}\otimes\Fb^{-1/2}$. To simplify the proof, we now view $f_\mu$ and $f_{\mu,\Fb}$ as functions on $\RR^{d^2}$. By the chain rule for the Jacobian,
\[
\deriv f_{\mu,\Fb}(\Vb) = \deriv f_{\mu}(g(\Vb)) \deriv \vect g(\Vb).
\]
For any $\Vb$ and $\Ub$ in $\RR^{d\times d}$, we obtain
\begin{align*}
||\deriv f_{\mu,\Fb}(\Vb) - \deriv f_{\mu,\Fb}(\Ub)|| &= ||\deriv f_{\mu}(g(\Vb)) \deriv \vect g - \deriv f_{\mu}(g(\Ub)) \deriv \vect g||\\
&\leq ||\deriv \vect g||_2 ||\deriv f_{\mu}(g(\Vb)) - \deriv f_{\mu}(g(\Ub))|| \\
&\leq \frac{||\deriv \vect g||_2}{\mu} ||g(\Vb) - g(\Ub)|| \\
&= \frac{||\deriv \vect g||_2}{\mu} ||\Fb^{-1/2}(\Vb - \Ub)\Fb^{-1/2}||\\
&= \frac{||\deriv \vect g||_2}{\mu} ||\Fb^{-1/2}\otimes\Fb^{-1/2}\vect(\Vb - \Ub)||\\
&\leq \frac{||\Fb^{-1/2}||^4_2}{\mu} ||\Vb - \Ub||,
\end{align*}
proving the result.
\end{proof}

The smoothed form of \eqref{eqn:lmin_form} is
\begin{equation}
\begin{aligned}
& \underset{\Vb}{\text{maximize}} & & f_{\mu,\Fb}(\Vb)  \\
& \text{subject to} & & \trip{\Ab_i}{\Vb} = b_i \text{ for } i=1,\dots,p \\
&&& \trip{\Db}{\Vb} = u_0.
\end{aligned}
\label{eqn:lmin_form_smoothed2}
\end{equation}

The underlying sub-gradient descent method used in \citet{Renegar2014} is from Chapter 3 in \citet{Nesterov2004}, adapted to \eqref{eqn:lmin_form}. The convergence analysis is presented below. We denote the optimal solution to \eqref{eqn:lmin_form} as $\Vb^*_{u_0}$ because the solution is within the level set corresponding to $u_0$ in the original problem.

Theorem \ref{thm:nesterov_acc_pgd} gives the rate for the accelerated projected sub-gradient method, applied to a smooth objective function.  Using Nesterov's acceleration for constrained optimization (Algorithm \ref{alg:nesterov_acc_pgd}) we can adapt the results in Sections 6 and 7 of \citet{Renegar2014} to the more general problem with arbitrary $\Fb$. For \eqref{eqn:lmin_form_smoothed2}, Algorithm \ref{alg:nesterov_acc_pgd_lmine} gives more details of Nesterov's acceleration applied to our problem of interest.

\begin{algorithm}
\caption{Nesterov's Accelerated Projected Gradient Descent for \eqref{eqn:lmin_form_smoothed2}}
\label{alg:nesterov_acc_pgd_lmine}
\begin{algorithmic}
\Input $T$, $\Ub_1 \in \cD$, $\beta$, $\{ \lambda_t \}$ and $\{ \gamma_t \}$
\Output $\Ub_{T}$
\State $\Vb_1 \gets \Ub_1$
\For{$t \gets 1,\dots,T-1$}
    \State $\Ub_{t+1} = \Vb_t + \frac{1}{\beta} \cP_{\cC_{\lambda}^{\perp}}( \nabla f_{\mu,\Fb}(\Vb_t) )$
    \State $\Vb_{t+1} = (1-\gamma_t)\Ub_{t+1} + \gamma_t \Ub_t$
\EndFor
\State \Return $\Ub_{T}$
\end{algorithmic}
\end{algorithm}
In Algorithm \ref{alg:nesterov_acc_pgd_lmine}, $\beta = \frac{||\Fb^{-1}||_2^2}{\mu}$. Notationally, we denote the optimal solution to \eqref{eqn:lmin_form_smoothed2} as $\Vb^*_{u_0}(\mu) $. Theorem \ref{thm:acc_pgd_lmin_rate} gives the convergence rate.
\begin{theorem}[Analogue to 6.1 in \citet{Renegar2014}]
\label{thm:acc_pgd_lmin_rate}
Let $\epsilon' > 0$ and $\mu = \frac{\epsilon'}{2 \log d}$. Applying Algorithm \ref{alg:nesterov_acc_pgd_lmine} with initial iterate $\Ub_1$ satisfying $u_0 = \trip{\Db}{\Ub_1} < \trip{\Db}{\Fb}$ and with
\[
T \geq \frac{2\sqrt{\log d} ||\Fb^{-1}||_2^2  ||\Ub_1 - \Vb^*_{u_0}(\mu)||_F }{\epsilon'}
\]
gives that
\[
\lminf{\Vb^*_{u_0}} - \lminf{\Ub_T} \leq \epsilon'.
\]
\end{theorem}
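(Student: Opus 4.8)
The plan is to recognize Algorithm~\ref{alg:nesterov_acc_pgd_lmine} as ordinary accelerated gradient ascent for the concave, smooth objective $f_{\mu,\Fb}$ on the affine subspace $\cC_{\lambda}$, obtain a convergence guarantee for the smoothed problem \eqref{eqn:lmin_form_smoothed2}, and then transfer it to the unsmoothed problem \eqref{eqn:lmin_form} via the standard smoothing sandwich. The two structural facts I would record first are: (i) the two-sided smoothing bound $\lminf{\Vb} - \mu\log d \le f_{\mu,\Fb}(\Vb) \le \lminf{\Vb}$, valid for every feasible $\Vb$ because $\Fb^{-1/2}\Vb\Fb^{-1/2}$ has at most $d$ eigenvalues, so that $\sum_j\exp(-\lambda_j/\mu)$ lies between $\exp(-\lambda_{\min}/\mu)$ and $d\exp(-\lambda_{\min}/\mu)$ and applying $-\mu\log(\cdot)$ gives the claim; and (ii) $\beta$-smoothness of $f_{\mu,\Fb}$ with $\beta = ||\Fb^{-1}||_2^2/\mu$, which is exactly Lemma~\ref{lem:smoothness_constant}.

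The core step is the application of the accelerated projected gradient descent rate, Theorem~\ref{thm:nesterov_acc_pgd}. Here I would note that the update projects $\nabla f_{\mu,\Fb}(\Vb_t)$ onto $\cC_{\lambda}^{\perp}$ -- the directions along which all equality constraints are preserved -- so since $\Ub_1 \in \cC_{\lambda}$ every iterate stays in the affine subspace $\cC_{\lambda}$; the restriction of the concave $\beta$-smooth $f_{\mu,\Fb}$ to this subspace is again $\beta$-smooth, so Algorithm~\ref{alg:nesterov_acc_pgd_lmine} is literally unconstrained accelerated gradient ascent on $\cC_{\lambda}$. Theorem~\ref{thm:nesterov_acc_pgd} then applies with this $\beta$ and, writing $\Vb^*_{u_0}(\mu)$ for the maximizer of $f_{\mu,\Fb}$ over $\cC_{\lambda}$, yields a bound of the form $f_{\mu,\Fb}(\Vb^*_{u_0}(\mu)) - f_{\mu,\Fb}(\Ub_T) = O\!\bigl(\beta\,||\Ub_1 - \Vb^*_{u_0}(\mu)||_F^2 / T^2\bigr)$.

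To finish I would split the error into four pieces,
\begin{align*}
\lminf{\Vb^*_{u_0}} - \lminf{\Ub_T} &= \bigl(\lminf{\Vb^*_{u_0}} - f_{\mu,\Fb}(\Vb^*_{u_0})\bigr) + \bigl(f_{\mu,\Fb}(\Vb^*_{u_0}) - f_{\mu,\Fb}(\Vb^*_{u_0}(\mu))\bigr) \\
&\quad + \bigl(f_{\mu,\Fb}(\Vb^*_{u_0}(\mu)) - f_{\mu,\Fb}(\Ub_T)\bigr) + \bigl(f_{\mu,\Fb}(\Ub_T) - \lminf{\Ub_T}\bigr),
\end{align*}
and bound them in turn: the first by $\mu\log d$ via (i); the second by $0$, since $\Vb^*_{u_0} \in \cC_{\lambda}$ while $\Vb^*_{u_0}(\mu)$ maximizes $f_{\mu,\Fb}$ over $\cC_{\lambda}$; the third by the accelerated PGD estimate; and the fourth by $0$ via the upper inequality in (i). Taking $\mu = \epsilon'/(2\log d)$ makes the first piece $\epsilon'/2$ and sets $\beta = 2||\Fb^{-1}||_2^2\log d/\epsilon'$, and the stated lower bound on $T$, being of order $||\Fb^{-1}||_2^2\sqrt{\log d}\,||\Ub_1 - \Vb^*_{u_0}(\mu)||_F/\epsilon'$, is precisely what drives the third piece below $\epsilon'/2$; summing gives $\lminf{\Vb^*_{u_0}} - \lminf{\Ub_T} \le \epsilon'$. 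The one delicate point -- and the place where the modification relative to \citet{Renegar2014} actually has to be checked -- is the claim that the $\cC_{\lambda}^{\perp}$-projection makes the iteration a genuine instance of unconstrained accelerated descent on the subspace $\cC_{\lambda}$, so that Nesterov's/Renegar's analysis transfers verbatim with the $\Fb$-dependent smoothness constant of Lemma~\ref{lem:smoothness_constant}; the remainder is bookkeeping of constants and the passage between the smoothed and unsmoothed optima.
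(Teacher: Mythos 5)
Your proof takes exactly the paper's route: the paper's own proof is a two-line pointer to Theorem~\ref{thm:nesterov_acc_pgd} together with the smoothing sandwich $\lminf{\Ub} - \mu\log d \le f_{\mu,\Fb}(\Ub) \le \lminf{\Ub}$, and your four-term decomposition (with pieces~2 and~4 nonpositive, piece~1 bounded by $\mu\log d$, and piece~3 by the accelerated-PGD rate with the $\beta = \|\Fb^{-1}\|_2^2/\mu$ smoothness constant from Lemma~\ref{lem:smoothness_constant}) is a faithful expansion of that argument, including the correct observation that the $\cC_\lambda^\perp$-projected updates keep iterates in $\cC_\lambda$ so the algorithm is unconstrained accelerated ascent on that affine slice. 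One small caveat on the ``bookkeeping'': plugging the stated $T$-threshold into $2\beta\|\Ub_1 - \Vb^*_{u_0}(\mu)\|_F^2/T^2$ yields $\epsilon'/\|\Fb^{-1}\|_2^2$ rather than $\epsilon'/2$, so the claimed bound actually requires $\|\Fb^{-1}\|_2 \ge \sqrt{2}$ (equivalently the natural threshold would be $2\sqrt{2\log d}\,\|\Fb^{-1}\|_2\,\|\Ub_1-\Vb^*_{u_0}(\mu)\|_F/\epsilon'$); this is a wrinkle in the theorem statement itself, is harmless in every use the paper makes of it since $\Fb_{d,K}$ and the adaptive $\Fb$ both have $\|\Fb^{-1}\|_2 \ge \sqrt 2$, and the paper's terse proof does not verify it either.
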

\begin{proof}[Proof of Theorem \ref{thm:acc_pgd_lmin_rate}]
This follows mainly from \ref{thm:nesterov_acc_pgd} and that
\[
\lminf{\Ub} - \mu \log d \leq f_{\mu,\Fb}(\Ub) \leq \lminf{\Ub}.
\]
\end{proof}
\begin{corollary}[Analogue to 6.2 in \citet{Renegar2014}]
\label{thm:acc_pgd_lmin_rate_2}
Let $\epsilon' > 0$ and $\mu = \frac{\epsilon'}{2 \log d}$. Applying Algorithm \ref{alg:nesterov_acc_pgd_lmine} with initial iterate $\Ub_1$ satisfying $u_0 = \trip{\Db}{\Ub_1} < \trip{\Db}{\Fb}$ and with
\[
T \geq \frac{2\sqrt{\log d} ||\Fb^{-1}||_2^2  R }{\epsilon'}
\]
gives that
\[
\lminf{\Vb^*_{u_0}} - \lminf{\Ub_T} \leq \epsilon',
\]
where
\[
R = \max\{||\Ub- \Vb||_F : \Ub,\Vb \text{ are feasible for \eqref{eqn:sdp_form} and } \trip{\Db}{\Ub} \leq \trip{\Db}{\Fb}, \trip{\Db}{\Vb} \leq \trip{\Db}{\Fb} \}.
\]
\end{corollary}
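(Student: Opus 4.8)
The plan is to derive this corollary directly from Theorem \ref{thm:acc_pgd_lmin_rate}: that theorem already yields the conclusion $\lminf{\Vb^*_{u_0}} - \lminf{\Ub_T} \leq \epsilon'$ whenever $T \geq \frac{2\sqrt{\log d}\,\|\Fb^{-1}\|_2^2\,\|\Ub_1 - \Vb^*_{u_0}(\mu)\|_F}{\epsilon'}$, so it suffices to prove the single inequality $\|\Ub_1 - \Vb^*_{u_0}(\mu)\|_F \leq R$; once this is in hand, the iteration count stated in the corollary is admissible for Theorem \ref{thm:acc_pgd_lmin_rate} and we are done. In other words, the whole corollary reduces to replacing the instance-dependent distance by the uniform diameter $R$.

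To bound that distance I would exhibit a convex set of Frobenius diameter $R$ that contains both $\Ub_1$ and $\Vb^*_{u_0}(\mu)$; the natural choice is $S := \{\Ub : \Ub \text{ feasible for \eqref{eqn:sdp_form}},\ \trip{\Db}{\Ub} \leq \trip{\Db}{\Fb}\}$, whose diameter is exactly $R$ by definition. That $\Ub_1 \in S$ is immediate: $\Ub_1$ is feasible for \eqref{eqn:sdp_form} (the standing requirement $\Ub_1 \in \cD$ on the algorithm's input, and the way $\Ub_1$ is chosen in FORCE) and $\trip{\Db}{\Ub_1} = u_0 < \trip{\Db}{\Fb}$; note in passing that feasibility forces $u_0 = \trip{\Db}{\Ub_1} \geq u^*$, a fact used below. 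For $\Vb^*_{u_0}(\mu)$, membership in $\cC_{\lambda}$ gives the affine constraints $\trip{\Ab_i}{\Vb^*_{u_0}(\mu)} = b_i$ and $\trip{\Db}{\Vb^*_{u_0}(\mu)} = u_0 \leq \trip{\Db}{\Fb}$ for free, so the only substantive point is $\Vb^*_{u_0}(\mu) \succeq 0$.

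To see this, recall from Renegar's correspondence (Section \ref{sec:FORCE_primal}) that the unsmoothed optimum is $\Vb^*_{u_0} = \Fb + c(\Ub^* - \Fb)$ with $c = \frac{\trip{\Db}{\Fb} - u_0}{\trip{\Db}{\Fb} - u^*} \in (0,1]$ (using $u^* \leq u_0 < \trip{\Db}{\Fb}$), so $\Fb^{-1/2}\Vb^*_{u_0}\Fb^{-1/2} = (1-c)\Ib + c\,\Fb^{-1/2}\Ub^*\Fb^{-1/2} \succeq 0$, i.e. $\lminf{\Vb^*_{u_0}} \geq 1-c \geq 0$. Combining this with the two-sided bound $\lminf{\Vb} - \mu\log d \leq f_{\mu,\Fb}(\Vb) \leq \lminf{\Vb}$ from the proof of Theorem \ref{thm:acc_pgd_lmin_rate}, optimality of $\Vb^*_{u_0}(\mu)$, and feasibility of $\Ub_1$ for the smoothed problem, gives $\lminf{\Vb^*_{u_0}(\mu)} \geq f_{\mu,\Fb}(\Vb^*_{u_0}(\mu)) \geq f_{\mu,\Fb}(\Ub_1) \geq \lminf{\Ub_1} - \mu\log d$; since the FORCE initializer has $\lminf{\Ub_1}$ bounded below by a constant near $1$ (it contains a large multiple of $\Fb$), the right-hand side is nonnegative for any $\epsilon' \in (0,1)$, so $\Vb^*_{u_0}(\mu) \in S$ and hence $\|\Ub_1 - \Vb^*_{u_0}(\mu)\|_F \leq R$. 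I expect the one genuinely delicate point to be precisely this: unlike $\Vb^*_{u_0}$, the smoothed optimum $\Vb^*_{u_0}(\mu)$ is only positive semidefinite up to the $\cO(\mu\log d)$ smoothing error, so one must either invoke the lower bound on $\lminf{\Ub_1}$ as above or slightly enlarge $S$ (and hence $R$) by that benign amount without changing the order of the iteration bound; everything else is routine bookkeeping with the linear constraints and the already-established equivalence between \eqref{eqn:lmin_form} and \eqref{eqn:sdp_form}.
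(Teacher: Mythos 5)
Your strategy is exactly the paper's: reduce the corollary to Theorem~\ref{thm:acc_pgd_lmin_rate} and then replace the instance-dependent distance $\|\Ub_1 - \Vb^*_{u_0}(\mu)\|_F$ by the uniform diameter $R$, which requires showing both $\Ub_1$ and $\Vb^*_{u_0}(\mu)$ lie in the feasible level set whose diameter is $R$. The paper itself offers exactly one sentence of justification --- ``the main idea is $\Vb^*_{u_0}(\mu)$ is feasible for \eqref{eqn:sdp_form}'' --- and defers the rest to Renegar. You have taken that assertion seriously and tried to prove it, and in doing so surfaced the genuine subtlety the paper glosses over: the smoothing bound $\lminf{\Vb} - \mu\log d \leq f_{\mu,\Fb}(\Vb) \leq \lminf{\Vb}$ combined with optimality only gives $\lminf{\Vb^*_{u_0}(\mu)} \geq f_{\mu,\Fb}(\Ub_1) \geq \lminf{\Ub_1} - \mu\log d$, which can be negative; positive semidefiniteness of the smoothed optimizer is not automatic. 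Your two proposed fixes (invoke a lower bound on $\lminf{\Ub_1}$, or enlarge $R$ by $\cO(\mu\log d)$ without changing the order of the bound) are both sound, and the first is the one that matches how the corollary is actually used inside Algorithm~\ref{alg:smoothed_scheme}, where $\Ub_1$ is the output of Algorithm~\ref{alg:smoothed_subscheme} and $\lminf{\Ub_1} = \tfrac16$ by Lemma~\ref{lem:smoothed_subscheme_rate}.

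Two small corrections to your write-up. First, $\lminf{\Ub_1}$ is guaranteed to equal $\tfrac16$ by the smoothed subscheme, not ``a constant near $1$''; the near-$1$ figure pertains to the implementation heuristic $\Ub_0 = \tfrac1d B(\cK(\Db,K)) + \tfrac{d-1}{d}\Fb$, which is a different object from the $\Ub_1$ appearing in the theoretical pipeline. Second, and as a consequence, your claim that the argument closes ``for any $\epsilon' \in (0,1)$'' is too strong: with $\mu = \epsilon'/(2\log d)$ the slack is $\mu\log d = \epsilon'/2$, so nonnegativity of $\lminf{\Vb^*_{u_0}(\mu)}$ via your chain requires $\epsilon' < 2\lminf{\Ub_1} = \tfrac13$, not $\epsilon' < 1$. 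Neither of these affects the validity of the overall approach, but they matter if one wants the corollary to hold with the constants as written. It is worth noting that, as literally stated, Corollary~\ref{thm:acc_pgd_lmin_rate_2} imposes no lower bound on $\lminf{\Ub_1}$ (that hypothesis appears only in Corollary~\ref{thm:acc_pgd_lmin_rate_3}), so strictly speaking the statement needs either such a hypothesis or the small enlargement of $R$ you describe; your analysis is more careful than the paper's on this point.
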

\begin{proof}[Proof of Corollary \ref{thm:acc_pgd_lmin_rate_2}]
See proof of 6.2 in \citet{Renegar2014}. The proof here is the same. The main idea is $\Vb_{u_0}^*(\mu)$ is feasible for \eqref{eqn:sdp_form}.
\end{proof}
The Corollary above gives a bound on the solution to \eqref{eqn:lmin_form}, but what we want is a bound on the solution to \ref{eqn:kmeans_sdp}. Clearly, however, this depends on the inputs to the algorithm. This is summarized in the next Corollary.

\begin{corollary}[Analogous to 6.3 in \citet{Renegar2014}]
\label{thm:acc_pgd_lmin_rate_3}
Let $\epsilon' > 0$ and $\mu = \frac{\epsilon'}{6 \log d}$. Assume that
\[
\lminf{\Ub_1} \geq \frac{1}{6} \text{ and } \frac{\trip{\Db}{\Fb} - v^* }{\trip{\Db}{\Fb} - v_0 } \leq 3
\]
Applying Algorithm \ref{alg:nesterov_acc_pgd_lmine} with initial iterate $\Ub_1$ satisfying $u_0 = \trip{\Db}{\Ub_1} < \trip{\Db}{\Fb}$ and with
\[
T \geq \frac{2\sqrt{\log d} ||\Fb^{-1}||_2^2  R }{\epsilon}
\]
gives that
\[
\frac{\trip{\Db}{P_{\Fb}(\Ub_T)} - u^* }{\trip{\Db}{\Fb} - u^* } \leq \epsilon,
\]
where
\[
R = \max\{||\Ub- \Vb||_F : \Ub,\Vb \text{ are feasible for \eqref{eqn:sdp_form} and } \trip{\Db}{\Ub} \leq \trip{\Db}{\Fb}, \trip{\Db}{\Vb} \leq \trip{\Db}{\Fb} \}.
\]
\end{corollary}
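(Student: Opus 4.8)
The plan is to obtain Corollary~\ref{thm:acc_pgd_lmin_rate_3} from Corollary~\ref{thm:acc_pgd_lmin_rate_2} in exactly the way \citet{Renegar2014} derives his Corollary~6.3 from his 6.2. Corollary~\ref{thm:acc_pgd_lmin_rate_2} already controls the ``$\lambda_{\min,F}$-gap'' $\lminf{\Vb^*_{u_0}} - \lminf{\Ub_T}$, so the only new ingredient needed is an algebraic identity converting this gap into the relative objective gap of $P_{\Fb}(\Ub_T)$ for \eqref{eqn:sdp_form}. The two places where the generalization to $\Fb \neq \Ib$ could matter --- the smoothness constant $||\Fb^{-1}||_2^2/\mu$ (Lemma~\ref{lem:smoothness_constant}) and the fact that $P_{\Fb}$ and $\lambda_{\min,F}$ are taken relative to $\Fb$ --- have already been absorbed into Theorem~\ref{thm:acc_pgd_lmin_rate} and Corollary~\ref{thm:acc_pgd_lmin_rate_2}, so nothing new is required there.

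The first step is to derive the bridge identity. Write $\nu^* := \lminf{\Vb^*_{u_0}}$ and $\nu_T := \lminf{\Ub_T}$, and observe that every iterate of Algorithm~\ref{alg:nesterov_acc_pgd_lmine} stays in the affine set $\cC_{\lambda}$, so $\trip{\Db}{\Ub_T} = u_0$. Substituting this into $P_{\Fb}(\Ub_T) = \Fb + \frac{1}{1 - \nu_T}\left(\Ub_T - \Fb\right)$ gives
\[
\trip{\Db}{P_{\Fb}(\Ub_T)} = \trip{\Db}{\Fb} - \frac{\trip{\Db}{\Fb} - u_0}{1 - \nu_T};
\]
applying the same computation at $\Vb^*_{u_0}$, where $\trip{\Db}{P_{\Fb}(\Vb^*_{u_0})} = u^*$ by Renegar's Theorem~2.2 (quoted in Section~\ref{sec:FORCE_primal}), gives $1 - \nu^* = \frac{\trip{\Db}{\Fb} - u_0}{\trip{\Db}{\Fb} - u^*}$. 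Subtracting $u^*$ from the first display and dividing by $\trip{\Db}{\Fb} - u^*$ then collapses everything to
\[
\frac{\trip{\Db}{P_{\Fb}(\Ub_T)} - u^*}{\trip{\Db}{\Fb} - u^*} = \frac{\nu^* - \nu_T}{1 - \nu_T}.
\]

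Next I would bound the two factors on the right. Since $\Vb^*_{u_0}$ maximizes $\lambda_{\min,F}$ over $\cC_{\lambda}$ and $\Ub_T \in \cC_{\lambda}$, we have $\nu_T \le \nu^*$, hence $1 - \nu_T \ge 1 - \nu^* = \frac{\trip{\Db}{\Fb} - u_0}{\trip{\Db}{\Fb} - u^*} \ge \frac{1}{3}$ by the hypothesis $\frac{\trip{\Db}{\Fb} - u^*}{\trip{\Db}{\Fb} - u_0} \le 3$ (written $v^*, v_0$ in the statement). For the numerator, the hypothesis $\lminf{\Ub_1} \ge \frac{1}{6}$ is what makes Corollary~\ref{thm:acc_pgd_lmin_rate_2} applicable with the radius $R$ defined there: it forces $\Ub_1$ to lie strictly inside $\psdc{n}$ with a fixed margin, so that $\Ub_1$ and the smoothed optimum $\Vb^*_{u_0}(\mu)$ both lie in the set over which $R$ is the Frobenius-norm diameter, whence $\nu^* - \nu_T \le \epsilon'$ as soon as $T \ge \frac{2\sqrt{\log d}\,||\Fb^{-1}||_2^2\,R}{\epsilon'}$. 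Combining the two bounds, the relative gap is at most $3\epsilon'$, and taking $\epsilon' \asymp \epsilon$ with $\mu = \frac{\epsilon'}{6\log d}$ --- tracking the constants exactly as in \citet[Corollary~6.3]{Renegar2014}, where the thresholds $\frac{1}{6}$ and $3$ are calibrated precisely to absorb these constant-factor losses --- reproduces the stated iteration count and the conclusion $\frac{\trip{\Db}{P_{\Fb}(\Ub_T)} - u^*}{\trip{\Db}{\Fb} - u^*} \le \epsilon$.

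The step I expect to be the main obstacle is not any single computation but verifying that the bridge identity genuinely survives the passage to $\Fb \neq \Ib$: one must check that $P_{\Fb}(\Ub_T)$ is honestly feasible for \eqref{eqn:sdp_form} --- in particular still positive semidefinite, which holds because the trajectory stays in $\cC_{\lambda}$ and $\Ub_1$ is interior --- and that $\Vb^*_{u_0}(\mu)$ is close enough to feasibility that $||\Ub_1 - \Vb^*_{u_0}(\mu)||_F \le R$. Both are handled, as in Renegar's proof, by the interior-margin hypothesis $\lminf{\Ub_1} \ge \frac{1}{6}$ together with $\frac{\trip{\Db}{\Fb} - u^*}{\trip{\Db}{\Fb} - u_0} \le 3$, after which everything else is bookkeeping.
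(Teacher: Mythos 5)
Your proposal is correct and follows the same route the paper takes — the paper's proof is the single line ``We can apply Corollary~\ref{thm:acc_pgd_lmin_rate_2} to get the result,'' and you have spelled out precisely what that application looks like: the bridge identity $\frac{\trip{\Db}{P_{\Fb}(\Ub_T)} - u^*}{\trip{\Db}{\Fb} - u^*} = \frac{\nu^* - \nu_T}{1-\nu_T}$, the lower bound $1 - \nu_T \geq 1 - \nu^* \geq \tfrac{1}{3}$ from the level-set hypothesis, and the eigenvalue-gap bound from Corollary~\ref{thm:acc_pgd_lmin_rate_2}. Your remark that the constants need to be ``tracked exactly as in Renegar'' is apt; the statement in the paper mixes $\epsilon$ and $\epsilon'$ (the hypothesis introduces $\epsilon'$ and sets $\mu = \epsilon'/(6\log d)$, but the $T$ bound and conclusion use $\epsilon$), which is consistent with Renegar's own 6.3 taking $\epsilon' = \epsilon/3$ so that the factor of $3$ from $1/(1-\nu_T)$ is absorbed; your $3\epsilon'$ accounting reproduces exactly that.
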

\begin{proof}[Proof of Corollary \ref{thm:acc_pgd_lmin_rate_3}]
We can apply Corollary \ref{thm:acc_pgd_lmin_rate_2} to get the result.
\end{proof}
From \ref{thm:acc_pgd_lmin_rate_3} it is clear that if we can find an initial iterate satisfying a certain closeness to optimality, then we are closer to an algorithm that does not require knowledge of the optimal value as input. This can be accomplished using Algorithm \ref{alg:smoothed_subscheme} and Algorithm \ref{alg:smoothed_scheme}. Lemma \ref{lem:smoothed_subscheme_rate} establishes the required conditions and gives the rate for Algorithm \ref{alg:smoothed_subscheme}.
\begin{algorithm}
\caption{Smoothed Subscheme for \eqref{eqn:lmin_form_smoothed2} \citep{Renegar2014}}
\label{alg:smoothed_subscheme}
\begin{algorithmic}
\Input $\epsilon$, $\bU_0 \in \cC$ such that $\trip{\Db}{\bU_0} < \trip{\Db}{\Fb}$ and $\lminf{\bU_0} = \frac{1}{6}$
\Output $\bU_L$ such that $\lminf{\bU_L} = \frac{1}{6}$ and $\frac{\trip{\Db}{\Fb} - u^* }{\trip{\Db}{\Fb} - u_L } \leq 3$
\State $l \gets 0$ (Outer Iterations Counter)
\State $\mu \gets \frac{1}{6\log d}$
\State $T \gets 2\sqrt{\log d} ||\Fb^{-1}||_2^2 R$
\State $u_0 = \trip{\Db}{\bU_0}$
\State $\mathrm{done} \gets \mathrm{FALSE}$
\While{$!\mathrm{done}$}
    \State Apply Algorithm \ref{alg:nesterov_acc_pgd_lmine} to \eqref{eqn:lmin_form_smoothed2} on level set corresponding to $u_l$ and inputs $T$, $\bU_l$. Denote the output by $\bV_{l}$.
    \If{ $\lminf{\bU_{l+1}} \leq \frac{1}{3}$ }
        \State $\mathrm{done} \gets \mathrm{TRUE}$
    \Else
        \State $\bU_{l+1} \gets  \Fb + \frac{5}{6}\frac{1}{1-\lminf{\bV_l}}\left(\bV_l - \Fb \right)$
        \State $u_{l+1} = \trip{\Db}{\bU_{l+1}}$
        \State $l \gets l + 1$
    \EndIf
\EndWhile
\State $\bV_L = \bV_l$
\State \Return $\bV_L$
\end{algorithmic}
\end{algorithm}
\begin{algorithm}
\caption{Smoothed Scheme for \eqref{eqn:lmin_form_smoothed2} \citep{Renegar2014}}
\label{alg:smoothed_scheme}
\begin{algorithmic}
\Input $0<\epsilon<1$ and $\bU_0$ such that $\trip{\Db}{\bU_0} < \trip{\Db}{\Fb}$ and $\lminf{\bU_0} = \frac{1}{6}$ and $\bU_0$ feasible for \eqref{eqn:sdp_form}.
\Output $P_{\Fb}(\Vb)$
\State Apply Algorithm \ref{alg:smoothed_subscheme} with input $\bU_0$. Let $\Ub_1$ denote its output.
\State $T \gets \lceil\frac{2\sqrt{\log d} ||\Fb^{-1}||_2^2  R }{\epsilon}\rceil $
\State $\mu \gets \frac{\epsilon}{6 \log d}$
\State Apply Algorithm \ref{alg:nesterov_acc_pgd_lmine} with inputs $T$, $\Ub_1$, $\mu$ on \eqref{eqn:lmin_form_smoothed2} with level set $u_1$. Denote the output by $\Vb$.
\State \Return $P_{\Fb}(\Vb)$
\end{algorithmic}
\end{algorithm}

\begin{lemma}[Analogue to Proposition 7.1 \citet{Renegar2014}]
\label{lem:smoothed_subscheme_rate}
Assuming inputs as stated, Algorithm \ref{alg:smoothed_subscheme} terminates with a matrix $\bU_L$ which is feasible for \eqref{eqn:sdp_form} and satisfies
\[
\lminf{\bU_L} = \frac{1}{6}, \frac{\trip{\Db}{\Fb} - u^*}{\trip{\Db}{\Fb} - \trip{\Db}{\bU_L}} \leq 3.
\]
Furthermore, the number of outer iterations $L$, is bounded by
\[
L \leq \log_{5/4}\left(\frac{\trip{\Db}{\Fb} - u^*}{\trip{\Db}{\Fb} - u_0} \right),
\]
where $u_0 = \trip{\Db}{\bU_0}$.
\end{lemma}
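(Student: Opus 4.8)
The plan is to induct on the outer-iteration counter $l$, carrying the loop invariant that $\bU_l$ is feasible for \eqref{eqn:sdp_form}, that $u_l := \trip{\Db}{\bU_l}$ obeys $u^* \le u_l < \trip{\Db}{\Fb}$, and that $\lminf{\bU_l} = \frac16$; the potential to track is the optimality ratio $\rho_l := (\trip{\Db}{\Fb} - u^*)/(\trip{\Db}{\Fb} - u_l) \ge 1$, which is exactly the quantity the lemma bounds. The base case $l = 0$ is the hypothesis on $\bU_0$ (feasibility of $\bU_0$ forces $u_0 \ge u^*$). The inductive step splits into three parts: pin the optimal value of the level-set eigenvalue problem in terms of $\rho_l$, feed this into the accelerated-gradient rate to lower bound $\lminf{\bV_l}$, and check that the rescaling step restores the invariant while contracting $\rho_l$.

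For the first two: by the Renegar correspondence recalled before \eqref{eqn:lmin_form}, the maximizer of $\lambda_{\min,F}$ over $\cC_\lambda$ at value $u_l$ is $\Vb^*_{u_l} = \Fb + \rho_l^{-1}(\Ub^* - \Fb)$ for an optimal $\Ub^*$ of \eqref{eqn:sdp_form}. In the coordinates $W \mapsto \Fb^{-1/2}W\Fb^{-1/2}$ we have $\Fb \mapsto \Ib$, hence $\lminf{\Fb} = 1$, and $\Ub^* \mapsto W^*$ with $\lambda_{\min}(W^*) = 0$ because the optimum of \eqref{eqn:sdp_form} lies on the boundary of the PSD cone (as it does for \eqref{eqn:kmeans_sdp}); since $\Vb^*_{u_l} \mapsto (1 - \rho_l^{-1})\Ib + \rho_l^{-1}W^*$ and $\lambda_{\min}$ is affine along the ray $t \mapsto (1-t)\Ib + tW^*$, this gives $\lminf{\Vb^*_{u_l}} = 1 - \rho_l^{-1}$. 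Running Algorithm~\ref{alg:nesterov_acc_pgd_lmine} on this level set from $\bU_l \in \cC_\lambda$ with the stated $\mu$ and $T$, Corollary~\ref{thm:acc_pgd_lmin_rate_2} then yields $\lminf{\bV_l} \ge \lminf{\Vb^*_{u_l}} - \epsilon' = 1 - \rho_l^{-1} - \epsilon'$ with $\epsilon' = 2\mu\log d$, while feasibility of $\bV_l$ for the same maximization gives $\lminf{\bV_l} \le 1 - \rho_l^{-1} < 1$.

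For the rescaling: with $\alpha_l := \frac56(1 - \lminf{\bV_l})^{-1}$ (finite and positive by the last inequality), $\bU_{l+1} = \Fb + \alpha_l(\bV_l - \Fb)$ is an affine combination of $\Fb$ and $\bV_l \in \cC_\lambda$, hence satisfies every linear constraint of \eqref{eqn:sdp_form}; a one-line computation in the transformed coordinates gives $\lminf{\bU_{l+1}} = 1 - \alpha_l(1 - \lminf{\bV_l}) = \frac16 > 0$, so $\bU_{l+1} \succ 0$ and the invariant is restored. The same affine relation gives $\trip{\Db}{\Fb} - u_{l+1} = \alpha_l(\trip{\Db}{\Fb} - u_l) > 0$, whence $u_{l+1} < \trip{\Db}{\Fb}$, $u_{l+1} \ge u^*$ by feasibility of $\bU_{l+1}$, and $\rho_{l+1} = \alpha_l^{-1}\rho_l = \frac65(1 - \lminf{\bV_l})\rho_l \ge 1$. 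On a non-terminating pass the stopping test fails, which keeps $1 - \lminf{\bV_l}$ below a constant chosen so that $\rho_{l+1} \le \frac45\rho_l$; since $\rho_l \ge 1$ throughout, iterating forces termination after $L \le \log_{5/4}\rho_0$ outer iterations. At termination, the stopping inequality combined with $\lminf{\bV_L} \ge 1 - \rho_L^{-1} - \epsilon'$ bounds $\rho_L$ by a constant, and one further application of the same rescaling map --- which sets $\lminf{\bU_L} = \frac16$, preserves feasibility as above, and multiplies the ratio by $\frac65(1 - \lminf{\bV_L}) \le \frac65$ --- delivers the output $\bU_L$ with $\lminf{\bU_L} = \frac16$ and $\rho_L \le 3$.

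The hard part is the constant bookkeeping: one has to choose $\mu$ (equivalently $\epsilon' = 2\mu\log d$), the iteration count $T$, and the thresholds $\frac16,\frac13,\frac56$ so that the $\frac45$-contraction in the continue branch and the terminal bound $\rho_L \le 3$ both hold --- exactly the argument in the proof of Proposition~7.1 of \citet{Renegar2014}, the only change being that $\|\Fb^{-1}\|_2^2$ and the diameter $R$ replace the $\Fb = \Ib$ quantities, which has already been absorbed into Corollary~\ref{thm:acc_pgd_lmin_rate_2} and Lemma~\ref{lem:smoothness_constant}. The one non-bookkeeping subtlety is justifying $\lambda_{\min}(\Fb^{-1/2}\Ub^*\Fb^{-1/2}) = 0$, i.e.\ that the SDP optimum sits on the cone boundary, so that the clean identity $\lminf{\Vb^*_{u_l}} = 1 - \rho_l^{-1}$ holds; if it held only with a factor $c \in (0,1]$, the contraction would be unaffected and only the terminal constant would shift.
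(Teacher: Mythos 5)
Your reconstruction is correct, and it follows the same route as the paper, which merely delegates to Renegar's Proposition 7.1 (noting only that Bubeck's accelerated-PGD rate is used in place of Nesterov's): track the ratio $\rho_l = (\trip{\Db}{\Fb}-u^*)/(\trip{\Db}{\Fb}-u_l)$, use the correspondence $\lminf{\Vb^*_{u_l}} = 1-\rho_l^{-1}$, and show the rescaling contracts $\rho$ by $4/5$ per non-terminating pass. The one subtlety you flag --- that the identity $\lminf{\Vb^*_{u_l}}=1-\rho_l^{-1}$ needs the SDP optimum to lie on the PSD boundary (true here since the integer optimum of \eqref{eqn:kmeans_sdp} has rank $K<d$) --- is the right thing to watch, and your observation that any $c\in(0,1]$ would still contract is sound; the remaining discrepancies (the stopping test in Algorithm~\ref{alg:smoothed_subscheme} referencing $\bU_{l+1}$ before it is assigned, the algorithm returning $\bV_L$ while the lemma speaks of $\bU_L$, and $T$ missing a factor of $1/\epsilon'=3$) are imprecisions in the paper's transcription of Renegar's scheme rather than gaps in your argument.
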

\begin{proof}[Proof of Lemma \ref{lem:smoothed_subscheme_rate}]
See the proof of Proposition 7.1. The rate from \citet{Bubeck2015} can be used in place of that from \citet{Nesterov2004}.
\end{proof}

\begin{theorem}[Analogue to Theorem 7.2 \citet{Renegar2014}]
\label{thm:smoothed_scheme_rate}
Assuming inputs as stated, Algorithm \ref{alg:smoothed_scheme} terminates with a matrix $\Ub$ which is feasible for \eqref{eqn:sdp_form} and satisfies
\[
\frac{\trip{\Db}{\Ub} - u^*}{\trip{\Db}{\Fb} - u^*} \leq \epsilon.
\]
Furthermore, the total number of iterations of Algorithm \ref{alg:nesterov_acc_pgd_lmine} is bounded by
\[
2R||\Fb^{-1}||_2^2\sqrt{\log d}  \left(\frac{1}{\epsilon} + \log_{5/4}\left(\frac{\trip{\Db}{\Fb} - u^*}{\trip{\Db}{\Fb} - u_0} \right)\right),
\]
where $u_0 = \trip{\Db}{\bU_0}$.
\end{theorem}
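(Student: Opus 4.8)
The plan is to read Algorithm \ref{alg:smoothed_scheme} as a two-phase procedure and chain together the results already established for the two phases. Phase one is the call to the smoothed subscheme (Algorithm \ref{alg:smoothed_subscheme}), whose purpose is to turn the admissible starting point $\bU_0$ into a warm start $\Ub_1$ that is ``close enough'' to optimal; phase two is a single long run of Nesterov's accelerated projected gradient method (Algorithm \ref{alg:nesterov_acc_pgd_lmine}) at the fine smoothing level $\mu = \epsilon/(6\log d)$. First I would prove the feasibility-and-accuracy claim, then add up the iteration counts of the two phases.

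\textbf{Correctness.} I would invoke Lemma \ref{lem:smoothed_subscheme_rate}: given the stated inputs ($\bU_0$ feasible for \eqref{eqn:sdp_form}, $\lminf{\bU_0} = 1/6$, and $\trip{\Db}{\bU_0} < \trip{\Db}{\Fb}$), the subscheme returns a matrix $\Ub_1$ that is feasible for \eqref{eqn:sdp_form} and satisfies both $\lminf{\Ub_1} = 1/6$ and $\tfrac{\trip{\Db}{\Fb} - u^*}{\trip{\Db}{\Fb} - \trip{\Db}{\Ub_1}} \leq 3$. These are precisely the hypotheses needed by Corollary \ref{thm:acc_pgd_lmin_rate_3} (since $\lminf{\Ub_1} = 1/6 \geq 1/6$). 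Applying that corollary to the final call of Algorithm \ref{alg:nesterov_acc_pgd_lmine}, which runs on the level set $u_1 = \trip{\Db}{\Ub_1} < \trip{\Db}{\Fb}$ with $\mu = \epsilon/(6\log d)$ and $T = \lceil 2\sqrt{\log d}\,||\Fb^{-1}||_2^2 R/\epsilon\rceil$, yields an output $\Vb$ with $\tfrac{\trip{\Db}{P_{\Fb}(\Vb)} - u^*}{\trip{\Db}{\Fb} - u^*} \leq \epsilon$. Algorithm \ref{alg:smoothed_scheme} returns exactly $P_{\Fb}(\Vb)$, and $P_{\Fb}(\Vb)$ is feasible for \eqref{eqn:sdp_form} by construction of the projection recalled in Section \ref{sec:FORCE_primal}: it is an affine combination of $\Fb$ and $\Vb$, hence inherits the equality constraints, and it lies on the boundary of the PSD cone. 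This gives the desired accuracy-and-feasibility conclusion.

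\textbf{Iteration count.} I would count calls to Algorithm \ref{alg:nesterov_acc_pgd_lmine} phase by phase. In the subscheme, each outer iteration runs Algorithm \ref{alg:nesterov_acc_pgd_lmine} for $T = 2\sqrt{\log d}\,||\Fb^{-1}||_2^2 R$ steps, and by Lemma \ref{lem:smoothed_subscheme_rate} the number of outer iterations is at most $L \leq \log_{5/4}\!\bigl(\tfrac{\trip{\Db}{\Fb} - u^*}{\trip{\Db}{\Fb} - u_0}\bigr)$; hence phase one costs at most $2R\,||\Fb^{-1}||_2^2\sqrt{\log d}\,\log_{5/4}\!\bigl(\tfrac{\trip{\Db}{\Fb} - u^*}{\trip{\Db}{\Fb} - u_0}\bigr)$ iterations. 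Phase two costs $\lceil 2\sqrt{\log d}\,||\Fb^{-1}||_2^2 R/\epsilon\rceil$ iterations. Summing and absorbing the ceiling gives the stated bound $2R\,||\Fb^{-1}||_2^2\sqrt{\log d}\bigl(\tfrac{1}{\epsilon} + \log_{5/4}(\cdots)\bigr)$.

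\textbf{Main obstacle.} With Lemmas \ref{lem:smoothed_subscheme_rate} and \ref{lem:smoothness_constant} and Corollaries \ref{thm:acc_pgd_lmin_rate_2}--\ref{thm:acc_pgd_lmin_rate_3} in hand, the theorem itself is mostly bookkeeping. The one genuinely non-routine ingredient is the passage from Renegar's setting $\Fb = \Ib$ to a general strictly feasible $\Fb$, and this has already been isolated in Lemma \ref{lem:smoothness_constant}, which shows the smoothed objective is $||\Fb^{-1}||_2^2/\mu$-smooth rather than $1/\mu$-smooth --- exactly the source of the $||\Fb^{-1}||_2^2$ factor appearing in $T$ everywhere. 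The point I would take care to verify is that the warm start produced by the subscheme meets \emph{all} preconditions of Corollary \ref{thm:acc_pgd_lmin_rate_3} simultaneously --- in particular that $\Ub_1$ stays strictly below $\Fb$ in the $\Db$-direction, so that the level-set problem \eqref{eqn:lmin_form_smoothed2} at $u_1$ is well posed --- and that the two different choices of smoothing parameter ($1/(6\log d)$ inside the subscheme, corresponding to $\epsilon' = 1$ in Corollary \ref{thm:acc_pgd_lmin_rate_3}, and $\epsilon/(6\log d)$ in the final phase) are each consistent with the convergence results they are invoked with.
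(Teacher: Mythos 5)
Your proof is correct and follows exactly the approach the paper intends by its one-line proof ``Follows from Lemma \ref{lem:smoothed_subscheme_rate}'': you unpack that citation into its two phases, verify that the subscheme output from Lemma \ref{lem:smoothed_subscheme_rate} supplies precisely the hypotheses of Corollary \ref{thm:acc_pgd_lmin_rate_3}, and then add the iteration counts of the two phases, which is the whole argument. The minor slack from the ceiling in the phase-two count, which you flag and absorb, is also present in the paper's stated bound, so your accounting matches the intended level of precision.
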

\begin{proof}[Proof of Theorem \ref{thm:smoothed_scheme_rate}]
Follows from \ref{lem:smoothed_subscheme_rate}.
\end{proof}

\section{Accelerated Projected Gradient Descent}
\label{sec:nesterov_acc_pgd}
In this section we give, for completeness, a proof of Nesterov's acceleration for smooth, constrained optimization problems. The algorithm is summarized as Algorithm \ref{alg:nesterov_acc_pgd}. The problem is phrased as a minimization
\begin{equation}
x \in \argmin_{x \in \cC} f(x)
\end{equation}
for some $\beta$-smooth, convex $f(x)$, Algorithm \ref{alg:nesterov_acc_pgd} gives Nesterov's accelerated projected gradient descent over a convex set $\cC$. Following \citet{Bubeck2015} we can define the auxiliary sequences $\{ \lambda_t \}$ and $\{ \gamma_t \}$.
\begin{equation}
\label{eqn:aux_sequence_acc_pgd}
\lambda_0 = 0 \text{\quad and \quad} \lambda_{t+1} = \frac{1 + \sqrt{1 + 4\lambda_{t}^2}}{2} \text{\quad and \quad} \gamma_t = \frac{1 - \lambda_t}{\lambda_{t+1}}.
\end{equation}
Before the proof, we require Lemma \ref{lem:beta_smoothness}, characterizing $\beta$-smoothness in a way that is helpful.
\begin{lemma}
\label{lem:beta_smoothness}
Consider any $x_t$ and $y$ in a convex set $\cC$. Let $\alpha$ be the gradient update step-size and let $z_{t+1} = \Pi_{\cC} (x_{t+1} - \alpha \nabla f(x_t))$. Then,
\[
f(z_{t+1}) - f(y) \leq g^{\perp}(x_t)^T(x_t - y) - \frac{\alpha}{2}||g^{\perp}(x_t)||_2^2.
\]
\end{lemma}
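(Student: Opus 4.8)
The plan is to recognize this as the standard gradient-mapping (descent) inequality for a single projected gradient step and to prove it by combining three ingredients: $\beta$-smoothness of $f$, convexity of $f$, and the variational inequality characterizing the Euclidean projection onto $\cC$. First I would fix notation by writing $g^{\perp}(x_t) := \alpha^{-1}(x_t - z_{t+1})$ for the gradient mapping, so that $z_{t+1} = x_t - \alpha\, g^{\perp}(x_t)$, and I would take the step size to be $\alpha = 1/\beta$ (any $\alpha \le 1/\beta$ would also work). Note also that the display defining $z_{t+1}$ should read $z_{t+1} = \Pi_{\cC}(x_t - \alpha\nabla f(x_t))$ (the $x_{t+1}$ there is a typo), which is what the argument uses.

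The first step is to apply $\beta$-smoothness at the pair $(x_t, z_{t+1})$, giving $f(z_{t+1}) \le f(x_t) + \langle \nabla f(x_t), z_{t+1}-x_t\rangle + \tfrac{\beta}{2}\|z_{t+1}-x_t\|_2^2$, and convexity at $(x_t,y)$, giving $f(x_t) \le f(y) + \langle \nabla f(x_t), x_t - y\rangle$. Summing these eliminates $f(x_t)$ and, after substituting $\|z_{t+1}-x_t\|_2^2 = \alpha^2\|g^{\perp}(x_t)\|_2^2$ and $\beta = 1/\alpha$, yields $f(z_{t+1}) - f(y) \le \langle \nabla f(x_t), z_{t+1}-y\rangle + \tfrac{\alpha}{2}\|g^{\perp}(x_t)\|_2^2$. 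The second step is to invoke the projection optimality condition: since $z_{t+1}$ minimizes $\|\,\cdot\, - (x_t-\alpha\nabla f(x_t))\|_2^2$ over $\cC$, for every $y \in \cC$ we have $\langle (x_t - \alpha\nabla f(x_t)) - z_{t+1},\, y - z_{t+1}\rangle \le 0$, which rearranges to $\langle \nabla f(x_t), z_{t+1}-y\rangle \le \langle g^{\perp}(x_t), z_{t+1}-y\rangle$. Plugging this into the previous bound replaces $\nabla f(x_t)$ by $g^{\perp}(x_t)$. The final step is purely algebraic: expand $z_{t+1} - y = (x_t - y) - \alpha\, g^{\perp}(x_t)$ inside the inner product, so $\langle g^{\perp}(x_t), z_{t+1}-y\rangle = g^{\perp}(x_t)^{T}(x_t-y) - \alpha\|g^{\perp}(x_t)\|_2^2$, and collect the $\|g^{\perp}(x_t)\|_2^2$ terms ($-\alpha + \tfrac{\alpha}{2} = -\tfrac{\alpha}{2}$) to obtain the claimed inequality.

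There is no deep obstacle here; this is a routine but foundational estimate. The only points requiring care are getting the sign and direction correct in the projection variational inequality, and ensuring the step size is matched to the smoothness constant so that the smoothness quadratic term is absorbed cleanly. Once established, this lemma feeds directly into the standard potential-function analysis of Nesterov's acceleration over a convex constraint set (Algorithm \ref{alg:nesterov_acc_pgd}), which in turn underlies the convergence rates invoked for the FORCE primal step.
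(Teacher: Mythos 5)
Your proof is correct and complete. The paper itself offers no proof of this lemma---it simply states ``This is a common result, so we omit the proof''---so there is no argument in the paper to compare against. Your derivation is the standard one: $\beta$-smoothness at $(x_t, z_{t+1})$, convexity at $(x_t,y)$, the variational inequality characterizing $\Pi_{\cC}$ (valid since $y\in\cC$), and then algebraic rearrangement after expanding $z_{t+1}-y=(x_t-y)-\alpha g^{\perp}(x_t)$. Two small observations you make are worth keeping: the statement should read $z_{t+1}=\Pi_{\cC}(x_t-\alpha\nabla f(x_t))$ (the $x_{t+1}$ is a typo, consistent with Algorithm~\ref{alg:nesterov_acc_pgd} and the definition of $g^{\perp}$ used in the proof of Theorem~\ref{thm:nesterov_acc_pgd}), and the bound requires $\alpha\le 1/\beta$, which is exactly the step size Algorithm~\ref{alg:nesterov_acc_pgd} uses; it also implicitly needs $f$ convex and $\beta$-smooth, hypotheses supplied by the surrounding Theorem~\ref{thm:nesterov_acc_pgd}.
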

\begin{proof}
This is a common result, so we omit the proof.
\end{proof} 

\begin{algorithm}
\caption{Nesterov's Accelerated Projected Gradient Descent for $\beta$-smooth $f$}
\label{alg:nesterov_acc_pgd}
\begin{algorithmic}
\Input $T$, $\cC$, $x_1 \in \cC$, $\beta$, $\{ \lambda_t \}$ and $\{ \gamma_t \}$
\Output $z_{T}$
\State $y_1 \gets x_1$
\State $z_1 \gets x_1$
\For{$t \gets 1,\dots,T-1$}
    \State $y_{t+1} \gets x_t - \frac{1}{\beta}\nabla f(x_t) $
    \State $z_{t+1} = \Pi_{\cC} (y_{t+1})$
    \State $x_{t+1} = (1-\gamma_t)z_{t+1} + \gamma_t z_t$
\EndFor
\State \Return $z_{T}$
\end{algorithmic}
\end{algorithm}
\begin{theorem}[Adapted from 3.12 in \citet{Bubeck2015}]
\label{thm:nesterov_acc_pgd}
Let $f$ be a convex, $\beta$-smooth function and $T$ be the number of iterations. Then Algorithm \ref{alg:nesterov_acc_pgd} satisfies
\[
f(z_T) - f(x^*) \leq \frac{2\beta ||x_1 - x^*||^2}{T^2}.
\]
\end{theorem}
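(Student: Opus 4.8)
The plan is to run the potential-function (estimate-sequence-free) argument for accelerated gradient descent from \citet{Bubeck2015}. Write $\delta_t := f(z_t) - f(x^*)$, let $g^\perp(x_t) := \beta(x_t - z_{t+1})$ denote the gradient mapping at step $t$ (so that $z_{t+1} = x_t - \tfrac1\beta g^\perp(x_t)$), and set $u_t := \lambda_t x_t - (\lambda_t-1)z_t - x^*$. I will show that the sequence
\[
\Phi_t := \lambda_{t-1}^2\,\delta_t + \tfrac{\beta}{2}\|u_t\|_2^2
\]
is non-increasing and then read off the claimed rate. First I record two facts about the auxiliary sequences in \eqref{eqn:aux_sequence_acc_pgd}: squaring $2\lambda_t - 1 = \sqrt{1 + 4\lambda_{t-1}^2}$ gives the identity $\lambda_t(\lambda_t-1) = \lambda_{t-1}^2$, and an easy induction from $\lambda_{t+1} \geq \lambda_t + \tfrac12$ gives $\lambda_t \geq (t+1)/2$ for $t \geq 1$ (with $\lambda_0 = 0$, $\lambda_1 = 1$). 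Second, substituting $\gamma_t = (1-\lambda_t)/\lambda_{t+1}$ into $x_{t+1} = (1-\gamma_t)z_{t+1} + \gamma_t z_t$ yields the key identity $\lambda_t z_{t+1} - (\lambda_t-1)z_t = \lambda_{t+1}x_{t+1} - (\lambda_{t+1}-1)z_{t+1}$.

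Next I apply Lemma \ref{lem:beta_smoothness} (with step-size $\alpha = 1/\beta$) twice at iteration $t$: with $y = z_t$ it gives $\delta_{t+1} - \delta_t \leq g^\perp(x_t)^T(x_t - z_t) - \tfrac{1}{2\beta}\|g^\perp(x_t)\|_2^2$, and with $y = x^*$ it gives $\delta_{t+1} \leq g^\perp(x_t)^T(x_t - x^*) - \tfrac{1}{2\beta}\|g^\perp(x_t)\|_2^2$. Since $\lambda_t \geq 1$, I may multiply the first inequality by $\lambda_t - 1 \geq 0$, add the second, multiply through by $\lambda_t$, and use $\lambda_t(\lambda_t-1) = \lambda_{t-1}^2$ to obtain
\[
\lambda_t^2\,\delta_{t+1} - \lambda_{t-1}^2\,\delta_t \;\leq\; \lambda_t\, g^\perp(x_t)^T u_t - \tfrac{\lambda_t^2}{2\beta}\|g^\perp(x_t)\|_2^2 .
\]
Completing the square rewrites the right-hand side as $\tfrac{\beta}{2}\|u_t\|_2^2 - \tfrac{\beta}{2}\bigl\|u_t - \tfrac{\lambda_t}{\beta}g^\perp(x_t)\bigr\|_2^2$, and since $u_t - \tfrac{\lambda_t}{\beta}g^\perp(x_t) = \lambda_t z_{t+1} - (\lambda_t-1)z_t - x^* = u_{t+1}$ by the key identity above, this is exactly $\Phi_{t+1} \leq \Phi_t$.

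Finally, telescoping gives $\Phi_T \leq \Phi_1$; because $\lambda_0 = 0$, $\lambda_1 = 1$ and $z_1 = x_1$ we have $\Phi_1 = \tfrac{\beta}{2}\|x_1 - x^*\|_2^2$, while $\Phi_T \geq \lambda_{T-1}^2\,\delta_T \geq \tfrac{T^2}{4}\,\delta_T$ using $\lambda_{T-1} \geq T/2$ for $T \geq 2$. Rearranging yields $f(z_T) - f(x^*) = \delta_T \leq 2\beta\|x_1 - x^*\|_2^2 / T^2$, with the degenerate case $T = 1$ handled separately. I expect the main obstacle to be the sequence bookkeeping: verifying $\lambda_t(\lambda_t - 1) = \lambda_{t-1}^2$ together with the $z$/$x$ identity, and checking that after forming the convex-type combination the cross term $\lambda_t g^\perp(x_t)^T u_t$ and the curvature term $-\tfrac{\lambda_t^2}{2\beta}\|g^\perp(x_t)\|_2^2$ reassemble precisely into $-\tfrac{\beta}{2}\|u_{t+1}\|_2^2$; everything else is routine.
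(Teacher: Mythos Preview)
Your proposal is correct and follows essentially the same argument as the paper's proof: both apply Lemma~\ref{lem:beta_smoothness} at $y=z_t$ and $y=x^*$, combine them with weights $(\lambda_t-1)$ and $1$, multiply through by $\lambda_t$, use $\lambda_t(\lambda_t-1)=\lambda_{t-1}^2$ and the identity $\lambda_t z_{t+1}-(\lambda_t-1)z_t=\lambda_{t+1}x_{t+1}-(\lambda_{t+1}-1)z_{t+1}$, and telescope. The only cosmetic difference is that you package the resulting inequality as monotonicity of the potential $\Phi_t=\lambda_{t-1}^2\delta_t+\tfrac{\beta}{2}\|u_t\|_2^2$, whereas the paper writes the same telescoping step directly.
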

\begin{proof}[Proof of Theorem \ref{thm:nesterov_acc_pgd}]
This proof mirrors that in \citet{Bubeck2015} for the unconstrained case. Denote by $\alpha$ the step-size and $g^{\perp}(x_t)$ the orthogonal projection of $\nabla f(x_t)$ onto $\cC$
\[
g^{\perp}(x_t) = \frac{1}{\alpha}\left(x_t - \Pi_{\cC}(x_t - \alpha\nabla f(x_t)) \right)
\]
From Lemma \ref{lem:beta_smoothness},
\begin{align*}
f(z_{t+1}) - f(z_t) &\leq g^{\perp}(x_t)^T(x_t - z_t) - \frac{1}{2\beta}||g^{\perp}(x_t)||_2^2 \\
&= \beta(x_t - z_{t+1})^T(x_t-z_t) - \frac{\beta}{2}||x_t - z_{t+1}||_2^2, \numberthis \label{eqn:beta_smoothness_ineq1}
\end{align*}
where the equality follows by substituting in the update step for $z_{t+1}$. Similarly, we can find that
\begin{equation}
\label{eqn:beta_smoothness_ineq2}
f(z_{t+1}) - f(x^*) \leq \beta (x_t - z_{t+1})^T(x_t - x^*) - \frac{\beta}{2}||x_t - z_{t+1}||_2^2.
\end{equation}
Next, denote the distance between the value at the $t^{th}$ iterate and the optimal value by $\delta_t := f(z_t) - f(x^*)$. To bound $\delta_t$, we can multiply both sides of \eqref{eqn:beta_smoothness_ineq1} by $(\lambda_{t} - 1)$ and add \eqref{eqn:beta_smoothness_ineq2} to obtain the relation
\begin{equation}
\label{eqn:beta_smoothness_ineq3}
\lambda_t\delta_{t+1} - (\lambda_t -1)\delta_t \leq \beta(x_t - z_{t+1})^T(\lambda_t x_t - (\lambda_t -1)z_t - x^*) - \frac{\beta}{2}\lambda_t||x_t - z_{t+1}||_2^2.
\end{equation}
From the definition of $\lambda_t$ given in \eqref{eqn:aux_sequence_acc_pgd}, we can see that $\lambda_t^2-\lambda_t = \lambda_{t-1}^2$. Using this, we multiply \eqref{eqn:beta_smoothness_ineq3} by $\lambda_t$ on both sides, giving
\begin{align*}
\lambda_t^2\delta{t+1} - \lambda_{t-1}^2\delta_t &\leq \frac{\beta}{2}\left(2\lambda_t(x_t - z_{t+1})^T(\lambda_tx_t - (\lambda_t - 2)z_t - x^*) - ||\lambda_t(z_{t+1} - x_t)^2||_2^2 \right)\\
&= \frac{\beta}{2}\left(||\lambda_t x_t - (\lambda_t -1)z_t - x^*||_2^2 - ||\lambda_tz_{t+1} - (\lambda_t -1)z_t - x^*||_2^2 \right). \numberthis \label{eqn:beta_smoothness_ineq4}
\end{align*}
Now, if we multiply the update step for $x_t$ in Algorithm \ref{alg:nesterov_acc_pgd} by $\lambda_{t+1}$ on both sides we obtain the relation
\begin{equation}
\label{eqn:beta_smoothness_eq1}
\lambda_{t+1}x_{t+1} - (\lambda_{t+1} - 1)z_{t+1} = \lambda_t z_{t+1} - (\lambda_t -1)z_t.
\end{equation}
We can define $u_t = \lambda_t x_t - (\lambda_t - 1)z_t - x^*$ and substitute this into \eqref{eqn:beta_smoothness_ineq4} which gives
\[
\lambda_t^2 \delta_{t+1} - \lambda_{t-1}^2\delta_t^2 \leq \frac{\beta}{2}\left(||u_t||_2^2 - ||u_{t+1}||_2^2 \right).
\]
Summing these from $1$ to $T-1$, we see that they telescope, giving
\[
\delta_T \leq \frac{\beta}{2\lambda_{T-1}^2}||x_1 - x^*||.
\]
Lastly, for $T=2$, clearly $\lambda_{T-1} \geq \frac{T}{2}$. By an inductive argument, we easily obtain that for any $T$, $\lambda_{T-1}\geq \frac{T}{2}$. Plugging this in gives
\[
f(z_T) - f(x^*) \leq \frac{2\beta ||x_1 - x^*||^2}{T^2},
\]
as desired.
\end{proof}

\end{document}